\newenvironment{keywords}
{\bgroup\leftskip 20pt\rightskip 20pt \small\noindent{\bf Keywords:} }%
{\par\egroup\vskip 0.25ex}
\long\def\acks#1{\vskip 0.3in\noindent{\large\bf Acknowledgments and Disclosure of Funding}\vskip 0.2in
\noindent #1}
\newtheorem{assump}{Assumption}
\newtheorem{rmk}{Remark}
\newtheorem{thm}{Theorem}
\newtheorem{cor}{Corollary}
\newtheorem{defi}{Definition}
\newtheorem{lem}{Lemma}
\newtheorem{prop}{Proposition}
\newtheorem{theoremstar}{Theorem}
\title{\textbf{On Multi-Stage Loss Dynamics in Neural Networks: Mechanisms of Plateau and Descent Stages}}
\author{Zheng-An Chen\textsuperscript{\rm 1} , Tao Luo\textsuperscript{\rm 1,2}\footnote{corresponding author, luotao41@sjtu.edu.cn},  
Guihong Wang\textsuperscript{\rm 1}
 \footnote{in alphabetic order}\\
\textsuperscript{\rm 1} School of Mathematical Sciences, Shanghai Jiao Tong University \\
\textsuperscript{\rm 2}Institute of Natural Sciences, MOE-LSC, Shanghai Jiao Tong University; \\ CMA-Shanghai, Shanghai Artificial Intelligence Laboratory\\}
\date{}
\begin{document}

\maketitle

\begin{abstract}
The multi-stage phenomenon in the training loss curves of neural networks has been widely observed, reflecting the non-linearity and complexity inherent in the training process. In this work, we investigate the training dynamics of neural networks (NNs), with particular emphasis on the small initialization regime, identifying three distinct stages observed in the loss curve during training: the initial plateau stage, the initial descent stage, and the secondary plateau stage. Through rigorous analysis, we reveal the underlying challenges contributing to slow training during the plateau stages. While the proof and estimate for the emergence of the initial plateau were established in our previous work, the behaviors of the initial descent and secondary plateau stages had not been explored before. Here, we provide a more detailed proof for the initial plateau, followed by a comprehensive analysis of the initial descent stage dynamics. Furthermore, we examine the factors facilitating the network's ability to overcome the prolonged secondary plateau, supported by both experimental evidence and heuristic reasoning. Finally, to clarify the link between global training trends and local parameter adjustments, we use the Wasserstein distance to track the fine-scale evolution of weight amplitude distribution.
\end{abstract}

\begin{keywords}
   two-layer neural networks, multi-stage analysis, loss curve, small initialization, plateau and descent
\end{keywords}

\section{Introduction}


In recent years, deep learning has achieved remarkable advancements across various fields \cite{Vaswani2017,dosovitskiy2020image,rajpurkar2018deep,sezer2020financial}. Research on the training process is crucial for understanding and optimizing neural networks (NNs). Unlike traditional machine learning, neural networks exhibit a high-dimensional, complex loss landscape, complicating the optimization process. Researchers have long observed a multi-stage phenomenon in the loss curve, where the loss can plateau and then abruptly decrease without obvious cause \cite{park2000adaptive,fukumizu2000local,saad1995line}, reflecting the complexity and variability of training dynamics. Understanding the mechanisms behind plateau and descent stages is essential for designing more effective parameter optimization strategies in neural networks.

The choice of parameter initialization is of vital importance on both the training dynamics and characteristics of the loss curve. For large initialization, the dynamics of neural networks can be approximately regarded as a kernel regression, known as the neural tangent kernel (NTK) \cite{jacot2018neural,huang2020dynamics,arora2019exact}, where parameters remain close to their initialization throughout training and loss exponentially decreases. Situated between large and small initialization, the gradient flow (GF) of neural networks can be captured by nonlinear partial differential equations (PDEs) from a mean field perspective \cite{mei2018mean,sirignano2020mean,rotskoff2018parameters,chizat2018global}, offering insights into how networks transition between different dynamical regimes. At the other end of the spectrum, small initialization induces a condensation effect, where parameters will   initially aggregate along a few directions during the early stage of training~ \cite{luo2021phase,zhou2022towards,chen2023phase,chen2024three-initial,min2023early}. Subsequently, the dynamics transition through multiple nonlinear stages. This phenomenon is particularly interesting as it reflects a more active exploration of the parameter in the parameter space and is the focus of our series work.

Previous studies on small initialization mainly  focus on the initial stage of training, revealing the condensation phenomenon for multi-layer feedforward neural networks \cite{chen2024three-initial} and convolutional neural networks \cite{zhou2023understanding}. However, these studies largely overlook the characterization of the subsequent stages in the loss curve.

In this work, we conduct a multi-stage analysis of the loss curve and identify three distinct stages in the early training process: the initial plateau stage, the initial descent stage, and the secondary plateau stage, based on their manifestations on the curve. The dynamics governing these stages are driven by different dominant mechanisms, ranging from linear to nonlinear behavior. We provide rigorous theoretical analysis of the dynamics within each stage and use macroscopic quantities to uncover the underlying causes behind both the plateau and descent stages.


\section{Main Results and Related Works}
\subsection{Main results}
In this paper, we consider two-layer NNs, formulated as $f_{\vtheta} (\vx) = \sum_{k=1}^m a_k \sigma(\vw_k^{\T} \vx)$, where $m$ is the width of the neural network and $\vtheta$ including $a_k$'s and $\vw_k$'s are independent and identically distributed (i.i.d.) from normal distributions $N(0,\frac{1}{m^{2\alpha}})$ and $N(0,\frac{1}{m^{2\alpha}}\mI_d)$, respectively. Without loss of generality, we focus on the gradient flow, denoting $T_0=0$ as the initial time of training in this paper. We consider $\alpha>\frac{1}{2}$ which corresponding to the small initialization setting, as mentioned above. Empirically, we have discovered that the early stage of training loss curved can be divided into three distinct stages: the initial plateau stage, the initial descent stage and the secondary plateau stage, which is shown in Fig.~\ref{fig:diff_stage}. The plateau and descent stages are quite common on the loss curve of training \cite{fukumizu2000local}, but need a more comprehensive consideration. In this work, we aim to figure out the dominant dynamics of these stages and provide a rigorous analysis. Before presenting the formal theorems and proofs, we will discuss each of the three stages individually in an intuitive manner (see Theorems~\ref{thm...informal_1}, \ref{thm...informal_2}, \ref{thm...informal_3}, and \ref{thm...informal_4}).

\begin{figure}[t]
    \centering
    \subfigure[]{
    \centering
    \includegraphics[width=0.30\textwidth]{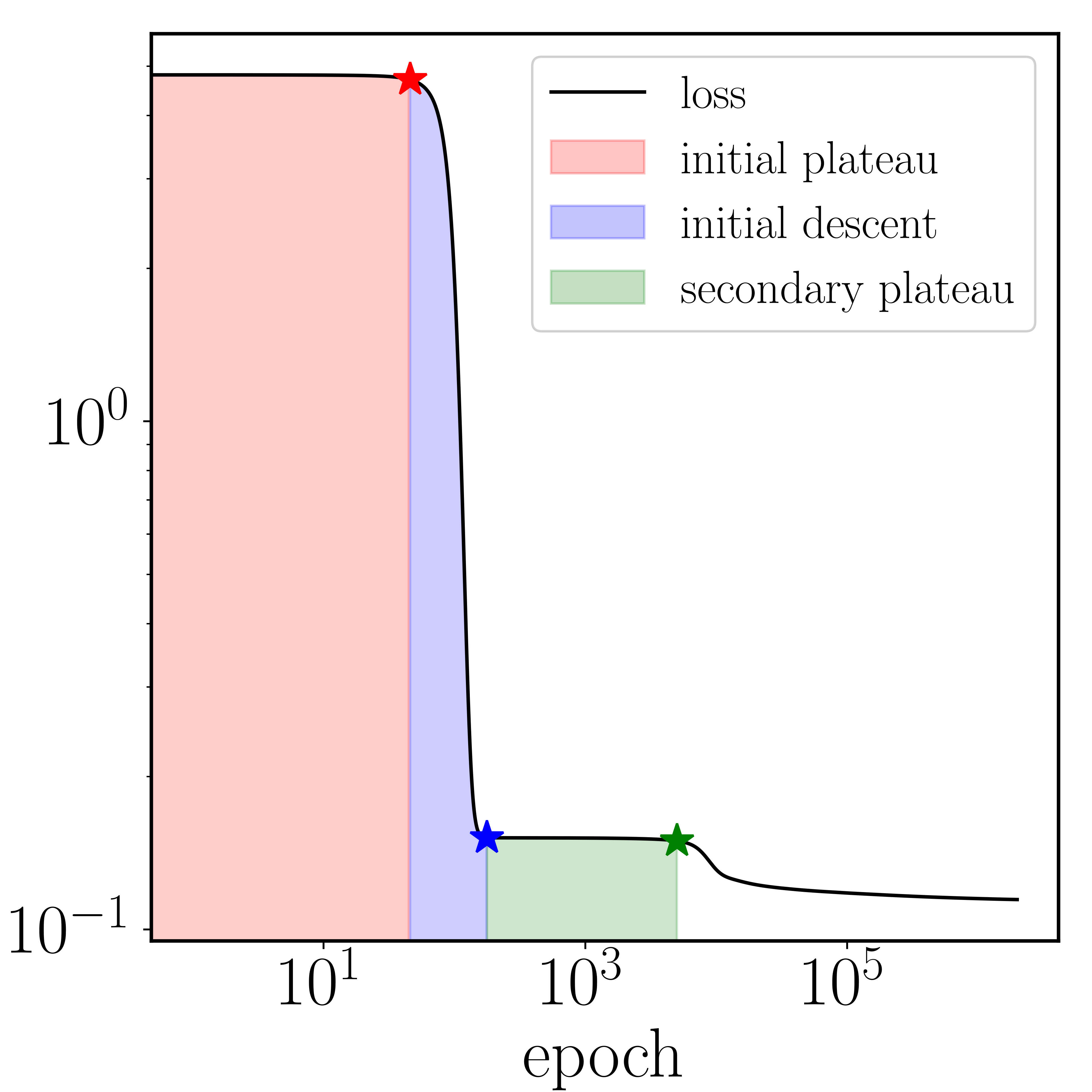}
   }
    \hspace{-5mm}
    \subfigure[]{
    \includegraphics[width=0.30\textwidth]{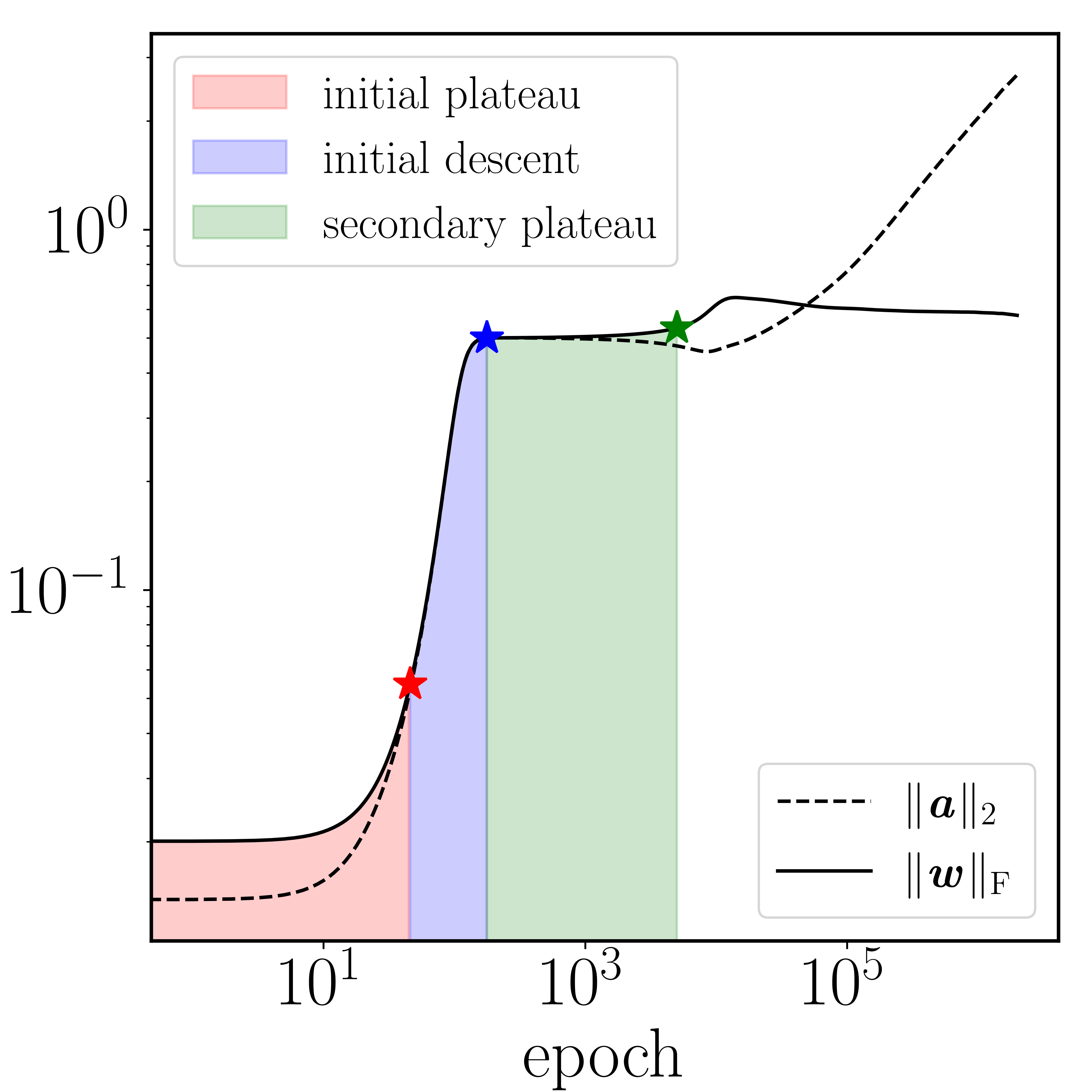}
    }
    \hspace{-5mm}
    \subfigure[]{
    \includegraphics[width=0.30\textwidth]{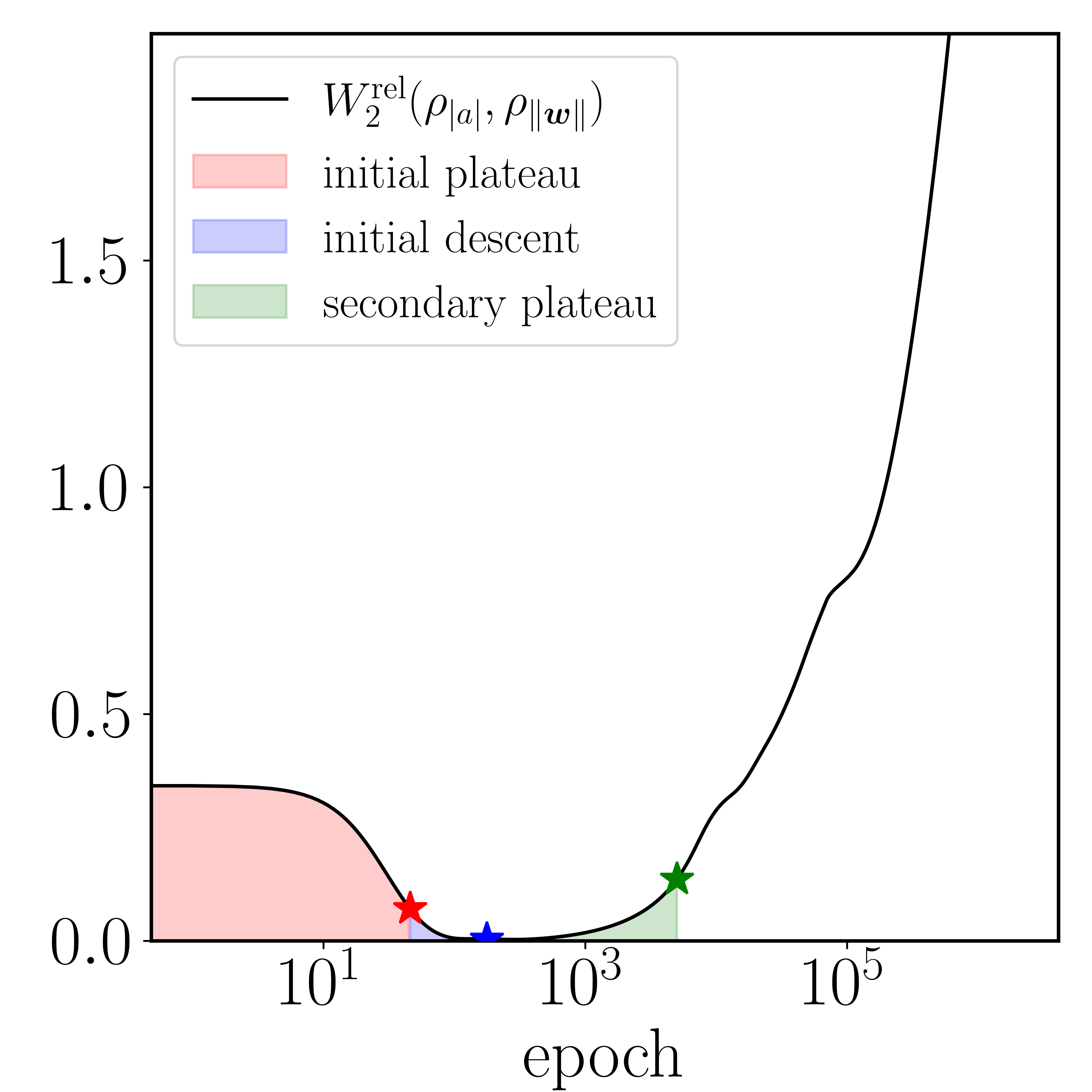}
    }    
    \caption{The behavior of training loss (panel (a)), norms of weights (panel (b)) and relative Wasserstein distance between weights  (panel (c))  that is  $W_2^{\rm{rel}}(\rho_{|a|},\rho_{\norm{\vw}}) =  W_2 (\rho_{|a|},\rho_{\norm{\vw}})/\norm{\rho_{|a|}}_2 $ during training process. The red region is the initial plateau stage, the blue region is the initial descent stage, and the green region is the secondary plateau stage. }
    \label{fig:diff_stage}
\end{figure}

\begin{itemize}[wide, labelwidth=!,labelindent=0pt]
\item \textbf{Initial Plateau Stage}: With small initialization, the loss typically undergoes a plateau stage as shown in Fig.~\ref{fig:diff_stage}(a),  resembling a ``warming up'' period for the parameters. During this period, the parameters behaves more like a two-layer linear network, which has been investigated in  \cite{chen2023phase} and is referred to as the initial condensation. We present a new proof for the behavior of neural networks during this stage, refining previous results. In terms of the loss behavior, we have the following informal theorem.
    \begin{theoremstar}[Informal statement of Theorem \ref{thm::initial_stage}: Initial plateau stage]\label{thm...informal_1}
        For $\alpha>\frac{1}{2}$, with a high probability over the choice of initial parameter $\vtheta(T_0)$, there exists a time $T_{\rm{p}} = \Omega(\log m)
        $ such that the loss decay rate during initial plateau stage is vanishing as $m\to+\infty$, that is, 
        \begin{equation}
            \lim_{m \rightarrow + \infty}         \frac{|R(\vtheta(T_{\rm{p}}))-R(\vtheta(T_0))|}{T_{\rm{p}} - T_0} = 0.
        \end{equation}

    \end{theoremstar}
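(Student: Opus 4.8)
The plan is to reduce the statement to two facts: (i) the ``warming-up'' window -- defined as the first time some weight amplitude leaves a microscopic neighbourhood of the origin -- has length $T_{\rm p}-T_0=\Omega(\log m)$; and (ii) the risk $R$ is non-increasing along the gradient flow, $\frac{\mathrm d}{\mathrm dt}R(\vtheta(t))=-\|\nabla R(\vtheta(t))\|^2\le 0$, and is bounded above, uniformly in $m$, by $R(\vtheta(T_0))$, which converges to $\tfrac12\mathbb{E}[y^2]$ (resp.\ $\tfrac1{2n}\sum_i y_i^2$ in the empirical case) as $m\to\infty$ because $f_{\vtheta(T_0)}\to 0$. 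Granting (i) and (ii), $0\le R(\vtheta(T_0))-R(\vtheta(T_{\rm p}))\le R(\vtheta(T_0))=O(1)$, and dividing by $T_{\rm p}-T_0=\Omega(\log m)$ yields the claimed limit $0$. So the real content is the quantitative lower bound on $T_{\rm p}$.

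First I would fix the scale of the initialization. Gaussian tail bounds for the $a_k(T_0)$, $\chi^2$-concentration for the $\|\vw_k(T_0)\|^2$, and a union bound over $k=1,\dots,m$ give, with probability at least $1-m^{-c}$, both $\max_k\bigl(|a_k(T_0)|+\|\vw_k(T_0)\|\bigr)\le C m^{-\alpha}\sqrt{\log m}$ and $S_2(T_0):=\sum_k\bigl(|a_k(T_0)|+\|\vw_k(T_0)\|\bigr)^2\le C(d)\,m^{1-2\alpha}$; note $S_2(T_0)=o(1)$ with the polynomial margin $m^{-(2\alpha-1)}$ precisely because $\alpha>\tfrac12$. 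In particular $|f_{\vtheta(T_0)}(\vx)|\le \tfrac{L}{4}\|\vx\|\,S_2(T_0)\to 0$, which gives the convergence of $R(\vtheta(T_0))$ used in (ii). This is the event on which everything below runs.

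Next comes a bootstrap argument propagating smallness forward in time; this is exactly the window in which the linearized ``initial condensation'' dynamics of \cite{chen2023phase} are accurate. Write $u_k(t)=|a_k(t)|+\|\vw_k(t)\|$, $S_2(t)=\sum_k u_k(t)^2$, and let $\tau$ be the first time at which either $\max_k u_k(t)$ reaches a fixed small constant $\delta$ or $\sup_{\vx}|f_{\vtheta(t)}(\vx)|$ reaches $1$. On $[T_0,\tau]$, using $\sigma(0)=0$ and the Lipschitz constant $L$ of $\sigma$, one has $|f_{\vtheta(t)}(\vx)|\le \tfrac{L}{4}\|\vx\|\,S_2(t)$, so the residual $f_{\vtheta(t)}(\vx)-y$ is bounded by a data-dependent constant $C_0$; feeding this into $\dot a_k=-\partial_{a_k}R$ and $\dot{\vw}_k=-\partial_{\vw_k}R$, and again using $\sigma(0)=0$ and boundedness of $\sigma'$, yields the coupled differential inequalities $|\dot a_k|\le C_1\|\vw_k\|$ and $\|\dot{\vw}_k\|\le C_1|a_k|$, hence $\dot u_k\le 2C_1 u_k$ and $\dot S_2\le 4C_1 S_2$. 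Grönwall then gives, on $[T_0,\tau]$, $\max_k u_k(t)\le C m^{-\alpha}\sqrt{\log m}\,e^{2C_1(t-T_0)}$ and $S_2(t)\le C(d)\,m^{1-2\alpha}\,e^{4C_1(t-T_0)}$. The first estimate shows $\max_k u_k$ cannot reach $\delta$ before time $T_0+\tfrac{1}{2C_1}\bigl(\alpha\log m-\tfrac12\log\log m-O(1)\bigr)$, and the second, since $\alpha>\tfrac12$, shows $S_2(t)$ -- hence $|f|$ and also $|R(\vtheta(t))-R(\vtheta(T_0))|$ -- stays $o(1)$ up to time $T_0+\tfrac{1}{4C_1}\bigl((2\alpha-1)\log m-O(\log\log m)\bigr)$; thus neither stopping condition defining $\tau$ can trigger before $\Omega(\log m)$ time, the bootstrap closes, and we set $T_{\rm p}:=\tau$ (or any fixed fraction of the smaller of the two times above). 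Incidentally this delivers more than asked: $R$ barely moves on $[T_0,T_{\rm p}]$, not merely that its average rate of change vanishes.

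The main obstacle is closing this bootstrap consistently: the linearized-growth bound on the weights is valid only while the output $f$ (equivalently the residual) is controlled, and $f$ is controlled only while the weights are small, so one must choose the aggregate functional ($S_2$, or the $\ell_1$ analogue $\sum_k u_k$) and the threshold $\delta$ so that the two exponential-in-$t$ estimates stay mutually compatible over an interval of length $\Omega(\log m)$ -- and this is exactly where $\alpha>\tfrac12$ is used, through the margin $m^{-(2\alpha-1)}$ in $S_2(T_0)$ that the factor $e^{4C_1 t}$ cannot erode before $\Theta(\log m)$. A secondary, purely technical point is the behaviour of $\sigma$ near the origin: the bounds ``$|f|,\ |\partial_a R|,\ \|\partial_{\vw}R\|\ \le\ \mathrm{const}\times\text{amplitude}$'' all rely on $\sigma(0)=0$ and one bounded derivative, which is routine for the usual smooth or \textsc{ReLU}-type activations but should be recorded as a hypothesis.
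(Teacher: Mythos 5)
Your proposal is correct for the literal statement, but it takes a coarser route than the paper. The paper linearizes the dynamics around the condensation direction, compares the true trajectory to the explicit solution $\tilde a_k,\tilde w_k^1\sim e^t$ via the deviation $r_{\rm max}$ and a Gronwall/contradiction argument on the stopping time $T_1$, separates the non-condensed directions with the Growth Lemma, and then Taylor-expands $R$ in terms of $K=\sum_k a_kw_k^1$ and $\sum_k(w_k^i)^2$ to get the polynomially small loss change $|R(\vtheta(T_{\rm p}))-R(\vtheta(0))|\lesssim m^{-(2\alpha_1-1)}$; this precision (the exact constant $\tfrac{2\alpha-1}{4}$ in $T_{\rm p}$, the fine estimate $K/K'=\tfrac12+\varepsilon_1$) is what the descent-stage analysis later consumes. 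You instead run a generic Gronwall bootstrap on $u_k=|a_k|+\|\vw_k\|$ and $S_2=\sum_k u_k^2$, which needs neither the symmetry assumption nor the non-degenerate-direction assumption, and you shortcut the loss estimate via monotonicity of $R$ along the gradient flow plus boundedness of $R(\vtheta(T_0))$. What your approach buys is simplicity and generality; what it loses is sharpness: the window length comes out as $\tfrac{2\alpha-1}{4C_1}\log m$ with an unidentified data-dependent $C_1$ rather than the paper's exact rate $1$, and it cannot identify the condensation direction or the value of $K$ at $T_{\rm p}$.

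Two caveats. First, your headline reduction (numerator $O(1)$ by monotonicity and boundedness, denominator $\Omega(\log m)$) proves the stated limit but is essentially vacuous as a characterization of a \emph{plateau}: the identical argument gives a ``vanishing average decay rate'' of order $1/\log m$ for the descent stage $[T_{\rm p},T_{\rm d}]$, where the loss drops by $\Theta(1)$. The content of the formal Theorem~\ref{thm::initial_stage}(ii) is the polynomially small rate $m^{-(2\alpha_1-1)}/\log m$, and your proof only recovers something of that strength through the secondary observation that $S_2(t)\le Cm^{1-2\alpha}e^{4C_1 t}$ stays polynomially small if $T_{\rm p}$ is taken as a fixed fraction of the critical time — so that part of your argument is doing the real work and should be promoted from an aside to the main line. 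Second, Assumption~\ref{assump..TanhActivation} bounds only $\sigma^{(3)}$, so $\sigma$ is not globally Lipschitz (e.g.\ $\sigma(z)=z+z^3$ is admissible); your bounds $|\sigma(z)|\le L|z|$ and $|\sigma'(z)|\le L$ are valid only because $|\vw_k^{\T}\vx|$ stays bounded during the bootstrap window, which you should state explicitly rather than recording global Lipschitzness as a hypothesis.
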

    In Theorem~\ref{thm...informal_1},  $T_{\rm{p}}$ is the duration of the initial plateau stage which is marked as the red star in Fig.~\ref{fig:diff_stage}. This theorem describes the dependence of $T_{\rm{p}}$ on the width $m$ and provides a characterization of the gradually changing loss. That is, the average decay rate of the loss is very small. For a more detailed statement, refer to Theorem~\ref{thm::initial_stage}.
    
    \item \textbf{Initial Descent Stage}: Following the initial plateau stage, we present a theorem that provides insights into the network’s transition into nonlinear behavior. This analysis highlights how the network's structure begins to evolve during this stage, with dynamics approaching an approximate critical point.
    \begin{theoremstar}[Informal statement of Theorem~\ref{thm::descent}: Initial descent stage]\label{thm...informal_2}
        For $\alpha>\frac{1}{2}$, with a high probability over the choice of initial parameters $\vtheta(T_0)$, there exists a time $T_{\rm{d}}(>T_{\rm{p}})$ such that
        \begin{equation} \label{eq...descent_time_informal}
           \lim_{m \rightarrow \infty} \frac{ T_{\rm{d}}}{\log m} = \frac{2\alpha-1}{2},
        \end{equation}
        and the ratio of loss decay rate between initial descent stage and initial plateau stage diverges as $m\to +\infty$, that is,
        \begin{equation}
          \lim_{m \rightarrow + \infty}  \frac{\left|R (\vtheta(T_{\rm{d}}))- R (\vtheta(T_{\rm{p}})) \right|}{\left| R (\vtheta(T_{\rm{p}})) - R (\vtheta(T_0)) \right|} \frac{\left|T_{\rm{p}}-T_0 \right|}{\left|T_{\rm{d}}-T_{\rm{p}}\right|} = + \infty,
        \end{equation}
        where $T_{\rm{p}}$ is defined in Theorem~\ref{thm...informal_1}.
    \end{theoremstar}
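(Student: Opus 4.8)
\emph{Setup and overall strategy.} The plan is to reduce the descent--stage evolution to the linearization of the gradient flow at the origin, a saddle with one unstable direction per neuron, and to locate $T_{\rm d}$ as the instant at which the exponentially growing unstable mode carries $\vtheta$ --- and with it the network output --- from the initialization scale $\Theta(m^{1/2-\alpha})$ up to an $\Theta(1)$ scale, which is exactly when the loss makes its $\Theta(1)$ drop and the trajectory enters the neighborhood of the critical point responsible for the secondary plateau. Computing the Hessian of $R$ at $\vtheta=\bm 0$ (using $f_{\bm 0}\equiv 0$, hence $\nabla f_{\bm 0}=0$) shows $-\nabla^2 R(\bm 0)$ is block--diagonal over the $m$ neurons, each block acting on $(a_k,\vw_k)$ as $\sigma'(0)\left(\begin{smallmatrix}0 & \bm g^{\T}\\ \bm g & 0\end{smallmatrix}\right)$ with $\bm g=\mathbb{E}[f^*(\vx)\vx]$ (an empirical average in the finite--sample case, and assumed nonzero); its spectrum is $\{\pm\sigma'(0)\|\bm g\|\}$ together with $d-1$ zero modes, and throughout we use the unstable eigenvalue $\sigma'(0)\|\bm g\|$, which the paper's normalization takes to be $1$. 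Writing $\vtheta(t)=e^{tL}\vtheta(T_0)+\mathrm{Rem}(t)$ for the linear part $L$, a Grönwall estimate bounds $\mathrm{Rem}(t)$ by $\int_{T_0}^{t}\|\vtheta(s)\|^{3}\,\mathrm ds$; on $[T_0,T_{\rm p}]$ this smallness is the content of the detailed form of Theorem~\ref{thm::initial_stage}, and the same inequality, bootstrapped, keeps $\mathrm{Rem}$ relatively negligible up to the first time $\|\vtheta\|$ reaches a small fixed threshold (indeed each individual $\|\vw_k\|$ stays only $O(m^{-1/2})$ through that time, so the per--neuron linearization $\sigma(\vw_k^{\T}\vx)\approx\sigma'(0)\vw_k^{\T}\vx$ is very accurate).

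\emph{Growth estimate and the value of $T_{\rm d}$.} Two inputs then pin down the growth. First, with high probability over $\vtheta(T_0)$ both $\|\vtheta(T_0)\|$ and the norm of its projection onto the $m$--dimensional unstable eigenspace are $\Theta(m^{1/2-\alpha})$ (concentration of $\chi^{2}$--type sums). Second, each pair aligns with the top eigenvector $\tfrac1{\sqrt2}(1,\bm g/\|\bm g\|)$ at rate $e^{t}$, so $a_k(t)\vw_k(t)\approx\tfrac12 c_k^{2}e^{2t}\,\bm g/\|\bm g\|$ with $c_k$ the signed initial projection --- the squares make all contributions coherent, which is precisely the condensation onto $\pm\bm g/\|\bm g\|$ --- whence $f_{\vtheta(t)}(\vx)\approx\sigma'(0)\big(\sum_k a_k(t)\vw_k(t)\big)^{\T}\vx$ with $\big\|\sum_k a_k(t)\vw_k(t)\big\|=\Theta\!\big(m^{1-2\alpha}e^{2t}\big)$ and $\|\vtheta(t)\|^{2}=\Theta\!\big(m^{1-2\alpha}e^{2t}\big)$ for $t$ past an $O(1)$ transient. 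Define $T_{\rm d}$ to be the first time $R$ has dropped by a fixed amount $\delta_R$ below $R(\vtheta(T_0))$ --- legitimate since $R$ is nonincreasing along the flow and $\delta_R$ can be chosen below the fixed reduction obtained by fitting the linear part of $f^*$. Then $T_{\rm d}$ is the time at which $m^{1-2\alpha}e^{2t}$ becomes $\Theta(1)$, i.e.\ $2t=(2\alpha-1)\log m+O(1)$, so $T_{\rm d}=\tfrac{2\alpha-1}{2}\log m+O(1)$ and \eqref{eq...descent_time_informal} follows; the upper bound on $T_{\rm d}$ uses the high--probability lower bound on $\sum_k c_k^{2}$ so that the descent is not stalled, and the lower bound uses the matching upper estimate on $\|\vtheta(t)\|$.

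\emph{Ratio of decay rates.} Note first that $T_{\rm p}<T_{\rm d}$: the detailed form of Theorem~\ref{thm::initial_stage} gives $T_{\rm p}=\Omega(\log m)$ together with $|R(\vtheta(T_{\rm p}))-R(\vtheta(T_0))|=O(m^{-c})$ for some $c>0$, which is below $\delta_R$ for $m$ large. Since $R$ is monotone, $|R(\vtheta(T_{\rm d}))-R(\vtheta(T_{\rm p}))|=\delta_R-O(m^{-c})=\Theta(1)$, while $0<T_{\rm d}-T_{\rm p}\le T_{\rm d}=O(\log m)$ and $T_{\rm p}-T_0=\Omega(\log m)$. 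Hence the descent decay rate $|R(\vtheta(T_{\rm d}))-R(\vtheta(T_{\rm p}))|/(T_{\rm d}-T_{\rm p})$ is $\Omega(1/\log m)$, the plateau decay rate $|R(\vtheta(T_{\rm p}))-R(\vtheta(T_0))|/(T_{\rm p}-T_0)$ is $O(m^{-c}/\log m)$, and their quotient --- which is exactly the product appearing in the statement --- is $\Omega(m^{c})\to+\infty$.

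\emph{Main obstacle.} I expect the hard part to be the uniform control of the linearization across the window of length $\Theta(\log m)$, and through it the matching upper bound on $T_{\rm d}$: one must show that the cubic remainder together with the non--leading modes distort the exponential rate by a factor only $1+o(1)$, so that $T_{\rm d}/\log m$ converges to the sharp constant $\tfrac{2\alpha-1}{2}$ and not merely to something of order $\log m$. The mechanism is that the cumulative distortion of the exponent is $\int_{T_0}^{T_{\rm d}}O(\|\vtheta(s)\|^{2})\,\mathrm ds=O\!\big(m^{1-2\alpha}e^{2T_{\rm d}}\big)=O(1)$, because $\|\vtheta(s)\|^{2}$ is exponentially concentrated near $s=T_{\rm d}$, while the non--leading (zero and stable) modes --- which do not feed $\sum_k a_k\vw_k$ at leading order and either remain at scale $m^{1/2-\alpha}$ or decay --- contribute only an $O(1)$ transient; turning this telescoping of the error into a rigorous bootstrap on $\|\vtheta\|$ is the crux. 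The remaining ingredients --- the high--probability sizes of the initial components, the alignment/condensation, the identification of the critical point behind the secondary plateau, and the final algebra for $T_{\rm d}$ and the rate ratio --- are comparatively routine.
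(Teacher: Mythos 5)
Your strategy would establish the informal statement, but it takes a genuinely different route from the paper in the descent stage. You stay inside the linearization at the origin for the whole window, let the unstable mode carry $K=\sum_k a_k w_k^1$ from $\Theta(m^{1-2\alpha})$ up to a small constant, and define $T_{\rm d}$ as the first time the (monotone) loss has dropped by a fixed $\delta_R$; since the drop is $\approx K$ while $K$ is small, this pins $T_{\rm d}=\tfrac{2\alpha-1}{2}\log m+O(1)$, and your rate-ratio arithmetic then matches the paper's. The paper instead abandons the linear flow after $T_{\rm p}$ and derives a closed differential inequality $(2-\varepsilon_4)K(1-K)\le \dot K\le(2+\varepsilon_4)K(1-K)$ for the macroscopic quantity $K$ itself, using the approximate conservation law $4K\dot K\approx K'\dot K'$ (Propositions~\ref{prop::conservation_law} and \ref{prop::1/2}) to keep $K/K'\approx\tfrac12$; this logistic comparison is robust through saturation and defines $T_{\rm d}$ by $1-K\le\beta$, i.e.\ arrival at the approximate critical point, which is exactly the input the secondary-plateau analysis (Theorem~\ref{thm::second_plateau}) needs. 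Your version buys a lighter argument for the two displayed limits at the cost of a weaker endpoint (a fixed small loss drop rather than $K\approx 1$), so it would not feed directly into the next stage.

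One caution: the step you flag as ``the crux'' is where essentially all of the paper's work goes, and it is not closed by a single Gr\"onwall bound on $\|\vtheta-e^{tL}\vtheta(T_0)\|$. Because the remainder terms scale like $q_{\rm max}^3+mq_{\rm max}^5$ per neuron, you need uniform neuron-wise ($\ell^\infty$) control over a window of length $\Theta(\log m)$, which the paper obtains by a stopping-time bootstrap (the times $T_1$, $T_2$) together with the Growth Lemma~\ref{lem::growth}, whose point is that the leading term in the evolution of the non-condensed coordinates $w_k^i$, $i\ge 2$, is dissipative; your assertion that the zero modes ``remain at scale $m^{1/2-\alpha}$ or decay'' is precisely the content of that lemma and does not follow from the $\ell^2$ Gr\"onwall estimate alone. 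As a sketch your proposal is coherent, but this bootstrap must actually be executed for the matching upper bound on $T_{\rm d}$ to be rigorous.
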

    In Theorem~\ref{thm...informal_2}, $T_{\rm{d}}$ refers to the milestone marking the end of the initial descent stage, represented by the blue star in Fig.~\ref{fig:diff_stage}. This theorem gives a clear description of $T_{\rm{d}}$ which depends solely on $m$ and $\alpha$, as experimentally verified in Section~\ref{sec...exp_descent_time}. Moreover, the decrease in loss during the initial descent stage is at a higher rate compared to the initial plateau stage, resulting in a significant loss descent. A more rigorous statement is provided as Theorem~\ref{thm::descent}.
    
    \item \textbf{Secondary Plateau Stage}: In this stage, the dynamics remain close to an approximate critical point, lingering in its vicinity for a long period. Consequently, the loss curve exhibits another plateau stage.
    \begin{theoremstar}[Informal statement of Theorem~\ref{thm::second_plateau}: Secondary plateau stage]\label{thm...informal_3}
         For $\alpha>\frac{1}{2}$, with a high probability over the choice of initial parameter $\vtheta(T_0)$, there exists a time $T_{\rm{sp}}(>T_{\rm{d}})$ such that
         \begin{equation}
             \lim_{m \rightarrow + \infty} \frac{T_{\rm{sp}}}{\log m} = + \infty,
         \end{equation}
         and the ratio of loss decay rate between secondary plateau stage and initial descent stage is vanishing as $m\to+\infty$, that is,
         \begin{equation}
           \lim_{m \rightarrow + \infty}  \frac{\left| R (\vtheta(T_{\rm{sp}})) - R (\vtheta(T_{\rm{d}})) \right|}{\left|R (\vtheta(T_{\rm{d}}))- R (\vtheta(T_{\rm{p}})) \right|} \frac{\left|T_{\rm{d}}-T_{\rm{p}}\right|}{\left|T_{\rm{sp}}-T_{\rm{d}} \right|} = 0,
         \end{equation}
         where $T_{\rm{d}}$ and $T_{\rm{p}}$ are defined in Theorems~\ref{thm...informal_1} and \ref{thm...informal_2}, respectively.
    \end{theoremstar}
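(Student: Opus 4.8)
The plan is to anchor everything at the approximate critical point $\vtheta^{*}$ that the trajectory reaches by time $T_{\rm{d}}$: Theorem~\ref{thm::descent} supplies this $\vtheta^{*}$ together with the facts that, with high probability, $\norm{\vtheta(T_{\rm{d}})-\vtheta^{*}}$ and $\norm{\nabla R(\vtheta^{*})}$ are both polynomially small in $m$, the ``active'' neurons having essentially absorbed the part of the target visible to the linearized network while the remaining neurons are still near the initialization amplitude $m^{-\alpha}$. The statement then reduces to one quantitative \emph{trapping estimate}: there is a window $[T_{\rm{d}},T_{\rm{d}}+\Lambda_m]$ with $\Lambda_m=\omega(\log m)$ on which $\sup_{t}\norm{\nabla R(\vtheta(t))}\le\varepsilon_m$ with $\varepsilon_m$ polynomially small in $m$. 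Granting this, along the flow $R(\vtheta(t))$ is nonincreasing with $-\tfrac{d}{dt}R(\vtheta(t))=\norm{\nabla R(\vtheta(t))}^{2}$, so with $T_{\rm{sp}}:=T_{\rm{d}}+\Lambda_m$ one has $|R(\vtheta(T_{\rm{sp}}))-R(\vtheta(T_{\rm{d}}))|\le\varepsilon_m^{2}\Lambda_m$, whence, using $|T_{\rm{d}}-T_{\rm{p}}|=O(\log m)$ from Theorems~\ref{thm::initial_stage}--\ref{thm::descent},
\begin{equation}
  \frac{\bigl|R(\vtheta(T_{\rm{sp}}))-R(\vtheta(T_{\rm{d}}))\bigr|}{\bigl|R(\vtheta(T_{\rm{d}}))-R(\vtheta(T_{\rm{p}}))\bigr|}\,
  \frac{\bigl|T_{\rm{d}}-T_{\rm{p}}\bigr|}{\bigl|T_{\rm{sp}}-T_{\rm{d}}\bigr|}
  \;\le\;\frac{\varepsilon_m^{2}\,\bigl|T_{\rm{d}}-T_{\rm{p}}\bigr|}{\bigl|R(\vtheta(T_{\rm{d}}))-R(\vtheta(T_{\rm{p}}))\bigr|}\;\longrightarrow\;0 ,
\end{equation}
since $\varepsilon_m^{2}\log m\to0$ and the denominator is bounded below (by a positive constant, the initial descent being macroscopic, or at the very least by an inverse power of $m$), while $T_{\rm{sp}}/\log m=(T_{\rm{d}}+\Lambda_m)/\log m\to+\infty$ because $T_{\rm{d}}=\Theta(\log m)$.

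Hence all the work is in the trapping estimate, and the point is that $\vtheta^{*}$ is \emph{flat precisely in the directions into which the flow must move}. At the origin the linearized dynamics carry an $\Omega(1)$ positive eigenvalue — this is why the escape from the first plateau takes only $\Theta(\log m)$ (Theorem~\ref{thm::initial_stage}) — whereas at $\vtheta^{*}$ the condensed directions have already taken up the linear part of the target, so the residual there is, up to the accuracy with which the descent stage has converged, orthogonal to the linear-in-input features. A still-dormant neuron has $\norm{\vw_k}\asymp m^{-\alpha}$, so its activation $\sigma(\vw_k^{\T}\vx)$ is, to leading order, linear in $\vw_k$, and this orthogonality (nearly) annihilates the first-order driving force on $(a_k,\vw_k)$; the leading surviving force comes from the quadratic term of $\sigma$ and is therefore of \emph{cubic} order in $(a_k,\vw_k)$. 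Writing $u$ for the common amplitude scale of a dormant neuron, this leads to a scalar comparison inequality $\dot u\le Cu^{2}+\delta_m$ (more generally $\dot u\le Cu^{p}+\delta_m$, $p\ge2$, or $\dot u\le\mu_m u+\delta_m$ with $\mu_m\to0$) in place of $\dot u\asymp u$, with $u(T_{\rm{d}})\asymp m^{-\alpha}$ and $\delta_m$ an error that is summably small over the window. Integrating, $u$ needs time of order $m^{\alpha}$ merely to climb to an intermediate scale $m^{-\gamma}$ with $\tfrac12<\gamma<\alpha$ (such $\gamma$ exist because $\alpha>\tfrac12$), so $\Lambda_m\asymp m^{\alpha}=\omega(\log m)$; and throughout that window $\norm{\nabla R}\lesssim m\,u^{2}\le m^{1-2\gamma}$, which is polynomially small and serves as $\varepsilon_m$.

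To make this rigorous I would write $\vtheta=\vtheta^{*}+(\text{perturbation})$, split the perturbation into its active and dormant blocks, and run a continuation (bootstrap) argument: as long as the dormant block stays below $m^{-\gamma}$, the active block is drawn back toward its equilibrium — a linearly stable equilibrium of the reduced dynamics, so it supplies no unstable direction — and the residual changes by only $o(1)$, which keeps the comparison inequality valid and $\norm{\nabla R}$ polynomially small; integrating the inequality then shows the dormant block cannot reach $m^{-\gamma}$ before time $\asymp m^{\alpha}$, closing the bootstrap with the $\Lambda_m$ and $\varepsilon_m$ required above. The high-probability qualifier is the intersection of the good events of Theorems~\ref{thm::initial_stage} and \ref{thm::descent} with a standard Gaussian-concentration event fixing $|a_k|$ and $\norm{\vw_k}$ of the dormant neurons at scale $m^{-\alpha}$, again of probability $1-o(1)$.

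The main obstacle will be the trapping estimate itself. First, the cancellation of the first-order force holds only up to the error with which the descent stage has converged, so one must verify that this error does not covertly reinstate an $\Omega(1)$ — or even merely an $\omega(1/\log m)$ — unstable direction at $\vtheta^{*}$, i.e.\ that the worst surviving mode really is the slow one. Second, and more delicately, one must control the aggregate of the $\Theta(m)$ simultaneously growing dormant neurons: each contributes only $O(u^{2})$ to the output, but $\sum_k a_k\sigma(\vw_k^{\T}\vx)$ must be shown to remain $o(1)$ over the entire super-logarithmically long window, for otherwise the residual — and with it the cubic driving coefficient and the bound on $\norm{\nabla R}$ — could change order of magnitude. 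Establishing the comparison inequality with an error $\delta_m$ that is genuinely $o(u^{2})$, uniformly over $[T_{\rm{d}},T_{\rm{sp}}]$, is the technical heart of the argument; passing from it to the two displayed limits is routine.
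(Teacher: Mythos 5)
Your high-level reduction (exhibit a long window after $T_{\rm{d}}$ on which the loss barely moves, then compare with the $\Theta(1)$ drop and $\Theta(\log m)$ duration of the descent stage) matches the shape of the paper's argument, but the trapping estimate you build it on has two genuine problems. First, the claim that $\sup_t\norm{\nabla R(\vtheta(t))}\le\varepsilon_m$ with $\varepsilon_m$ \emph{polynomially small in $m$} is not available and is in fact false in this model: $T_{\rm{d}}$ is only a $\beta$\emph{-relative} critical point. At $T_{\rm{d}}$ one has $1-K\approx\beta$ and $K'\approx 2$ with every $|a_k|,|w_k^1|\asymp m^{-1/2}$, so the gradient components are $\approx(1-K)w_k^1$ and $\norm{\nabla R}_2^2\approx(1-K)^2K'=\Theta(\beta^2)$ — small in $\beta$, not in $m$. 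Your integrated bound then gives $|R(T_{\rm{sp}})-R(T_{\rm{d}})|\le\varepsilon_m^2\Lambda_m=O(\beta^2\cdot\Lambda_m)$ and the final ratio becomes $O(\beta^2\log m)$, which diverges for fixed $\beta$; the chain does not close. The paper avoids integrating $\norm{\nabla R}^2$ altogether: it runs a Gronwall/bootstrap showing $|K-1|\le 2\beta$ and $K'\approx 2$ persist throughout $[T_{\rm{d}},T_{\rm{d}}+\frac{1}{40\beta}\log m]$ (using the approximate conservation law $4K\dot K\approx K'\dot K'$ and the $\ell^2$-contraction of the non-condensed directions from Lemma~\ref{lem::growth}), and then reads off $|R(T_{\rm{sp}})-R(T_{\rm{d}})|=O(\beta)$ directly from the expansion of $R$ in terms of $K$ — a bound independent of the window length.

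Second, the mechanism you propose for the trapping — a population of still-dormant neurons at amplitude $m^{-\alpha}$ whose first-order force is annihilated by orthogonality, leaving cubic growth over a time $\asymp m^{\alpha}$ — does not describe this model's post-descent configuration. By $T_{\rm{d}}$ \emph{all} $m$ neurons have condensed along $e_1$ with $|a_k|,|w_k^1|\asymp m^{-1/2}$; there is no dormant block at scale $m^{-\alpha}$, and the non-condensed coordinates $w_k^i$ ($i\ge 2$) are \emph{contracted} at the linear level ($\frac{\D}{\D t}\sum_k(w_k^i)^2\le$ higher order), not growing via a cubic ODE. The slowness of the second plateau comes instead from the factor $(1-K)\le 2\beta$ damping the linear driving term on $(a_k,w_k^1)$, so amplitudes grow like $\E^{2\beta t}$ rather than $\E^{2t}$ and the escape time is lower-bounded by $\frac{1}{40\beta}\log m$ (the paper proves only this lower bound; the actual departure, driven by the cubic $a$--$w$ symmetry breaking, is treated heuristically in Section~\ref{sec...plateau_departure}). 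Your predicted duration $\Lambda_m\asymp m^{\alpha}$ is neither what the paper establishes nor supported by the stated mechanism.
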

    Here $T_{\rm{sp}}$ represents the milestone marking the end point of the secondary plateau stage, indicated by  the green star in Fig.~\ref{fig:diff_stage}. Theorem~\ref{thm...informal_3} illustrates the plateau effect that its loss decay rate is significantly less than that of the initial descent stage, resulting in a slow change in loss.
\end{itemize}

Moreover, we dig into the behavior of parameters and introduce the Wasserstein distance as a novel tool to describe the microscopic evolution of different layer's weight distribution during training. Here we state an informal theorem based on results of Corollaries~\ref{cor::similar_fp} and \ref{cor::descent_distribution} to show that different layers of weights converge to the same distribution during the training.

\begin{theoremstar}(Informal statement of Corollaries~\ref{cor::similar_fp} and \ref{cor::descent_distribution}: Amplitude distribution of weights are similar)\label{thm...informal_4}
    For $\alpha>\frac{1}{2}$, with a high probability over the choice of initial parameter $\vtheta(T_0)$, we have for any $t \in [T_{\rm{p}}, T_{\rm{d}}] $
    \begin{equation}
        \lim_{m \rightarrow +\infty} \frac{W_2 (\rho_{|a|}, \rho_{\norm{\vw}})}{W_2 (\rho_{|a|})} = 0,
    \end{equation}
    where $T_{\rm{d}}$ and $T_{\rm{p}}$ are defined in Theorems~\ref{thm...informal_1} and \ref{thm...informal_2}, respectively.
\end{theoremstar}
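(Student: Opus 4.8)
The plan is to bound the numerator and the denominator of the ratio separately. The numerator $W_2(\rho_{|a|},\rho_{\norm{\vw}})$ can be controlled \emph{uniformly in} $t$ by the layer-balance invariant of the gradient flow, whereas the denominator $W_2(\rho_{|a|})$ — which I read as the root second moment $\big(\tfrac1m\sum_{k=1}^m a_k^2\big)^{1/2}$ of the amplitude distribution — is bounded below on $[T_{\rm{p}},T_{\rm{d}}]$ using the amplification estimates already established in Theorems~\ref{thm::initial_stage} and \ref{thm::descent}.

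\medskip
\noindent\textbf{Step 1 (balance invariant).} For the (positively one-homogeneous) activation considered here one has $\vw_k\cdot\nabla_{\vw_k}R=a_k\,\partial_{a_k}R$ for every $k$, since $z\sigma'(z)=\sigma(z)$; the same identity holds verbatim for the linearized two-layer dynamics that govern the early stages. Consequently $\tfrac{\mathrm{d}}{\mathrm{d}t}\big(a_k^2-\norm{\vw_k}^2\big)=2a_k\dot a_k-2\vw_k\cdot\dot{\vw}_k=0$, so for all $t\ge T_0$,
\begin{equation}
 a_k^2(t)-\norm{\vw_k(t)}^2 \;=\; a_k^2(T_0)-\norm{\vw_k(T_0)}^2 \;=:\; c_k .
\end{equation}
Because $a_k(T_0)\sim N(0,m^{-2\alpha})$ and $\vw_k(T_0)\sim N(0,m^{-2\alpha}\mI_d)$ are independent across $k$, Gaussian and chi-squared tail bounds together with a union bound over the $m$ neurons give, with high probability, $C_m:=\max_{1\le k\le m}|c_k|\le\max_k\big(a_k^2(T_0)+\norm{\vw_k(T_0)}^2\big)=O\!\big(m^{-2\alpha}(d+\log m)\big)$.

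\medskip
\noindent\textbf{Step 2 (numerator).} Coupling $\rho_{|a|}$ and $\rho_{\norm{\vw}}$ through the shared neuron index, i.e. using the transport plan $\tfrac1m\sum_{k}\delta_{(|a_k|,\norm{\vw_k})}$, and then invoking $(u-v)^2\le|u-v|(u+v)=|u^2-v^2|$ for $u,v\ge0$, we obtain
\begin{equation}
 W_2^2\big(\rho_{|a|},\rho_{\norm{\vw}}\big)\le\frac1m\sum_{k=1}^m\big(|a_k|-\norm{\vw_k}\big)^2\le\frac1m\sum_{k=1}^m\big|a_k^2-\norm{\vw_k}^2\big|=\frac1m\sum_{k=1}^m|c_k|\le C_m .
\end{equation}
Hence $W_2(\rho_{|a|},\rho_{\norm{\vw}})=O\!\big(m^{-\alpha}(\log m)^{1/2}\big)$, and this estimate holds at \emph{every} $t\ge T_0$, in particular throughout $[T_{\rm{p}},T_{\rm{d}}]$.

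\medskip
\noindent\textbf{Step 3 (denominator and conclusion).} The analysis of the initial plateau and initial descent stages shows that the condensation mechanism amplifies the amplitudes along the leading direction essentially like $m^{-\alpha}e^{\lambda t}$ with a growth rate $\lambda$ bounded away from $0$, and that a non-vanishing fraction of the $m$ neurons participate; therefore $\tfrac1m\sum_{k=1}^m a_k^2(t)\ge c\,m^{-2\alpha}e^{2\lambda t}$ for $t\in[T_{\rm{p}},T_{\rm{d}}]$ and some $c>0$. Since $t\ge T_{\rm{p}}=\Omega(\log m)$, the right-hand side is at least $m^{-2\alpha+2\delta}$ for a constant $\delta>0$, so $\inf_{t\in[T_{\rm{p}},T_{\rm{d}}]}W_2(\rho_{|a|})(t)\ge m^{-\alpha+\delta}$ (in fact the condensing amplitudes are of order one already at $t=T_{\rm{p}}$, which would give the stronger bound $W_2(\rho_{|a|})=\Omega(1)$, but the weaker one already suffices). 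Combining with Step 2,
\begin{equation}
 \sup_{t\in[T_{\rm{p}},T_{\rm{d}}]}\frac{W_2\big(\rho_{|a|},\rho_{\norm{\vw}}\big)}{W_2(\rho_{|a|})}=O\!\big(m^{-\delta}(\log m)^{1/2}\big)\;\longrightarrow\;0\qquad(m\to\infty),
\end{equation}
which is the claim in its corollary form (Corollaries~\ref{cor::similar_fp} and \ref{cor::descent_distribution}).

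\medskip
\noindent\textbf{Main obstacle.} Steps 1 and 2 come essentially for free: the balance invariant is exact and the initialization tail estimates are routine. The work is in Step 3 — extracting from Theorems~\ref{thm::initial_stage}--\ref{thm::descent} a genuinely polynomial-in-$m$ lower bound on the amplitude scale that is \emph{uniform over the whole interval} $[T_{\rm{p}},T_{\rm{d}}]$. This needs a quantitative lower bound both on the effective growth rate and on the number of neurons taking part in the condensation, so that it is the second moment of $\rho_{|a|}$ — not merely its maximum — that is amplified, together with enough control on the $t$-dependence to handle the infimum over $[T_{\rm{p}},T_{\rm{d}}]$, whose worst case is the left endpoint $t=T_{\rm{p}}$. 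Once those earlier estimates are available, this step reduces to bookkeeping, but it is where the real difficulty lies.
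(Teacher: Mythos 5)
Your overall strategy (bound the numerator by the per-neuron imbalance $|a_k|-\norm{\vw_k}_2$ via the index coupling, bound the denominator below by the second moment of $\rho_{|a|}$) is the same skeleton the paper uses, but your Step~1 rests on a false premise for this model. The identity $z\sigma'(z)=\sigma(z)$ characterizes positively one-homogeneous activations (ReLU, linear); the paper's Assumption on the activation is tanh-like ($\sigma(0)=0$, $\sigma'(0)=1$, $\sigma''(0)=0$, bounded third derivative), which is \emph{not} homogeneous, so $\vw_k\cdot\nabla_{\vw_k}R\neq a_k\,\partial_{a_k}R$ and $a_k^2-\norm{\vw_k}_2^2$ is \emph{not} conserved. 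What is true is that $\tfrac{\D}{\D t}\big(a_k^2-\norm{\vw_k}_2^2\big)=2a_kf_k-2\vw_k\cdot\vg_k$ plus the contribution of $\sigma(z)-z\sigma'(z)=O(z^3)$, i.e.\ the balance is only approximately conserved, with a drift of order $q_{\rm{max}}\big(q_{\rm{max}}^3+mq_{\rm{max}}^5\big)$ per unit time that must be integrated over a window of length $T_{\rm{d}}=\Theta(\log m)$. With Lemma~\ref{lem::higher_term} and the $q_{\rm{max}}$ controls from Theorems~\ref{thm::initial_stage} and~\ref{thm::descent} this accumulated drift is indeed $o$ of the denominator, so the argument is repairable, but as written the "exact invariant" step is a genuine gap. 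By contrast, the paper never invokes a balance law: it controls $|a_k-w_k^1|$ neuron-wise through $r_{\rm{max}}$ (Gronwall comparison with the linearized flow, for which the balance \emph{is} exact) and handles the off-directions $w_k^i$, $i\ge 2$, separately via the growth lemma.

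Your Step~3 is also where the paper does real work that you defer as "bookkeeping": the uniform-in-$t$ lower bound on $\frac{1}{m}\sum_k a_k^2$ over $[T_{\rm{p}},T_{\rm{d}}]$ is obtained in the paper from Proposition~\ref{prop::initial_global_estimate} ($K'=\Theta(m^{-(2\alpha_1-1)})$ at $T_{\rm{p}}$), Proposition~\ref{prop::mono} (monotonicity of $K'$ on the descent interval), and Proposition~\ref{prop::1/2} ($\sum_k a_k^2\approx\sum_k(w_k^1)^2$), not from a heuristic count of "participating neurons." Note also that the worst case is the left endpoint, where the denominator is only $\Theta(m^{-\alpha_1})$ with $\alpha_1>\tfrac12$, not $\Omega(1)$; your claim that the condensing amplitudes are "of order one already at $t=T_{\rm{p}}$" contradicts Theorem~\ref{thm::initial_stage}. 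Finally, be aware that the paper's corollaries only establish the bound at the two times $T_{\rm{p}}$ and $T_{\rm{d}}$; a genuinely uniform statement over the whole interval, which your balance-law route would naturally deliver once repaired, is actually slightly stronger than what the paper proves.
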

Fig.~\ref{fig:diff_stage}(c) shows that the relative Wasserstein distance declines in the initial descent stage where the norms of the weights are in the same level as shown in Fig.~\ref{fig:diff_stage}(b). Moreover, using experimental data and heuristic reasoning, we explore the factors that allow the network to exit the secondary plateau and offer intuition into the mechanisms driving the network’s evolution beyond this stage in Section~\ref{sec...plateau_departure}.

By quantifying both global trends and local changes in neural parameter behavior, our results highlight the importance of understanding neural network dynamics across different stages. These insights can offer practical implications for improving network generalization and optimization strategies. Theorems~\ref{thm...informal_1}, \ref{thm...informal_2}, \ref{thm...informal_3}, and \ref{thm...informal_4} and alignment of the milestones in the subgraph of Fig.~\ref{fig:diff_stage} tells a coherent story both theoretically and experimentally: During training under small initialization, the loss changes gradually while the parameters is moving towards in the same direction. Once they achieve a similar distribution, the loss begins to decrease until the dynamics approaches an approximate critical point and settles into a secondary plateau. As the weights strive to separate, the loss leaves the plateau and begins to decrease once again. 

\subsection{Related Works}\label{sec...related_works}
In this section, we discuss some related works. In fact, our nonlinear model can be viewed as a perturbation of a homogeneous linear model in its early stages. The homogeneous model has been widely applied in analyzing both the early stage and final stage dynamics of parameter behavior.
\begin{itemize}[wide, labelwidth=!,labelindent=0pt]
    \item \textbf{Early stage analysis}: There has been some work on early stage analysis. In matrix factorization problems, the model can be viewed as shallow or deep linear networks, though the problem remains non-convex.  Ye and Du \cite{ye2021global} showed that randomly initialized gradient descent converges to a global minimum of the asymmetric low-rank factorization problem at a polynomial rate. However, this model is linear. To some extent, our work builds on these approaches by extending them to nonlinear models. Studies such as  \cite{chen2023phase,chen2024three-initial,kumar2024early} characterize the initial stage where parameters condense or their directions converge in nonlinear models, though they do not fully capture the subsequent stages.  Min et al.~\cite{min2023early} identifies different stages for classification problems with separable data. However, due to certain limitations in the data and the structure of the loss function, it does not yet fully account for the secondary plateau stage and the transition from linear to nonlinear models.

    \item \textbf{Final stage analysis}: Other notable works focus on the convergence properties of homogeneous models in the final stages of classification where researchers often start their work under the assumption of correct classification. Lyu and Li~\cite{lyu2019gradient} modelled the final state as a max-margin optimization problem and get the convergence of both the normalized margin and its smoothed version to the objective value at a KKT point. Furthermore, Ji and Telgarsky~\cite{ji2020directional} extended the previous results and achieves directional convergence outcomes. Although the problems may seem different, as demonstrated in  \cite{kumar2024directional}, we believe that these two categories are fundamentally related. 
\end{itemize} 

The rest of the paper is organized as follows: Section~\ref{sec...preliminaries} presents preliminaries including the introduction of notation, the problem formulation, and key insights of different stages. In Section~\ref{sec...ip} we give a new proof for initial plateau stage. In Section~\ref{sec...id} we characterize the initial descent time both theoretically and experimentally for initial descent stage. In Section~\ref{sec...sp}, we show the long-term existence for secondary plateau stage and give an insight of the departure of the plateau. Finally, Section~\ref{sec...summary} is the summary of the paper. The appendix contain propositions along with detailed experimental results.

\section{Preliminaries}\label{sec...preliminaries}
\subsection{Notation}
First, we introduce some notation that will be used throughout this paper. Let $m$ be the width of hidden layers. Let $[n]$ denote the set of integers from $1$ to $n$ and $[k:n]$ be the set of integers from $k$ to $n$. Denote vector $L^2$ norm as $\norm{\cdot}_2$.  For a vector $\bm{v}$, denote its $k$-th entry as $v^k$. We use $O(\cdot)$, $\Omega(\cdot)$, and $\Theta(\cdot)$ for the standard Big-O, Big-Omega , and Big-Theta notation. For other key notation, we can refer to the Appendix~\ref{app...notation}.

\subsection{Problem formulation}
We consider two-layer neural networks with smooth activation function. We believe that linear network will approximate the nonlinear network well under small initialization setting until some milestone. The main issue is how to find this milestone and what to do after this. In the process of exploring such questions, we will gradually deepen our understanding of key phenomena such as condensation phenomena and the plateau phenomenon in the loss curve.

The model we now consider is 
\begin{equation*}
    f_{\vtheta} (\vx) = \sum_{k=1}^m a_k \sigma(\vw_k^{\T} \vx),
\end{equation*}
with initial parameter $a_k(0) \sim N(0,\frac{1}{m^{2 \alpha}})$ and $\vw_k(0) \sim N(0,\frac{1}{m^{2 \alpha}} \mI_d)$. For simplicity, we consider the following population risk function (also known as population loss function):
\begin{equation*}
    R (\vtheta) = \frac{1}{2} \int_{\sR^d} \left( f_{\vtheta} (\vx) - f(\vx) \right)^2 \rho(\vx) \diff{\vx}.
\end{equation*}
Without loss of generality, we make the following assumptions.
\begin{assump}[Symmetric sampling probability]\label{assump..SymSamp}
    Density function $\rho(\vx)$ has compact support. Moreover, $\int_{\sR^d} x_i^2 \rho(\vx) \diff{\vx} = 1$ for $i \in [d]$ and $\int_{\sR^d} x_i x_j \rho(\vx) \diff{\vx} = 0$ for $i \neq j$, $i,j\in [d]$.
\end{assump}

\begin{assump}[Non-vanishing leading term]\label{assump..Non-degenDirect}
    The initial condensation direction is non-degenerate. That is 
    \begin{equation*}
        \int_{\sR^d} f(\vx) \vx \rho(\vx) \diff{\vx} = (1, 0, \dots, 0)^{\T}.
\end{equation*}
\end{assump}
\begin{rmk}
    We only use the symmetry of the distribution. The result can be applied to situations with very few sample data, in which $R$ is the empirical loss/risk and $\rho$ is the sum of delta functions. Our result also can be extended to the networks with bias term $b_k$'s. 
\end{rmk}
We assume our activation function looks like tanh function, which includes a class of smooth activation functions. Without loss of generality, we take the following assumption.
\begin{assump}[Tanh-like activation function]\label{assump..TanhActivation}
    The activation function $\sigma$ satisfies: $\sigma(0) = 0, 
    \sigma^{(1)}(0)  = 1, 
    \sigma^{(2)}(0)  = 0, 
    |\sigma^{(3)}(z)|  \le C_{\rm{L}}$ for all $z\in\sR$.
\end{assump}
We suppose Assumptions~\ref{assump..SymSamp}, \ref{assump..Non-degenDirect}, and \ref{assump..TanhActivation} hold throughout this paper. Based on them, we approximate gradient descent using gradient flow under the condition that the learning rate is sufficiently small. Consequently, the dynamical system reads as follows (with the detailed derivation provided in the Appendix).
\begin{equation}
\label{equ::grad_descent}
\left\{
    \begin{aligned}
        \frac{\D a_k}{\D t} &= w_k^1 - \left( \sum_{l=1}^m a_l w_l^1 \right) w_k^1 - \sum_{i=2}^d \left( \sum_{l=1}^m a_l w_l^i \right) w_k^i + f_k,\\
        \frac{\D \vw_k}{\D t} &=  -\int_{\sR^d} \left(\sum_{l=1}^m a_l\left(\vw_l^{\T} \vx \right)-f(\vx)\right) a_k \vx \rho(\vx) \diff{\vx} + \vg_k, \\
    \end{aligned}
\right.
\end{equation}
where the superscript in $w_k^i$ means the $i$-th entry. Here the higher-order term $f_k$ and $\vg_k$ are defined as follows:
\begin{align}
    f_k & = \int_{\sR^d} f(\vx) \left( \frac{1}{3!} \sigma^{(3)} (\xi_k(\vx)) (\vw_k^{\T} \vx)^3 \right) \rho(\vx) \diff{\vx}\nonumber \\
    &\quad - \int_{\sR^d} \left( \sum_{l=1}^m \frac{1}{3!} \sigma^{(3)} (\xi_l(\vx)) a_l (\vw_l^{\T} \vx)^3  \right) (\vw_k^{\T} \vx) \rho(\vx) \diff{\vx}\nonumber \\
    & \quad - \int_{\sR^d} \left( \sum_{l=1}^m a_l \left( \vw_l^{\T} \vx + \frac{1}{3!} \sigma^{(3)}(\xi_l(\vx)) (\vw_l^{\T} \vx)^3\right) \right) \left( \frac{1}{3!} \sigma^{(3)} (\xi_k(\vx)) (\vw_k^{\T} \vx)^3 \right)  \rho(\vx) \diff{\vx}.\\
    \vg_k &=\int_{\sR^d} f(\vx) a_k\left(\frac{1}{2!} \sigma^{(3)}\left(\eta_k (\vx)\right)\left(\vw_k^{\T} \vx\right)^2\right) \vx \rho(\vx) \diff{\vx}\nonumber  \\
    & \quad -\int_{\sR^d} \left(\sum_{l=1}^m a_l \left(\vw_l^{\T} \vx\right)\right) a_k \left( \frac{1}{2!} \sigma^{(3)}\left(\eta_k (\vx)\right)\left( \vw_k^{\T} \vx \right)^2 \right) \vx \rho(\vx) \diff{\vx}]\nonumber \\
    & \quad -\int_{\sR^d} \left(\sum_{l=1}^m a_l \frac{1}{3!} \sigma^{(3)}\left(\xi_l (\vx)\right)\left(\vw_l^{\T} \vx\right)^3\right) a_k\left(1+\frac{1}{2!} \sigma^{(3)}\left(\eta_k (\vx)\right)\left(\vw_k^{\T} \vx \right)^2\right) \vx \rho(\vx) \diff{\vx}.
\end{align}
where $\xi_k (\vx)$ and $\eta_k (\vx)$ come from Taylor's expansion. To better reveal the structure of the dynamics, we can express $\vw_k$ in entry-wise form.
\begin{equation}
\left\{
\label{equ::whole_dynamics}
    \begin{aligned}
        \frac{\D a_k}{\D t} &= w_k^1 - \left( \sum_{l=1}^m a_l w_l^1 \right) w_k^1 - \sum_{i=2}^d \left( \sum_{l=1}^m a_l w_l^i \right) w_k^i + f_k,\\
        \frac{\D w_k^1}{\D t} & = a_k - \left( \sum_{l=1}^m a_l w_l^1 \right) a_k +g_k^1, \\
        \frac{\D w_k^i}{\D t} & = -\left( \sum_{l=1}^m a_l w_l^i \right) a_k + g_k^i. \\
    \end{aligned}
\right.
\end{equation}

We will estimate the higher-orders terms carefully throughout this paper. To achieve this, we introduce neuron-wise $L^{\infty}$ norm, denoted as  $q_{\rm{max}}$:
\begin{equation}
\label{defi::q_max}
    q_{\rm{max}}(t) = \max_{k \in [m],i \in [d]} \{ |a_k(t)|,|w_k^i(t)|\}.
\end{equation}
By direct calculation, it can be shown that these terms are negligible in the stages we are concerned with. This is formalized in the following lemma.
\begin{lem}[Estimate of higher order terms]
\label{lem::higher_term}
The following inequalities hold for $q_{\text{max}} \le 1$:    
\begin{equation}
\label{equ::higher-order_control}
    |f_k|, \norm{\vg_k}_2 \lesssim q_{\rm{max}}^3 + m q_{\rm{max}}^5.
\end{equation}
\end{lem}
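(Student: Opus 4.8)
The plan is to bound $f_k$ and $\vg_k$ summand by summand, exploiting two structural facts: the Taylor remainder of $\sigma$ begins at third order (since Assumption~\ref{assump..TanhActivation} gives $\sigma(0)=0$, $\sigma^{(1)}(0)=1$, $\sigma^{(2)}(0)=0$) with a uniformly bounded coefficient $|\sigma^{(3)}|\le C_{\rm{L}}$, and $\rho$ has compact support. The whole argument is a careful power count in $q_{\rm{max}}$, keeping precise track of when a factor of $m$ is generated.

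First I would record the pointwise estimates available on the support of $\rho$. Compactness furnishes a constant $B$ with $|x^i|\le B$ for all $i\in[d]$, and $f$ is integrable against the polynomial weights that appear (it suffices that $f\in L^1(\rho)$, automatic for a reasonable target on a compact set). From the definition \eqref{defi::q_max} we have $|a_k|,|w_k^i|\le q_{\rm{max}}$, hence
\begin{equation*}
  |\vw_k^{\T}\vx| \;\le\; \sum_{i=1}^d |w_k^i|\,|x^i| \;\le\; d\,B\,q_{\rm{max}} \;\lesssim\; q_{\rm{max}}.
\end{equation*}
Combined with $|\sigma^{(3)}|\le C_{\rm{L}}$, this converts every factor of the form $\sigma^{(3)}(\cdot)(\vw_l^{\T}\vx)^p$ into a clean bound of order $q_{\rm{max}}^p$, and every stray amplitude $a_l$ or $w_l^i$ into a factor $q_{\rm{max}}$; all hidden constants depend only on $d$, $B$, $C_{\rm{L}}$ and $\int|f|\rho$, never on $m$ or $q_{\rm{max}}$.

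The key accounting principle is that a factor of $m$ appears precisely when an expression contains a sum $\sum_{l=1}^m$ over all neurons, while a ``single-neuron'' term (no such sum) stays free of $m$. Applying this to the three summands of $f_k$: the first, $\int f\,\tfrac{1}{3!}\sigma^{(3)}(\xi_k)(\vw_k^{\T}\vx)^3\rho\,\diff{\vx}$, carries no sum and is $O(q_{\rm{max}}^3)$; the second and third each contain one sum over $l$, so an $O(m\,q_{\rm{max}}^{p})$ inner sum times the remaining $O(q_{\rm{max}}^{q})$ factor yields $O(m\,q_{\rm{max}}^5)$ in both cases. The same bookkeeping for $\vg_k$ gives one single-neuron term of order $q_{\rm{max}}^3$ and two summed terms of order $m\,q_{\rm{max}}^5$; the extra factor $(1+\tfrac{1}{2!}\sigma^{(3)}(\eta_k)(\vw_k^{\T}\vx)^2)=O(1)$ in the third summand of $\vg_k$ does not change the order since $q_{\rm{max}}\le 1$. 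Summing the three contributions in each case and dominating everything by $q_{\rm{max}}^3+m\,q_{\rm{max}}^5$ gives \eqref{equ::higher-order_control}.

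I do not expect a genuine obstacle: there is no analytic difficulty, only disciplined estimation. The one point requiring real attention is the systematic use of $q_{\rm{max}}\le 1$ to collapse the genuinely higher-order remainders — those arising from products of two cubic Taylor tails, which naively scale like $m\,q_{\rm{max}}^7$ — back into the stated $m\,q_{\rm{max}}^5$ term, and likewise the $q_{\rm{max}}^4$ single-neuron cross terms back into $q_{\rm{max}}^3$. Verifying that no summand secretly produces a power lower than $q_{\rm{max}}^3$ or an $m$-factor larger than a single power of $m$ is what makes the bound sharp in both $m$ and $q_{\rm{max}}$.
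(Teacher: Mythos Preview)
Your proposal is correct and follows essentially the same approach as the paper: a term-by-term estimate using the compact support of $\rho$, the uniform bound $|\sigma^{(3)}|\le C_{\rm L}$, and the pointwise control $|\vw_k^{\T}\vx|\lesssim q_{\rm max}$, with $q_{\rm max}\le 1$ used to absorb the $m\,q_{\rm max}^7$ contribution from the product of two cubic tails into $m\,q_{\rm max}^5$. The paper's proof is slightly terser (it writes out explicit constants $C_1=\sup_{\mathrm{supp}(\rho)}|f|$ and $C_2=\mathrm{diam}(\mathrm{supp}(\rho))$ and treats only $f_k$ in detail), but your accounting of where the factor $m$ enters is identical in spirit and substance.
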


\begin{proof}
    Let $C_1 = \sup_{\vx \in \rm{supp}(\rho)} \left| f(\vx) \right|$ and $C_2 = \diam(\rm{supp}(\rho))$, where $C_2$ denotes the diameter of $\rm{supp}(\rho)$. Since density function $\rho(\vx)$ has compact support, $C_1$ and $C_2$ are finite. We will provied the proof for$f_k$ and the case for $\vg_k$ is similar. By definition of $f_k$, we have
    \begin{align*}
        |f_k| & = \left|\int_{\sR^d} f(\vx) \left( \frac{1}{3!} \sigma^{(3)} (\xi_k(\vx)) (\vw_k^{\T} \vx)^3 \right) \rho(\vx) \diff{\vx}\right. \\
        &\quad - \int_{\sR^d} \left( \sum_{l=1}^m \frac{1}{3!} \sigma^{(3)} (\xi_l(\vx)) a_l (\vw_l^{\T} \vx)^3  \right) (\vw_k^{\T} \vx) \rho(\vx) \diff{\vx} \\
        &\quad - \left.\int_{\sR^d} \left( \sum_{l=1}^m a_l \left( \vw_l^{\T} \vx + \frac{1}{3!} \sigma^{(3)}(\xi_l(\vx)) (\vw_l^{\T} \vx)^3\right) \right) \left( \frac{1}{3!} \sigma^{(3)} (\xi_k(\vx)) (\vw_k^{\T} \vx)^3 \right)  \rho(\vx) \diff{\vx} \right|\\
        &\le \frac{1}{3!} C_{\rm{L}} C_1 C_2^3 d^3 q_{\rm{max}}^3 + \frac{1}{3!} C_{\rm{L}} C_2^4 d^4 m q_{\rm{max}}^5 + \frac{1}{3!} C_{\rm{L}}C_2^4 d^4 \left( 1+ \frac{1}{3!} C_{\rm{L}}  C_2^2 d^2 \right) m q_{\max}^5 \\
        &\lesssim q_{\rm{max}}^3 + m q_{\rm{max}}^5,
    \end{align*}
    where the condition $q_{\rm{max}} \le 1$ is applied to estimate the third term.
\end{proof}

\subsection{Separation of different stages: an intuitive discussion}

According to the behavior of loss during the training process, we divide the early stage of training into three stages: the initial plateau stage, the initial descent stage, and the secondary plateau stage, as shown in the Fig.~\ref{fig:diff_stage}(a). These three stages constitute the primary focus of this paper.

We introduce the Wasserstein distance to characterize difference between the empirical distribution of input weights and output weights. The Wasserstein distance is defined as
\begin{equation}\label{defi....wassersteindist}
    W_2 (\rho_1, \rho_2) = \inf_{\gamma \in \fC (\rho_1, \rho_2)}  \left( \int \norm{\vtheta_1 - \vtheta_2}_2^2 \gamma(\D \vtheta_1, \D \vtheta_2) \right)^{\frac{1}{2}},
\end{equation}
where the infimum is taken over all couplings of $\rho_1$ and $\rho_2$. That is $\gamma$ is a joint probability distribution defined on $\sR^m \times \sR^m$, such that its marginal distributions are $\rho_1$ and $\rho_2$, respectively. We consider empirical distribution $\rho_{|a|} = \frac{1}{m} \sum_{k=1}^m \delta_{|a_k|}$ and $\rho_{\norm{\vw}} = \frac{1}{m} \sum_{k=1}^m \delta_{\norm{\vw_k}}$. There is an upper bound we will use in the following discussion.
\begin{equation}
    W_2 (\rho_{|a|},\rho_{\norm{\vw}}) \le \left( \frac{1}{m} \sum_{k=1}^m \left(|a_k| -\norm{\vw_k} \right)^2 \right)^{\frac{1}{2}}.
\end{equation}
As shown in Fig.\ref{fig:diff_stage}, during the three stages of the training loss curve, the norms of the weights and their relative Wasserstein distance change simultaneously. On the one hand, the norms of ${\vw}$ and ${\va}$ progressively approach equality at the red star in Fig.\ref{fig:diff_stage} which is the beginning of the initial descent stage. On the other hand, at the end of the initial descent stage (the blue star), both the norms of the weights seem to settle onto the same plateau and the relative Wasserstein distance vanishes. Fig.\ref{fig:diff_stage} illustrates that the phenomenon depicted, from the left panel to the right, represents observations of the same process at different scales, transitioning from macroscopic to microscopic views. 
In this paper, we will theoretically analyze the dynamical behavior of the weights during each period. Here, we outline the main ideas behind the proof. 

In the initial stage, due to the very small initialization, the parameters will be approximated by the linear dynamics described below. As a result, the weight vector will be proved to condense in a specific direction starting from a randomly initialized state, with careful error estimate between the original dynamics~\eqref{equ::whole_dynamics} and linearized dynamics~\eqref{equ::initial_stage}. We have established a similar result in  \cite{chen2023phase}. But in the present work, we employ a different technique to obtain improved estimates.
\begin{equation}
\label{equ::initial_stage}
\left\{
    \begin{aligned}
        \frac{\D a_k}{\D t} &= w_k^1, \\
        \frac{\D w_k^1}{\D t} &= a_k, \\
        \frac{\D w_k^i}{\D t} &= 0, \quad i = 2, \dots, d.
    \end{aligned}
\right.
\end{equation}

In the initial descent stage, after considering higher order approximation, the dynamics of parameters will be governed by the following system:
\begin{equation}
\label{equ::descent_stage}
\left\{
    \begin{aligned}
        \frac{\D a_k}{\D t} &= w_k^1 -  \left( \sum_{l=1}^m a_l w_l^1 \right) w_k^1 , \\
        \frac{\D w_k^1}{\D t} &= a_k - \left( \sum_{l=1}^m a_l w_l^1 \right) a_k, \\
        \frac{\D w_k^i}{\D t} &= 0,\quad i=2, \dots, d.
    \end{aligned}
\right.
\end{equation}
We will study the original dynamics~\eqref{equ::whole_dynamics} by making use of the analysis on Eq.~\eqref{equ::descent_stage}. To do this, we introduce
\begin{align}
    K & = \sum_{k=1}^m a_k w_k^1, \label{defi::K} \\
    K'& = \sum_{k=1}^m  \left( a_k^2 + (w_k^1)^2 \right).\label{defi::K'}
\end{align}
Note that $K$ approximately solves the equation
\begin{equation}
\label{equ::K_dynamics}
    \frac{\D K}{\D t} = 2K(1-K).
\end{equation}
Clearly, Eq.~\eqref{equ::K_dynamics} admits an explicit solution:
\begin{equation}
    K(t) = \frac{C(t_0) \E^{2(t-t_0)}}{1 + C(t_0) \E^{2(t-t_0)}},
\end{equation}
where $C(t_0) = \frac{K(t_0)}{1-K(t_0)}$ when $K(t_0) \neq 1$. Thus the dynamics in Eq.~\eqref{equ::K_dynamics} will rapidly converge to the limit $K = 1$.

In the secondary plateau stage, since the dynamics has converged to a critical point, the entire dynamics enter a prolonged plateau stage to explore new directions. This is reflected on the loss curve as an extended plateau. At this point, the effective dynamics is dominated by the following dynamics. We start from a ``critical point'' corresponding to the condition $K = 1$ and then the leading term vanishes because
\begin{align*}
    w_k^1 - \left( \sum_{l=1}^m a_l w_l^1 \right) w_k^1 &= (1-K) w_k^1 \approx 0, \\ 
    a_k - \left( \sum_{l=1}^m a_l w_l^1 \right) a_k &= (1-K) a_k \approx 0.
\end{align*}
Thus, the evolution of the parameters at this time can be approximated by the following system:
\begin{equation}
 \label{equ::sec_dynamics}
 \left\{
     \begin{aligned}
        \frac{\D a_k}{\D t} &=  - \sum_{i=2}^d \left( \sum_{l=1}^m a_l w_l^i \right) w_k^i + f_k,\\
        \frac{\D w_k^1}{\D t} & =g_k^1, \\
        \frac{\D w_k^i}{\D t} & = -\left( \sum_{l=1}^m a_l w_l^i \right) a_k + g_k^i, \quad i= 2, \dots, d. \\
     \end{aligned}
\right.
\end{equation}
Throughout the proof of the three stages, we will find the following lemma be particularly useful, as it separates $w_k^1$ from the other directions $w_k^i$ and decouples the original complicated dynamics \eqref{equ::whole_dynamics} into simple form in each stage.  
\begin{lem}[Growth lemma]
    \label{lem::growth}
    Let $T_{\rm{test}} = \left\{ t \ge 0 ~|~ q_{\rm{max}} \le \frac{1}{m^{\beta}} (\log m)^\gamma \right\}$ where $\beta, \gamma>0$ such that $\frac{1}{m^{\beta}} (\log m)^\gamma \le 1$. We have for $i=2, \dots, d$, $t_0 \le t \le T_{\rm{test}}$
    \begin{equation}\label{eq::growthlemma}
        \left( \sum_{k=1}^m \left( w_k^i (t) \right)^2  \right)^{\frac{1}{2}} \le \left( \sum_{k=1}^m \left( w_k^i (t_0) \right)^2 \right)^{\frac{1}{2}} + C \max \left\{ \frac{(\log m)^{3\gamma}}{m^{3\beta-\frac{1}{2} }}, \frac{(\log m)^{5\gamma}}{m^{5\beta -\frac{3}{2}}}\right\} (t-t_0).  
    \end{equation}
\end{lem}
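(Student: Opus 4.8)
The plan is to derive a differential inequality for the quantity $S_i(t) := \left( \sum_{k=1}^m (w_k^i(t))^2 \right)^{1/2}$ for each fixed $i \in \{2,\dots,d\}$ and then integrate it. First I would write down, from the entry-wise dynamics~\eqref{equ::whole_dynamics}, that $\frac{\D w_k^i}{\D t} = -\left(\sum_{l=1}^m a_l w_l^i\right) a_k + g_k^i$. Forming $\frac{1}{2}\frac{\D}{\D t}\sum_k (w_k^i)^2 = \sum_k w_k^i \frac{\D w_k^i}{\D t}$, the first contribution is $-\left(\sum_l a_l w_l^i\right)\left(\sum_k a_k w_k^i\right) = -\left(\sum_k a_k w_k^i\right)^2 \le 0$, so this term only helps. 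Hence $S_i \frac{\D S_i}{\D t} \le \sum_k w_k^i g_k^i \le S_i \left(\sum_k (g_k^i)^2\right)^{1/2}$ by Cauchy--Schwarz, giving $\frac{\D S_i}{\D t} \le \left(\sum_k (g_k^i)^2\right)^{1/2}$.

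Next I would bound the higher-order term. On the interval $[t_0, T_{\rm test}]$ we have $q_{\rm max} \le \frac{1}{m^\beta}(\log m)^\gamma \le 1$, so Lemma~\ref{lem::higher_term} applies componentwise: $|g_k^i| \le \norm{\vg_k}_2 \lesssim q_{\rm max}^3 + m q_{\rm max}^5$. Therefore $\left(\sum_{k=1}^m (g_k^i)^2\right)^{1/2} \le \sqrt{m}\,\max_k |g_k^i| \lesssim \sqrt{m}\left(q_{\rm max}^3 + m q_{\rm max}^5\right) \lesssim \frac{(\log m)^{3\gamma}}{m^{3\beta - 1/2}} + \frac{(\log m)^{5\gamma}}{m^{5\beta - 3/2}} \lesssim \max\left\{ \frac{(\log m)^{3\gamma}}{m^{3\beta-1/2}}, \frac{(\log m)^{5\gamma}}{m^{5\beta-3/2}} \right\}$. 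Plugging this uniform-in-time bound into the differential inequality and integrating from $t_0$ to $t$ yields exactly~\eqref{eq::growthlemma} with an appropriate absolute constant $C$.

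The only subtlety — and the main thing to be careful about rather than a genuine obstacle — is that the bound on $\norm{\vg_k}_2$ requires $q_{\rm max} \le 1$ to hold on the whole integration interval, which is precisely why the lemma is stated relative to the stopping time $T_{\rm test}$ (a ``bootstrap'' device): all estimates are conditional on staying in the regime $q_{\rm max} \le \frac{1}{m^\beta}(\log m)^\gamma$, and the self-consistency of this regime will be verified separately in each of the three stages where the lemma is invoked. One should also note that the sign of the quadratic term $-\left(\sum_k a_k w_k^i\right)^2$ is what makes the argument clean: it is this cancellation that decouples the $w_k^i$ directions ($i \ge 2$) from the dynamics along direction $1$, reducing their growth to the purely higher-order source $g_k^i$. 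Passing from $S_i\frac{\D S_i}{\D t}$ to $\frac{\D S_i}{\D t}$ is harmless where $S_i>0$, and the inequality extends to points where $S_i=0$ by continuity.
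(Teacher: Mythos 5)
Your proposal is correct and follows essentially the same route as the paper's proof: drop the nonpositive leading term $-\left(\sum_k a_k w_k^i\right)^2$, apply Cauchy--Schwarz to the $g_k^i$ contribution, invoke Lemma~\ref{lem::higher_term} with $q_{\rm max}\le \frac{1}{m^\beta}(\log m)^\gamma$ to get the uniform bound $C\max\bigl\{\frac{(\log m)^{3\gamma}}{m^{3\beta-1/2}},\frac{(\log m)^{5\gamma}}{m^{5\beta-3/2}}\bigr\}$, and integrate. Your additional remarks on the stopping-time/bootstrap role of $T_{\rm test}$ and the $S_i=0$ edge case are sound but not needed beyond what the paper does.
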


\begin{rmk}
    The time $T_{\rm{test}}$ serves as an intuitive hypothesis regarding the neuron-wise $L^{\infty}$ norm $q_{\rm{max}}$ at various stages.
\end{rmk}

\begin{proof}
Through direct calculation, we find that the leading term is negative, which provides an upper bound,
    \begin{align*}
        \frac{\D \left( \sum_{k=1}^m (w_k^i)^2\right)}{\D t} &= 2 \sum_{k=1}^m w_k^i \left(  -  a_k \left(\sum_{k=1}^m a_k w_k^i \right) + g_k^i \right) \\
        & \le 2 \left( \sum_{k=1}^m (w_k^i)^2 \right)^\frac{1}{2} \left( \sum_{k=1}^m (g_k^i)^2 \right)^\frac{1}{2}.
    \end{align*}
    By Lemma~\ref{lem::higher_term} and the definition of $T_{\text{test}}$, we have 
    \begin{align*}
        \left( \sum_{k=1}^m (g_k^i)^2 \right)^\frac{1}{2} & \le C \sqrt{m} \max \left\{ q_{\text{max}}^3, m q_{\text{max}}^5  \right\} \\
        &\le C \max \left\{ \frac{(\log m)^{3\gamma}}{m^{3\beta-\frac{1}{2} }}, \frac{(\log m)^{5 \gamma}}{m^{5\beta -\frac{3}{2}}}\right\}.
    \end{align*}
    As a result, we obtain Eq.~\eqref{eq::growthlemma}.
\end{proof}

\section{Initial Plateau Stage}\label{sec...ip}
In this section, we characterize the dynamical behavior during the initial plateau stage. By proving that the linearized dynamics \eqref{equ::initial_stage} closely approximates the original dynamics \eqref{equ::whole_dynamics} in this stage, we illustrate how initially scattered parameters begin converging along a few key directions, effectively acting as a form of parameter reinitialization. This result extends our previous findings~\cite{chen2023phase}, though our novel approach provides more precise insights. The improvements in our results are primarily due to Lemma~\ref{lem::growth}.

Let $\tilde{a}_k$ and $\tilde{w}_k^i$ denote the solution to Eq.~\eqref{equ::initial_stage} with the same initial conditions as those in the original dynamics~\eqref{equ::whole_dynamics}. It has the analytical solution as follows,
\begin{equation}
    \label{equ::analytical_sol}
    \left\{
    \begin{aligned}
        \tilde{a}_k(t) &=  \tfrac{1}{2} (a_k(0) + w_k^1 (0)) \E^t + \tfrac{1}{2} (a_k(0)- w_k^1 (0)) \E^{-t}, \\
        \tilde{w}_k^1(t) &=  \tfrac{1}{2} (a_k(0) + w_k^1 (0)) \E^t - \tfrac{1}{2} (a_k(0)- w_k^1 (0)) \E^{-t},\\
        \tilde{w}_k^i(t) &= w_k^i(0),\quad i=2,\dots,d.
    \end{aligned}
    \right.
\end{equation}
Similarly, we define $\tilde{q}_{\text{max}}$ as follows.
\begin{equation}\label{defi...q_max_tilde}
    \tilde{q}_{\text{max}}  =  \max_{k \in [m],i \in [d]} \{ |\tilde{a}_k(t)|,|\tilde{w}_k^i(t)|\}.
\end{equation}

Denote the difference between original dynamics~\eqref{equ::whole_dynamics} and the linearized dynamics~\eqref{equ::initial_stage} as 
\begin{equation}\label{defi...rmax}
    r_{\rm{max}}(t) = \max_{k \in [m]}  \left\{ |a_k(t) - \tilde{a}_k(t)| + |w_k^1(t) -\tilde{w}_k^1(t)| \right\}.
\end{equation}
We use the following test time $T_1$ to characterize the effective time of linearization.
\begin{equation}\label{defi...T_1}
    T_1 = \inf \left\{ t \ge 0 ~ | ~ q_{\rm{max}} \ge \frac{1}{m^{\alpha_1}} \log m, r_{\rm{max}} \ge \frac{1}{m^{\gamma_1}} (\log m)^{3} \right\},
\end{equation}
where $q_{\rm{max}}$ defined in Eq.~\eqref{defi::q_max} and $\frac{1}{2} < \alpha_1 < \gamma_1$. Taking hyperparameters as in Tab.~\ref{tab::hyper}, we have the main theorem for the initial plateau stage.
\begin{thm}[Initial plateau stage] 
\label{thm::initial_stage}
Let $(\alpha_1, \gamma_1) = \left(\frac{1}{2}\alpha + \frac{1}{4}, \frac{3}{2}\alpha - \frac{1}{4}\right)$ and $T_{\rm{p}} = \frac{2\alpha - 1}{4} \log m$ for $\frac{1}{2} < \alpha \leq \frac{3}{2}$, while $(\alpha_1, \gamma_1) = (1, 2)$ and $T_{\rm{p}} = (\alpha - 1) \log m$ for $\alpha > \frac{3}{2}$. Given $\delta \in (0,1)$, for sufficiently large $m$, with probability at least $1 - \delta$ over the choice of initial parameters $\vtheta(T_0)$, the following holds:
\begin{enumerate}[label = (\roman*).,wide, labelwidth=!,labelindent=0pt]
\item
    Neuron-wise estimate is maintained during the initial plateau stage:
    \begin{equation}
        T_{\rm{p}} \leq T_1.
    \end{equation}
    Furthermore, for $i = 2, \dots, d$, we have 
    \begin{equation}
    \label{equ::initial_w_k^i}
        \max_{k \in m, i \in [2:d]} \left\{ |w_k^i(T_{\rm{p}})| \right\} \lesssim \max \left\{ \frac{1}{m^\alpha} \log m, \frac{1}{m^3} (\log m)^6 \right\}.
    \end{equation}
\item
    The loss function exhibits negligible decay during the initial plateau stage:
    \begin{equation}
    \frac{|R(\vtheta(T_{\rm{p}})) - R(\vtheta(0))|}{T_{\rm{p}}} \lesssim \frac{1}{m^{2\alpha_1-1} \log m}.
    \end{equation}
\end{enumerate}
\end{thm}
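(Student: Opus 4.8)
The plan is a bootstrap argument around the stopping time $T_1$ defined in \eqref{defi...T_1}, using the explicit linear trajectory \eqref{equ::analytical_sol} as the reference and Lemmas~\ref{lem::higher_term} and \ref{lem::growth} to absorb every correction. First I fix the randomness, then I show the linearization survives up to $T_{\rm{p}}$, and only then do I read off the loss bound.

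\textbf{Step 1: control of the random initialization.} Each $a_k(0)$ and each $w_k^i(0)$ is $N(0,m^{-2\alpha})$, so a union bound over the $m(d+1)$ Gaussians gives $q_{\rm{max}}(0)\lesssim m^{-\alpha}\sqrt{\log m}$, and a $\chi^2$-concentration bound gives $\sum_k(a_k(0)\pm w_k^1(0))^2=\Theta(m^{1-2\alpha})$ and $\sum_k(w_k^i(0))^2=\Theta(m^{1-2\alpha})$ for all $i$ --- each with probability at least $1-\delta/2$. I condition on this event; everything afterwards is deterministic. In particular $q_{\rm{max}}(0)\ll m^{-\alpha_1}\log m$ (as $\alpha_1<\alpha$) and $r_{\rm{max}}(0)=0$, so $T_1>0$.

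\textbf{Step 2: the linearization stays valid, i.e.\ $T_{\rm{p}}\le T_1$.} On $[0,T_1)$ the a priori bounds $q_{\rm{max}}<m^{-\alpha_1}\log m$ and $r_{\rm{max}}<m^{-\gamma_1}(\log m)^3$ hold; I will strictly improve them on $[0,\min\{T_1,T_{\rm{p}}\}]$, which closes the bootstrap. From \eqref{equ::analytical_sol} one has $\tilde q_{\rm{max}}(t)\lesssim m^{-\alpha}\sqrt{\log m}\,\E^{t}$, so at $t=T_{\rm{p}}$, where $\E^{T_{\rm{p}}}=m^{(2\alpha-1)/4}$ (resp.\ $m^{\alpha-1}$ when $\alpha>3/2$), this is $\lesssim m^{-\alpha_1}\sqrt{\log m}\ll m^{-\alpha_1}\log m$; combined with $r_{\rm{max}}$ and the $w_k^i$-estimate below this forces $q_{\rm{max}}(T_{\rm{p}})\ll m^{-\alpha_1}\log m$. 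For the error, subtracting \eqref{equ::initial_stage} from \eqref{equ::whole_dynamics} shows that $r_k:=|a_k-\tilde a_k|+|w_k^1-\tilde w_k^1|$ satisfies the linear differential inequality
\begin{equation*}
\frac{\D r_k}{\D t}\le r_k+|K|\bigl(|a_k|+|w_k^1|\bigr)+\sum_{i=2}^d\Bigl|\sum_l a_l w_l^i\Bigr|\,|w_k^i|+|f_k|+|g_k^1|
\end{equation*}
with unit growth rate, hence $r_{\rm{max}}(t)\le\int_0^t\E^{t-s}\,\mathrm{Src}(s)\,\D s$. The source is polynomially small: $|K|\lesssim m\,q_{\rm{max}}^2$; $|\sum_l a_l w_l^i|\le\norm{\va}_2\norm{\vw^i}_2$ with $\norm{\vw^i}_2$ kept at $\Theta(m^{1/2-\alpha})$ by Lemma~\ref{lem::growth} (applied with $\beta=\alpha_1,\gamma=1$); and $|f_k|,|g_k^1|\lesssim q_{\rm{max}}^3+m\,q_{\rm{max}}^5$ by Lemma~\ref{lem::higher_term}. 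The essential trick is to feed into $\mathrm{Src}(s)$ not the crude value $m^{-\alpha_1}\log m$ but the self-consistent time-dependent bound $q_{\rm{max}}(s)\lesssim m^{-\alpha}\sqrt{\log m}\,\E^{s}$ coming from the previous sentence; then the weight $\E^{t-s}$ concentrates the integral near $s=t$, the dominant term $\int\E^{t-s}m\,q_{\rm{max}}^3(s)\,\D s\asymp m^{1-3\alpha}(\log m)^{3/2}\E^{3T_{\rm{p}}}$ equals $m^{-\gamma_1}(\log m)^{3/2}$, and all other terms are strictly lower order, so $r_{\rm{max}}(T_{\rm{p}})\lesssim m^{-\gamma_1}(\log m)^{3/2}\ll m^{-\gamma_1}(\log m)^3$. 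By continuity $T_1>T_{\rm{p}}$. The same bounds applied to $\dot w_k^i=-(\sum_l a_l w_l^i)a_k+g_k^i$, integrated over $[0,T_{\rm{p}}]$, simultaneously yield \eqref{equ::initial_w_k^i}: the term $|w_k^i(0)|\lesssim m^{-\alpha}\sqrt{\log m}$ dominates for $\tfrac12<\alpha\le\tfrac32$, while for $\alpha>\tfrac32$ the growth-lemma and higher-order contributions produce the floor $m^{-3}(\log m)^6$.

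\textbf{Step 3: negligible loss decay.} Since \eqref{equ::grad_descent} is the gradient flow of $R$, $R$ is nonincreasing, so it suffices to bound $R(\vtheta(0))-R(\vtheta(T_{\rm{p}}))$. Writing $R(\vtheta)-R(\vtheta(0))=\tfrac12\int f_{\vtheta}^2\rho-\int f_{\vtheta}f\rho$ and Taylor-expanding $\sigma$ about $0$ (Assumption~\ref{assump..TanhActivation}), the linear part of $f_{\vtheta}$ is $(\sum_k a_k\vw_k)^{\T}\vx=\sum_i(\sum_k a_k w_k^i)x_i$, whose contribution to $-\int f_{\vtheta}f\rho$ is exactly $-\sum_k a_k w_k^1=-K$ by Assumptions~\ref{assump..SymSamp}--\ref{assump..Non-degenDirect}; the quadratic term $\tfrac12\int f_{\vtheta}^2\rho$ is $O\bigl(K^2+\sum_{i\ge2}(\sum_k a_k w_k^i)^2\bigr)$ and the cubic remainders are bounded as in Lemma~\ref{lem::higher_term}, and on $[0,T_{\rm{p}}]$ all of these are $o(|K|)$ by Step~2. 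Hence $|R(\vtheta(T_{\rm{p}}))-R(\vtheta(0))|=(1+o(1))\,|K(T_{\rm{p}})|$. Again by Step~2, $K(T_{\rm{p}})$ differs by $o(\cdot)$ from its linearized value $\tfrac14\E^{2T_{\rm{p}}}\sum_k(a_k(0)+w_k^1(0))^2-\tfrac14\E^{-2T_{\rm{p}}}\sum_k(a_k(0)-w_k^1(0))^2$, which by the $\chi^2$ bound of Step~1 is $\Theta\!\bigl(m^{1-2\alpha}\E^{2T_{\rm{p}}}\bigr)=\Theta\!\bigl(m^{-(2\alpha_1-1)}\bigr)$; dividing by $T_{\rm{p}}=\Theta(\log m)$ gives $\lesssim m^{-(2\alpha_1-1)}/\log m$, as claimed.

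\textbf{Main obstacle.} The delicate point is Step~2: the worst-case source bounds are exactly borderline, producing an error of order $m^{-\gamma_1}$ up to polylog factors with no polynomial room to spare. Closing the bootstrap therefore forces one to (i) replace the uniform a priori bound on $q_{\rm{max}}$ by the exponentially-growing-in-time bound from \eqref{equ::analytical_sol}, so that the Gr\"onwall weight concentrates the source contribution at the endpoint, and (ii) use Lemma~\ref{lem::growth} to keep $\norm{\vw^i}_2$ from inflating beyond its $\Theta(m^{1/2-\alpha})$ initial size. Tracking the polylogarithmic powers carefully enough that the resulting $(\log m)^{3/2}$-type bounds fit strictly below the $(\log m)^3$ threshold in \eqref{defi...T_1} is precisely what pins down $\gamma_1$ and forces the case split at $\alpha=\tfrac32$, the two branches corresponding to $q_{\rm{max}}$ reaching scale $m^{-\alpha_1}$ versus scale $m^{-1}$ at the end of the plateau.
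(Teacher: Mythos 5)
Your proposal is correct and follows essentially the same route as the paper: concentration bounds on the initialization, a Grönwall/bootstrap argument on $r_{\rm{max}}$ closed at the stopping time $T_1$ using the explicit exponential solution \eqref{equ::analytical_sol} together with Lemmas~\ref{lem::higher_term} and \ref{lem::growth}, and then the Taylor expansion of $R$ reducing the loss change to the size of $K(T_{\rm{p}})=\Theta(m^{-(2\alpha_1-1)})$. The only differences are bookkeeping ones (you run Grönwall at rate $1$ with all three powers of $q_{\rm{max}}$ taken exponential in time, where the paper caps one factor at the threshold $m^{-\alpha_1}\log m$ and runs Grönwall at rate $2$, and you phrase the continuity argument as a strict improvement rather than a contradiction at $T_1$), which lead to the same exponent $\gamma_1$.
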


\begin{rmk}
    In Theorem~\ref{thm::initial_stage}, we work in a probabilistic setting, i.e., with probability $1-\delta$ the results hold, which will recur frequently in the subsequent conclusions. This is because we adopt random initialization, making the worst-case hard to avoid. However, for sufficiently large $m$, $\delta$ can be sufficiently small, we can see this in Lemma~\ref{lem...init_est}.
\end{rmk}

\begin{rmk}
In Tab.~\ref{tab::hyper}, we present the values of $\alpha_1,\gamma_1$ and $T_{\rm{p}}$ for different $\alpha$. Here, $\alpha_1$ governs the leading order behavior of $q_{\rm{max}}$, while $\gamma_1$ dictates the leading order of $r_{\rm{max}}$. Note that the division of the interval $(\frac{1}{2},+\infty)$ into two subintervals is purely for the convenience of the subsequent proofs and does not indicate any intrinsic distinction between the two cases.
    \begin{table}[h!]
    \centering
    \caption{Hyperparameters in Theorem~\ref{thm::initial_stage}}
    \begin{adjustbox}{max width=\textwidth}
    \begin{tabular}{@{}>{\bfseries}lccccc@{}}
        \toprule
        & $\alpha_1$ & $\gamma_1$  & $T_{\rm{p}}$ \\
        \midrule
        $\frac{1}{2} < \alpha \le \frac{3}{2}$ & $\frac{1}{2} \alpha + \frac{1}{4}$ & $\frac{3}{2} \alpha - \frac{1}{4}$ & $\frac{2\alpha-1}{4} \log m$ \\
        $\frac{3}{2} < \alpha < \infty$& $1$ & $2$ & $(\alpha-1) \log m$ \\
        \bottomrule
    \end{tabular}
    \end{adjustbox}
    \captionsetup{justification=centering}
    \label{tab::hyper}
\end{table}
\end{rmk}
\begin{rmk}
    It should be noted that the existence of $\log m$ term is due the Gaussian initialization. This initialization method produced a relative factor of the order $\log m$, which hinders us from obtaining further results through particle estimation. Fortunately, if we turn to a fine estimate, we can obtain further results.
\end{rmk}

To prove the theorem, we first estimate $\sum_{k=1}^m (w_k^i)^2$ for $i = 2,\dots, d$ in Lemma~\ref{lem::growth}. This quantity represents the contribution of weights in non-condensed directions.

\begin{prop}[Estimate of weights in non-condensed directions at initial plateau stage]
\label{prop::initial_control}
    Let $\alpha_1,\gamma_1$ be defined as in Theorem~\ref{thm::initial_stage}.
    Given $\delta \in (0,1)$, for sufficiently large $m$, with probability at least $1 - \delta$ over the choice of initial parameter $\vtheta(T_0)$, for $0 \le t \le \min \{ T_{\rm{p}}, T_1 \}$ and $i=2,\dots,d$, we have
    \begin{equation}
    \label{equ::ini_higher-order}
         \left( \sum_{k=1}^m (w_k^i)^2  \right)^{\frac{1}{2}} \le C \max \left\{ \frac{1}{m^{\frac{2\alpha-1}{2}}}, \frac{(\log m)^{5}}{m^{\frac{5}{2}}} \right\}. 
    \end{equation}
\end{prop}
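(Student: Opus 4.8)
The plan is to apply the Growth Lemma (Lemma~\ref{lem::growth}) with parameters tuned to the initial plateau stage, and then bound the initial condition $\left(\sum_{k=1}^m (w_k^i(0))^2\right)^{1/2}$ using concentration of Gaussians. First, I would observe that on the interval $[0, \min\{T_{\rm{p}}, T_1\}]$ the definition of $T_1$ in Eq.~\eqref{defi...T_1} forces $q_{\rm{max}}(t) \le \frac{1}{m^{\alpha_1}}\log m$. So with $\beta = \alpha_1$ and $\gamma = 1$ in the notation of Lemma~\ref{lem::growth}, the hypothesis $\frac{1}{m^{\alpha_1}}\log m \le 1$ holds for large $m$ (since $\alpha_1 > \frac12$), and the lemma gives
\begin{equation*}
    \left( \sum_{k=1}^m (w_k^i(t))^2 \right)^{1/2} \le \left( \sum_{k=1}^m (w_k^i(0))^2 \right)^{1/2} + C \max\left\{ \frac{\log m}{m^{3\alpha_1 - \frac12}}, \frac{(\log m)^5}{m^{5\alpha_1 - \frac32}} \right\} (t - 0).
\end{equation*}
Since $t \le T_{\rm{p}} = \Theta(\log m)$, the second summand is at most $C \max\{ (\log m)^2 / m^{3\alpha_1 - 1/2}, (\log m)^6 / m^{5\alpha_1 - 3/2}\}$ up to the $\log m$ factor from $T_{\rm p}$; substituting the explicit values of $\alpha_1$ from Table~\ref{tab::hyper} in each of the two $\alpha$-regimes and simplifying shows this term is dominated by (or comparable to) the right-hand side of Eq.~\eqref{equ::ini_higher-order}. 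For instance, when $\frac12 < \alpha \le \frac32$ we have $\alpha_1 = \frac{\alpha}{2} + \frac14$, so $3\alpha_1 - \frac12 = \frac{3\alpha}{2} + \frac14 > \frac{2\alpha-1}{2}$, and one checks the exponents line up so that the growth contribution is $o\!\left(\max\{ m^{-\frac{2\alpha-1}{2}}, (\log m)^5 m^{-5/2}\}\right)$ or at worst of the same order.

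Second, I would control the initial value $\left(\sum_{k=1}^m (w_k^i(0))^2\right)^{1/2}$. Each $w_k^i(0) \sim N(0, m^{-2\alpha})$, so $m^{2\alpha}\sum_{k=1}^m (w_k^i(0))^2$ is a chi-squared random variable with $m$ degrees of freedom; by a standard $\chi^2$ concentration bound (Laurent–Massart), with probability at least $1 - \delta/(d-1)$ we have $\sum_{k=1}^m (w_k^i(0))^2 \le m^{-2\alpha}(m + C\sqrt{m \log(d/\delta)} + C\log(d/\delta))$, hence $\left(\sum_{k=1}^m (w_k^i(0))^2\right)^{1/2} \le C m^{1/2 - \alpha} = C m^{-\frac{2\alpha - 1}{2}}$ for large $m$. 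Taking a union bound over $i = 2,\dots,d$ gives the estimate simultaneously for all non-condensed directions with probability at least $1-\delta$. (Presumably this is exactly the content of the ``Lemma~\ref{lem...init_est}'' referenced in the remark, so I can cite that directly.)

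Combining the two bounds, for all $0 \le t \le \min\{T_{\rm p}, T_1\}$ and all $i \in [2:d]$,
\begin{equation*}
    \left( \sum_{k=1}^m (w_k^i(t))^2 \right)^{1/2} \le C m^{-\frac{2\alpha-1}{2}} + C \max\left\{ \frac{(\log m)^2}{m^{3\alpha_1 - \frac12}}, \frac{(\log m)^6}{m^{5\alpha_1 - \frac32}} \right\} \le C \max\left\{ \frac{1}{m^{\frac{2\alpha-1}{2}}}, \frac{(\log m)^5}{m^{5/2}} \right\},
\end{equation*}
where the last inequality is the regime-by-regime exponent comparison described above (the $(\log m)^5 / m^{5/2}$ term is the binding one only in the large-$\alpha$ regime, where $\alpha_1 = 1$ and $5\alpha_1 - \frac32 = \frac72$, so the growth term $\frac{(\log m)^6}{m^{7/2}}$ is absorbed; in that regime $m^{-\frac{2\alpha-1}{2}}$ decays faster than $m^{-5/2}$ once $\alpha > 3$, which is why the $\max$ appears). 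This is precisely Eq.~\eqref{equ::ini_higher-order}.

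The main obstacle I anticipate is purely bookkeeping: verifying that for each of the two $\alpha$-subintervals in Table~\ref{tab::hyper}, the growth-term exponents $3\alpha_1 - \frac12$ and $5\alpha_1 - \frac32$ (after absorbing the extra $\log m$ from $T_{\rm p} = \Theta(\log m)$) are large enough that the growth contribution does not exceed the claimed bound $\max\{m^{-(2\alpha-1)/2}, (\log m)^5 m^{-5/2}\}$. The choice of $(\alpha_1, \gamma_1)$ in the theorem is presumably reverse-engineered so that this works out with room to spare, but one must check all four exponent inequalities (two terms $\times$ two regimes) carefully, including the boundary cases $\alpha = \frac32$ and the crossover $\alpha = 3$ where which term in the $\max$ dominates switches. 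There is no conceptual difficulty — the only inputs are Lemma~\ref{lem::growth} and Gaussian/chi-squared concentration — but the constants and exponent arithmetic require care.
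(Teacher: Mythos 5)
Your proof follows essentially the same route as the paper's: apply the Growth Lemma with $\beta=\alpha_1$, $\gamma=1$ on $[0,\min\{T_{\rm p},T_1\}]$, bound the initial value $\bigl(\sum_{k}(w_k^i(0))^2\bigr)^{1/2}\le C m^{-(2\alpha-1)/2}$ by Gaussian square-sum concentration (the paper invokes Proposition~\ref{prop::ULB_of_ini_param} rather than Laurent--Massart, but the content is identical), and verify the exponent arithmetic in each $\alpha$-regime. The only quibble is that the first term produced by the Growth Lemma should carry $(\log m)^{3}$ (hence $(\log m)^{4}$ after multiplying by $T_{\rm p}=\Theta(\log m)$) rather than $\log m$, but since the polynomial gap $3\alpha_1-\tfrac12>\tfrac{2\alpha-1}{2}$ is strict, this slip does not affect the conclusion.
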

\begin{proof}
We just prove the result for $\frac{1}{2} < \alpha \le \frac{3}{2}$ and the other case is similar. By the growth lemma (Lemma~\ref{lem::growth}) and Proposition~\ref{prop::ULB_of_ini_param}, 
\begin{align*}
    \left( \sum_{k=1}^m (w_k^i (t))^2 \right)^\frac{1}{2} &\le  \left( \sum_{k=1}^m (w_k^i (0))^2 \right)^\frac{1}{2} +  C \max \left\{ \frac{(\log m)^3}{m^{3 \alpha_1-\frac{1}{2} }}, \frac{(\log m)^{5}}{m^{5 \alpha_1 -\frac{3}{2}}}\right\} t \\ 
    & \le \frac{C}{m^{\frac{2\alpha-1}{2}}} + C  \frac{1}{m^{\frac{3}{2}\alpha+\frac{1}{4}}} (\log m)^4. \\
\end{align*}
Since $\frac{1}{2} < \alpha \le \frac{3}{2}$, $C  \frac{1}{m^{\frac{3}{2}\alpha+\frac{1}{4}}} (\log m)^4 \le \frac{1}{m^{\frac{2\alpha-1}{2}}}$, the inequality~\eqref{equ::ini_higher-order} holds.
\end{proof}
\begin{rmk}
By comparing Eqs.~\eqref{equ::init_w_k^i} and \eqref{equ::ini_higher-order}, we observe that Proposition \ref{prop::initial_control} effectively demonstrates that each $w_k^i$ shares an equal portion of $\left( \sum_{k=1}^m (w_k^i)^2 \right)$, with each $w_k^i$ scaled by a factor of approximate $\frac{1}{\sqrt{m}}$. In other words, Theorem \ref{thm::initial_stage} derives an $L^{\infty}$ bound from an $L^2$ control.
\end{rmk}

Now we are ready to prove the first part of Theorem~\ref{thm::initial_stage}.
\begin{proof}[Proof of Theorem~\ref{thm::initial_stage} (Part 1)]
The proof of the theorem can be divided into two parts. First, we use Gronwall's inequality to establish a standard error estimate. Then, we apply proof by contradiction to obtain neuron-wise control during the initial plateau stage.
\begin{enumerate}[label = (\roman*).,wide, labelwidth=!,labelindent=0pt]
\item We measure the difference between the linearized dynamics~\eqref{equ::initial_stage} and the original dynamics~\eqref{equ::whole_dynamics}. Define $F_k$ and $G_k^i$ as follows
    \begin{align*}
        F_k & = -\sum_{i=1}^d w_k^i \left( \sum_{l=1}^m a_l w_l^i \right) + f_k,\\
        G_k^i & = - a_k \left( \sum_{l=1}^m a_l w_l^i\right) + g_k^i.
    \end{align*}
By definitions of $F_k$ and $G_k^i$, the following inequalities hold 
\begin{align*}
    \left|a_k(t)-\tilde{a}_k(t)\right| &\le \int_0^t\left|\vw_k^{1}-\tilde{\vw}_k^{1}\right| \diff{s}+\int_0^t |F_k| \diff{s}, \\
    \left|\vw_k^{1}(t)-\tilde{\vw}_k^{1}(t)\right| &\le \int_0^t\left|a_k-\tilde{a}_k\right| \diff{s} +\int_0^t\left|G_k^1\right| \diff{s}.
\end{align*}
For $\alpha_1 > \frac{1}{2}$, it is clear that $|F_k| \le  C m q_{\rm{max}}^3$ and $|G_k^i| \le  C m q_{\rm{max}}^3$.
Recalling the definition of $r_{\rm{max}}$, it follows that $r_{\rm{max}}$ satisfies the following inequality
\begin{equation*}
    r_{\max }(t) \leq \int_0^t r_{\max }(s) \diff{s}+\int_0^t C m q_{\max }^3(s) \diff{s}.
\end{equation*}
Using explicit solution given by Eq.~\eqref{equ::analytical_sol} and initialization estimate from Eq.~\eqref{equ::initial_estimate}, we obtain the following for $t \le T_1$
\begin{align*}
    r_{\text {max}} & \le \int_0^t r_{\text {max}} \diff{s}+\int_0^t C m \frac{1}{m^{\alpha_1}} \log m q_{\text {max}}^2 \diff{s} \\
    & \leq \int_0^t r_{\text {max}} \diff{s}+\int_0^t C m \frac{1}{m^{\alpha_1}} \log m \left(\tilde{q}_{\text {max}}^2+r_{\text {max}}^2\right) \diff{s}  \\
    & \le \left(1+\frac{C m \log m }{m^{\alpha_1+\gamma_1}}\right) \int_0^t r_{\text {max}} \diff{s}+ \int_0^t C m \frac{1}{m^{\alpha_1}} \log m \left(\frac{1}{m^\alpha} \sqrt{2 \log \frac{2 m d}{\delta}} e^t\right)^2 \diff{s},
\end{align*}
where the first inequality follows from the definition of $T_1$, and the third inequality is derived from the explicit solution~\eqref{equ::analytical_sol}. Moreover, $\frac{C m \log m}{m^{\alpha_1 + \gamma_1}}\le 1$ for $m \ge C^{\frac{2}{2\alpha-1}}$. As a result, we have
\begin{equation*}
    r_{\rm{max}} \le 2 \int_0^t r_{\rm{max}} \diff{s} + \int_0^t C m \frac{1}{m^{\alpha_1}} \frac{1}{m^{2\alpha}} (\log m)^2 \E^{2t} \diff{s} .
\end{equation*}
By the Gronwall inequality, we have
\begin{equation}
    r_{\rm{max}} \le C m^{1- \alpha_1 - 2 \alpha} (\log m)^2 \E^{2t}.
\end{equation}

\item  Without loss of generality, we provide a detailed proof for $\frac{1}{2} < \alpha \le \frac{3}{2}$, as the proof for the other case is similar. We claim that $T_{\rm{p}} = \frac{2\alpha-1}{4} \log m \le T_1$. That is, the following inequality holds for $0<t\leq T_{\rm{p}}$
\begin{align*}
    q_{\rm{max}}(t) & \le \frac{1}{m^{\frac{1}{2} \alpha + \frac{1}{4}}} \log m, \\
    r_{\rm{max}}(t) & \le \frac{1}{m^{\frac{3}{2} \alpha - \frac{1}{4}}} (\log m)^3. 
\end{align*}
Now we prove by contradiction. If the assertion does not hold, then $T_1 < \frac{2\alpha-1}{4} \log m$. Hence
\begin{align*}
    r_{\rm{max}}(T_1) &\le C m^{1- \alpha_1 - 2 \alpha} (\log m)^2 \E^{2T_1} \\
    & < \frac{1}{m^{\alpha_1+\alpha-\frac{1}{2}}} (\log m)^3.
\end{align*}
Since $\gamma_1 = \alpha_1 + \alpha - \frac{1}{2}$, we have 
\begin{equation}
\label{equ::ini_rmax}
r_{\rm{max}} < \frac{1}{m^{\gamma_1}} (\log m)^3.    
\end{equation}
From the solution to the linearized dynamics~\eqref{equ::analytical_sol}, it follows that 
\begin{align}
\max_{k \in [m]} \{ a_k,w_k^1 \} 
     &\le \max_{k \in [m]} \{ \tilde{a_k},\tilde{w}_k^1 \} + r_{\rm{max}}\nonumber  \\
     &\le \frac{1}{m^\alpha} m^{\frac{2\alpha-1}{4}} \sqrt{2 \log \frac{2md}{\delta}} +\frac{1}{m^{\alpha_1+\alpha-\frac{1}{2}}} (\log m)^3\nonumber \\
     & < \frac{1}{m^{\frac{1}{4}+\frac{1}{2} \alpha }} \log m.\label{equ::init_a_k_w_k^1}
\end{align}
Since $\alpha_1 = \frac{1}{2} \alpha + \frac{1}{4}$
and for $w_k^i$ where $i \in [2:d]$ and $k \in [m]$, we apply Lemmas~\ref{lem::higher_term} and \ref{lem::growth} to obtain
\begin{align}
    \left| w_k^i (t)\right| &\le \left| w_k^i (0)\right| + \int_0^t  |a_k|\left( \sum_{l=1}^m |a_l| |w_l^i|\right) \diff{s} + \int_0^t |g_k^i| \diff{s}\nonumber \\
    & \le \left| w_k^i (0) \right| + \int_0^t  |a_k| \left(\sum_{l=1}^m a_l^2\right)^\frac{1}{2} \left(\sum_{l=1}^m (w_l^i)^2\right)^\frac{1}{2} \diff{s} + C (\log m)^4 \frac{1}{m^{\frac{3}{2}\alpha+ \frac{3}{4}}}\nonumber \\
    & \le \frac{1}{m^{\alpha}} \sqrt{2 \log \frac{2md}{\delta}} + C \frac{1}{m^{\frac{2 \alpha-1 }{2}}} \log m \frac{1}{m^{\frac{1}{2} \alpha + \frac{1}{4}}} \frac{1}{m^{\frac{1}{2} \alpha -\frac{1}{4}}}\nonumber\\
    & < \frac{1}{m^{\alpha}} \log m,\label{equ::init_w_k^i}
\end{align}
when $m$ is sufficiently large. This is just Eq.~\eqref{equ::initial_w_k^i} in Theorem \ref{thm::initial_stage}.  Consequently, from Eqs.~\eqref{equ::ini_rmax}, \eqref{equ::init_a_k_w_k^1}, and \eqref{equ::init_w_k^i}, we obtain 
\begin{align*} 
q_{\rm{max}} (T_1) &< \frac{1}{m^{\alpha_1}} \log m,\\
r_{\rm{max}} (T_1) &< \frac{1}{m^{\gamma_1}} (\log m)^3. 
\end{align*}
This contradicts the definition of $T_1$, thereby proving our claim.
\end{enumerate}
\end{proof}

Combined with some estimates on the initialization, we give the following fine estimate which will be used in the following sections.
\begin{prop}[Fine estimate at the initial plateau stage]
\label{prop::initial_global_estimate}
Let $\alpha_1, \gamma_1$, and $T_{\rm{p}}$ be as defined above. For $\delta \in (0,1)$ and sufficiently large $m$, with probability at least $1 - \delta$ over the choice of initial parameter $\vtheta(T_0)$, we have the following asymptotic equality at $t = T_{\rm{p}}$:
    \begin{equation} 
    \sum_{k=1}^m a_k w_k^1, \sum_{k=1}^m a_k^2 + (w_k^1)^2 = \Theta\left(\frac{1}{m^{2\alpha_1 -1}}\right).
    \end{equation}
Moreover,
    \begin{equation}
        \frac{\sum_{k=1}^m a_k w_k^1}{\sum_{k=1}^m a_k^2 + (w_k^1)^2} = \frac{1}{2} + \varepsilon_1,
    \end{equation}
    where $\varepsilon_1 = O(\frac{1}{m^{2\alpha_1 -1}} (\log m)^4)$.
\end{prop}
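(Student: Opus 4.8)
The plan is to reduce the statement to the explicit linearized solution~\eqref{equ::analytical_sol}, where both quantities are computable in closed form, and then to transfer the conclusion to the true trajectory of~\eqref{equ::whole_dynamics} using the error bound $r_{\rm{max}}(T_{\rm{p}})\lesssim \frac{1}{m^{\gamma_1}}(\log m)^{3}$ already established in Theorem~\ref{thm::initial_stage}(i) (which also guarantees $T_{\rm{p}}\le T_1$, so that this bound holds on all of $[0,T_{\rm{p}}]$). First I would pass to the modes diagonalizing~\eqref{equ::initial_stage}: set $u_k=\tfrac12\big(a_k(0)+w_k^1(0)\big)$ and $v_k=\tfrac12\big(a_k(0)-w_k^1(0)\big)$, so that $\tilde a_k(t)=u_k\E^{t}+v_k\E^{-t}$ and $\tilde w_k^1(t)=u_k\E^{t}-v_k\E^{-t}$. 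By the Gaussian initialization, $(u_k)_k$ and $(v_k)_k$ are two independent families of i.i.d.\ $N\big(0,\tfrac{1}{2m^{2\alpha}}\big)$ variables, and a one-line computation gives, with $U:=\sum_{k=1}^m u_k^2$ and $V:=\sum_{k=1}^m v_k^2$,
\[
\sum_{k=1}^m \tilde a_k\tilde w_k^1 = \E^{2t}U-\E^{-2t}V,\qquad \sum_{k=1}^m\big(\tilde a_k^2+(\tilde w_k^1)^2\big) = 2\E^{2t}U+2\E^{-2t}V .
\]

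Next I would record the needed concentration facts for the Gaussian initialization (the same estimates already invoked in the proof of Proposition~\ref{prop::initial_control}): with probability at least $1-\delta$ one has $U=\tfrac{1}{2m^{2\alpha-1}}(1+o(1))$, $V=\tfrac{1}{2m^{2\alpha-1}}(1+o(1))$, hence $U,V=\Theta(m^{1-2\alpha})$, together with $\sum_{k=1}^m(|u_k|+|v_k|)\lesssim m^{1-\alpha}\sqrt{\log m}$. Using $\E^{2T_{\rm{p}}}=m^{(2\alpha-1)/2}$ when $\frac12<\alpha\le\frac32$ and $\E^{2T_{\rm{p}}}=m^{2\alpha-2}$ when $\alpha>\frac32$, one has $\E^{2T_{\rm{p}}}U=\Theta(m^{1-2\alpha_1})$ in both regimes, while $\E^{-2T_{\rm{p}}}V=\E^{-4T_{\rm{p}}}\cdot\Theta\big(\E^{2T_{\rm{p}}}U\big)$ with $\E^{-4T_{\rm{p}}}$ polynomially small in $m$. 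Therefore, at $t=T_{\rm{p}}$,
\[
\sum_{k=1}^m\tilde a_k\tilde w_k^1 = \E^{2T_{\rm{p}}}U\,(1-o(1)) = \Theta\!\Big(\tfrac{1}{m^{2\alpha_1-1}}\Big),\qquad \sum_{k=1}^m\big(\tilde a_k^2+(\tilde w_k^1)^2\big) = 2\E^{2T_{\rm{p}}}U\,(1+o(1)) = \Theta\!\Big(\tfrac{1}{m^{2\alpha_1-1}}\Big),
\]
and their ratio is
\[
\frac{\E^{2T_{\rm{p}}}U-\E^{-2T_{\rm{p}}}V}{2\big(\E^{2T_{\rm{p}}}U+\E^{-2T_{\rm{p}}}V\big)} = \frac12-\frac{\E^{-2T_{\rm{p}}}V}{\E^{2T_{\rm{p}}}U+\E^{-2T_{\rm{p}}}V} = \frac12+O\!\big(\E^{-4T_{\rm{p}}}\big) = \frac12+O\!\big(m^{-2(2\alpha_1-1)}\big).
\]

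It then remains to pass from $\tilde a_k,\tilde w_k^1$ to $a_k,w_k^1$. Writing $a_k=\tilde a_k+\Delta a_k$, $w_k^1=\tilde w_k^1+\Delta w_k^1$ with $|\Delta a_k|,|\Delta w_k^1|\le r_{\rm{max}}(T_{\rm{p}})$ on $[0,T_{\rm{p}}]$, I would expand
\[
\sum_k a_k w_k^1 = \sum_k\tilde a_k\tilde w_k^1 + E_1,\qquad \sum_k\big(a_k^2+(w_k^1)^2\big) = \sum_k\big(\tilde a_k^2+(\tilde w_k^1)^2\big) + E_2,
\]
with $|E_1|,|E_2|\lesssim r_{\rm{max}}(T_{\rm{p}})\sum_k\big(|\tilde a_k(T_{\rm{p}})|+|\tilde w_k^1(T_{\rm{p}})|\big)+m\,r_{\rm{max}}(T_{\rm{p}})^2$. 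Since $|\tilde a_k(t)|,|\tilde w_k^1(t)|\le \E^{t}(|u_k|+|v_k|)$, the $\ell^1$-mass satisfies $\sum_k\big(|\tilde a_k|+|\tilde w_k^1|\big)\big|_{T_{\rm{p}}}\lesssim \E^{T_{\rm{p}}}m^{1-\alpha}\sqrt{\log m}\lesssim m^{1-\alpha_1}\sqrt{\log m}$; combining this with $r_{\rm{max}}(T_{\rm{p}})\lesssim m^{-\gamma_1}(\log m)^{3}$ and the identity $\gamma_1=\alpha_1+\alpha-\tfrac12$ (resp.\ $\alpha_1=1,\gamma_1=2$) gives, after the exponent bookkeeping and noting that $m\,r_{\rm{max}}(T_{\rm{p}})^2$ is of strictly lower order, $|E_1|,|E_2|\lesssim m^{1-2\alpha}(\log m)^{4}$, i.e.\ a relative error $O\big(\tfrac{(\log m)^4}{m^{2\alpha_1-1}}\big)$ against the common leading scale $\Theta(m^{1-2\alpha_1})$ of both quantities. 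Hence $\sum_k a_k w_k^1$ and $\sum_k(a_k^2+(w_k^1)^2)$ remain $\Theta\!\big(\tfrac{1}{m^{2\alpha_1-1}}\big)$, and
\[
\frac{\sum_k a_k w_k^1}{\sum_k\big(a_k^2+(w_k^1)^2\big)} = \frac{\sum_k\tilde a_k\tilde w_k^1}{\sum_k\big(\tilde a_k^2+(\tilde w_k^1)^2\big)}\Big(1+O\big(\tfrac{(\log m)^4}{m^{2\alpha_1-1}}\big)\Big) = \frac12+O\!\Big(\tfrac{(\log m)^4}{m^{2\alpha_1-1}}\Big),
\]
where the $O(m^{-2(2\alpha_1-1)})$ coming from the linearized ratio is absorbed since $\alpha_1>\tfrac12$; this identifies $\varepsilon_1$. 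The regime $\alpha>\tfrac32$ is handled identically with $(\alpha_1,\gamma_1,T_{\rm{p}})=(1,2,(\alpha-1)\log m)$.

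The main obstacle is the exponent bookkeeping in the last step: one must check that at the specific milestone $T_{\rm{p}}$ the perturbation $r_{\rm{max}}(T_{\rm{p}})$ is small enough, relative to the exponentially growing $\ell^1$-mass $\sum_k(|\tilde a_k|+|\tilde w_k^1|)$ of the linearized trajectory, that $E_1$ and $E_2$ are genuinely of lower order than the leading $\Theta(m^{1-2\alpha_1})$; this is exactly where the calibrated choices $\gamma_1=\alpha_1+\alpha-\tfrac12$ and $T_{\rm{p}}=\tfrac{2\alpha-1}{4}\log m$ are used, and it is also what makes the $\Theta$-assertion (not merely an $O$-bound) survive the perturbation. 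A secondary, routine point is to arrange that the Gaussian-sum estimates for $U$, $V$ and $\sum_k(|u_k|+|v_k|)$ hold on the same $1-\delta$ event as Theorem~\ref{thm::initial_stage}, which follows from Chebyshev (or Bernstein) plus a union bound over the finitely many quantities involved.
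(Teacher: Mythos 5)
Your proposal is correct and follows essentially the same route as the paper: express both quantities through the explicit linearized solution (your $u_k,v_k$ modes are just a rewriting of the paper's $\frac14\E^{2t}(a_k(0)\pm w_k^1(0))^2$ terms), apply the Bernstein-type concentration bounds of Proposition~\ref{prop::ULB_of_ini_param} to the initialization, and transfer to the true trajectory via the $r_{\rm{max}}(T_{\rm{p}})\lesssim m^{-\gamma_1}(\log m)^3$ bound from Theorem~\ref{thm::initial_stage}. Your exponent bookkeeping matches the paper's (which bounds the cross term by $m\,q_{\rm{max}}r_{\rm{max}}$ rather than by $r_{\rm{max}}$ times the $\ell^1$-mass, giving the same order $m^{1-2\alpha}(\log m)^4$), and you in fact spell out the $K'$ estimate and the ratio computation more explicitly than the paper does.
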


\begin{proof}
    We prove the case of $\frac{1}{2} < \alpha \le \frac{3}{2}$. The proof for the other case is similar. Based on the explicit solution~\eqref{equ::analytical_sol}, it follows that
    \begin{align*}
        \sum_{k=1}^m a_k w_k^1 
        &=  \sum_{k=1}^m \tilde{a}_k \tilde{w}_k^1 +\sum_{k=1}^m a_k w_k^1 -\sum_{k=1}^m \tilde{a}_k \tilde{w}_k^1 \\
        &= \sum_{k=1}^m \frac{1}{4} \E^{2t} \left(a_k (0) + w_k^1 (0) \right)^2 - \frac{1}{4} \E^{-2t}  \left(a_k (0) - w_k^1 (0) \right)^2 \\
        &\quad+ \sum_{k=1}^m a_k (w_k^1 -\tilde{w}_k^1) + \sum_{k=1}^m \tilde{w}_k^1 (a_k - \tilde{a}_k).
    \end{align*}
    According to Proposition~\ref{prop::ULB_of_ini_param}, which gives the initialization estimate, when $T_{\rm{p}} = \frac{2\alpha-1}{4} \log m$, the following inequalities hold at $t=0$ 
    \begin{align*}
        \frac{1}{8} \frac{1}{m^{\frac{2\alpha-1}{2}}} &\le  \sum_{k=1}^m \frac{1}{4} \E^{2t} \left(a_k (0) + w_k^1 (0) \right)^2 \le \frac{3}{8} \frac{1}{m^{\frac{2\alpha-1}{2}}}, \\
        \frac{1}{8} \frac{1}{m^{\frac{3(2\alpha-1)}{2}}} &\le  \sum_{k=1}^m \frac{1}{4} \E^{-2t} \left(a_k (0) - w_k^1 (0) \right)^2 \le \frac{3}{8} \frac{1}{m^{\frac{3(2\alpha-1)}{2}}}.
    \end{align*}
    Thanks to Theorem \ref{thm::initial_stage}, we have
    \begin{equation*}
        \left| \sum_{k=1}^m a_k (w_k^1 -\tilde{w}_k^1) \right|
        \le m q_{\rm{max}} r_{\rm{max}}
        \le \frac{1}{m^{2\alpha-1}} (\log m)^4
    \end{equation*}
    because $q_{\rm{max}}(t) \le \frac{1}{m^{\frac{1}{2} \alpha + \frac{1}{4}}} \log m$ and $r_{\rm{max}}(t) \le \frac{1}{m^{\frac{3}{2} \alpha - \frac{1}{4}}} (\log m)^3$. As a result, we have $\sum_{k=1}^m a_k w_k^1 = \Theta(\frac{1}{m^{\frac{2\alpha-1}{2}}}) = \Theta(\frac{1}{m^{2\alpha_1-1}})$. 
\end{proof}
Combining neuron-wise estimate and fine estimate at the initial plateau stage, we finish the proof of second part of Theorem \ref{thm::initial_stage} by direct calculation as follows.
\begin{proof}[Proof of Theorem~\ref{thm::initial_stage} (Part 2)]
    Using Taylor's expansion, we obtain that 
    \begin{equation}
        R (\vtheta) = \frac{1}{2} \int f^2(\vx) \rho(\vx) \diff{\vx} + \frac{1}{2} \sum_{i=1}^d \left(\sum_{k=1}^m a_k w_k^i \right)^2 - \sum_{k=1}^m a_k w_k^1 + \tilde{R}, 
    \end{equation}
    where 
\begin{align*}
     \tilde{R}  &= \int_{\sR^d} \left( \sum_{k=1}^m a_k \vw_k^{\T} \vx \right) \left( \frac{1}{3!} \sum_{k=1}^m \sigma^{(3)} (\xi_k(\vx)) a_k (\vw_k^{\T} \vx )^3 \right) \rho(\vx) \diff{\vx} \\ 
        &\quad + \frac{1}{2} \int_{\sR^d} \left( \frac{1}{3!} \sum_{k=1}^m \sigma^{(3)} (\xi_k(\vx)) a_k (\vw_k^{\T} \vx )^3 \right)^2 \rho(\vx) \diff{\vx} \\
        &\quad -\int_{\sR^d} \left( \frac{1}{3!} \sum_{k=1}^m \sigma^{(3)} (\xi_k(\vx)) a_k (\vw_k^{\T} \vx )^3 \right) f(\vx) \diff{\vx},
\end{align*}
Through direct calculation, the following inequality is valid when $0 \le K \le 2$ and $m q_{\rm{max}} \le 1$:
\begin{equation}
    \tilde{R} \le C m q_{\rm{max}}^4.
\end{equation}
We thus have
\begin{align}
    \left|R (\vtheta(T_{\rm{p}})) - R (\vtheta(0))\right| &\le C \left| K(T_{\rm{fp},\rm{\alpha}})-K(0)\right| + \sum_{i=2}^d \left( \sum_{k=1}^m (w_k^i (T_{\rm{fp},\rm{\alpha}}))^2 + (w_k^i (0))^2\right)\nonumber \\
    &\quad +  \left| \tilde{R}(T_{\rm{fp},\rm{\alpha}}) - \tilde{R} (0) \right|\nonumber \\
    &\le C  \frac{1}{m^{2\alpha_1 -1}} +C \max \left\{ \frac{1}{m^{2\alpha-1}}, \frac{(\log m)^{5}}{m^{5}} \right\} +  C m \frac{ (\log m)^{4}}{m^{4 \alpha_1}}\nonumber \\
    & \le C \frac{1}{m^{2\alpha_1 -1}}\label{equ::rela_change1}
\end{align}
for $\alpha_1 > \frac{1}{2}$ and $m$ is sufficiently large. By combining the above inequality with the definition of $T_{\rm{p}}$, we finish the proof.
\end{proof}

As a direct consequence of Theorem~\ref{thm::initial_stage} and Proposition~\ref{prop::initial_global_estimate}, we obtain a characterization of the Wasserstein distance and the phenomenon of initial condensation. The proof is omitted for brevity.

\begin{cor}[Amplitude distribution of weights are similar at initial plateau]
\label{cor::similar_fp}
    Let $\alpha_1, \gamma_1$, and $T_{\rm{p}}$ be as defined previously. For $\delta \in (0,1)$, with probability at least $1 - \delta$ over the choice of initial parameter $\vtheta(T_0)$, an upper bound for the relative Wasserstein distance at $t = T_{\rm{p}}$ is given by:
    \begin{equation}\label{eq...initial_wass}
        0 \le \frac{W_2 (\rho_{|a|}, \rho_{\norm{\vw}})}{W_2 (\rho_{|a|})} \le C \frac{1}{m^{\gamma_1 - \alpha_1}} (\log m)^{3} + C \max \left\{ \frac{1}{m^{\alpha-\alpha_1}}, \frac{(\log m)^{5}}{m^{3-\alpha_1}} \right\}.
    \end{equation}
\end{cor}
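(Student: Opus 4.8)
The plan is to estimate the numerator $W_2(\rho_{|a|},\rho_{\norm{\vw}})$ from above and the normalizing factor $W_2(\rho_{|a|})=\norm{\rho_{|a|}}_2=\bigl(\tfrac1m\sum_{k=1}^m a_k^2\bigr)^{1/2}$ from below, both evaluated at $t=T_{\rm{p}}$, and then divide; the lower bound $0\le W_2(\rho_{|a|},\rho_{\norm{\vw}})/W_2(\rho_{|a|})$ is immediate from nonnegativity of the Wasserstein distance, so all the work is in the upper bound. Throughout I would condition on the event (of probability $\ge 1-\delta$) on which Theorem~\ref{thm::initial_stage}, Proposition~\ref{prop::initial_control}, Proposition~\ref{prop::initial_global_estimate}, and the initialization estimates of Proposition~\ref{prop::ULB_of_ini_param} all hold.

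For the numerator I would start from the elementary coupling bound recorded in the preliminaries, $W_2(\rho_{|a|},\rho_{\norm{\vw}})\le\bigl(\tfrac1m\sum_{k=1}^m(|a_k|-\norm{\vw_k})^2\bigr)^{1/2}$, and split $|a_k|-\norm{\vw_k}=(|a_k|-|w_k^1|)+(|w_k^1|-\norm{\vw_k})$. The first piece is controlled by $\bigl||a_k|-|w_k^1|\bigr|\le|a_k-w_k^1|\le|\tilde a_k-\tilde w_k^1|+r_{\rm{max}}$, and the explicit solution~\eqref{equ::analytical_sol} gives $\tilde a_k(t)-\tilde w_k^1(t)=(a_k(0)-w_k^1(0))\E^{-t}$; summing over $k$, using the initialization estimate and the bound $r_{\rm{max}}(T_{\rm{p}})\le m^{-\gamma_1}(\log m)^3$ from Theorem~\ref{thm::initial_stage}, one obtains a contribution of order $m^{-\gamma_1}(\log m)^3$ (the exponentially decaying linear mode turns out to be of the same or smaller order, since $\alpha_1+\alpha-\tfrac12=\gamma_1$). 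For the second piece, $\norm{\vw_k}^2-(w_k^1)^2=\sum_{i=2}^d(w_k^i)^2$ forces $0\le\norm{\vw_k}-|w_k^1|\le\bigl(\sum_{i=2}^d(w_k^i)^2\bigr)^{1/2}$, so $\tfrac1m\sum_k(|w_k^1|-\norm{\vw_k})^2\le\tfrac1m\sum_{i=2}^d\sum_k(w_k^i)^2$, and Proposition~\ref{prop::initial_control} bounds the right-hand side by $C\max\{m^{-2\alpha},(\log m)^{10}m^{-6}\}$. Combining the two pieces and taking a square root yields $W_2(\rho_{|a|},\rho_{\norm{\vw}})\le Cm^{-\gamma_1}(\log m)^3+C\max\{m^{-\alpha},(\log m)^5 m^{-3}\}$.

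For the denominator I would show $\tfrac1m\sum_k a_k^2=\Theta(m^{-2\alpha_1})$, i.e. $W_2(\rho_{|a|})=\Theta(m^{-\alpha_1})$. Proposition~\ref{prop::initial_global_estimate} already gives $\sum_k(a_k^2+(w_k^1)^2)=\Theta(m^{-(2\alpha_1-1)})$; to peel off $\sum_k a_k^2$ alone I would use $\tilde a_k(t)^2-\tilde w_k^1(t)^2=a_k(0)^2-w_k^1(0)^2$, whose sum is a mean-zero initialization fluctuation of order $m^{1/2-2\alpha}\ll m^{-(2\alpha_1-1)}$, while passing from $\tilde a_k$ to $a_k$ costs only $O(m\,q_{\rm{max}}r_{\rm{max}})$, again negligible by Theorem~\ref{thm::initial_stage}. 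Hence $\sum_k\tilde a_k^2$, and therefore $\sum_k a_k^2$, is $\Theta(m^{-(2\alpha_1-1)})$. Dividing the numerator bound by $c\,m^{-\alpha_1}$ produces exactly $Cm^{-(\gamma_1-\alpha_1)}(\log m)^3+C\max\{m^{-(\alpha-\alpha_1)},(\log m)^5 m^{-(3-\alpha_1)}\}$, which is~\eqref{eq...initial_wass}. The case $\alpha>\tfrac32$ is handled identically with $(\alpha_1,\gamma_1)=(1,2)$ and $T_{\rm{p}}=(\alpha-1)\log m$.

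The step I expect to need the most care is the lower bound on $W_2(\rho_{|a|})$: one must make sure the growing mode $\tfrac14\E^{2t}(a_k(0)+w_k^1(0))^2$ inside $\tilde a_k^2$ is not cancelled by the sign-indefinite cross term $\tfrac12(a_k(0)^2-w_k^1(0)^2)$ or by the $a_k$–$\tilde a_k$ discrepancy — this is precisely where the $\Theta$-estimate of Proposition~\ref{prop::initial_global_estimate} together with the $q_{\rm{max}}$ and $r_{\rm{max}}$ bounds of Theorem~\ref{thm::initial_stage} are indispensable. Everything else is a routine combination of those results with the explicit linearized solution~\eqref{equ::analytical_sol}.
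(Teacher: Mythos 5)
Your argument is correct and is precisely the ``direct consequence of Theorem~\ref{thm::initial_stage} and Proposition~\ref{prop::initial_global_estimate}'' that the paper invokes while omitting the proof: the coupling bound plus the split $|a_k|-\norm{\vw_k}=(|a_k|-|w_k^1|)+(|w_k^1|-\norm{\vw_k})$, with $r_{\rm{max}}$ and Proposition~\ref{prop::initial_control} controlling the two pieces and Proposition~\ref{prop::initial_global_estimate} giving the $\Theta(m^{-\alpha_1})$ lower bound on the denominator, reproduces the stated exponents exactly (including the identity $\alpha+\tfrac{2\alpha-1}{4}=\gamma_1$ that makes the decaying linear mode harmless). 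No gaps.
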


\begin{cor}[Condensation at initial plateau (Initial condensation)]
\label{cor::initial_condense}
    Let $\alpha_1, \gamma_1$, and $T_{\rm{p}}$ be as defined previously. For $\delta \in (0,1)$, with probability at least $1 - \delta$ over the choice of initial parameter $\vtheta(T_0)$, the following inequality holds at $t = T_{\rm{p}}$:
    \begin{equation}
    \label{eq...initial_condense}
        \frac{\sum_{k=1}^m \norm{\vw_k}_2^2}{\sum_{k=1}^m (w_k^1)^2} \geq 1 - C \max \left\{ \frac{1}{m^{2\alpha - 2\alpha_1}}, \frac{(\log m)^{10}}{m^{6 - 2\alpha_1}} \right\}.
    \end{equation}
\end{cor}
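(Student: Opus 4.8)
The plan is to split the squared weight norm into its condensed and non-condensed parts,
\[
\sum_{k=1}^m \norm{\vw_k}_2^2 = \sum_{k=1}^m (w_k^1)^2 + \sum_{i=2}^d \sum_{k=1}^m (w_k^i)^2,
\]
so that the ratio in \eqref{eq...initial_condense} equals $1 + \bigl(\sum_{i=2}^d\sum_{k=1}^m (w_k^i)^2\bigr)/\bigl(\sum_{k=1}^m (w_k^1)^2\bigr)\ge 1$. The stated inequality is then immediate from \emph{any} upper bound on the correction term; obtaining the sharp bound above — equivalently, the quantitative condensation $\sum_k (w_k^1)^2 \ge (1-C\max\{\cdots\})\sum_k\norm{\vw_k}_2^2$ — reduces to bounding the numerator of the correction from above and its denominator from below, both evaluated at $t=T_{\rm{p}}$.

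For the numerator I would apply Proposition~\ref{prop::initial_control}, which is valid at $t=T_{\rm{p}}$ since $T_{\rm{p}}\le T_1$ by Theorem~\ref{thm::initial_stage}: it gives $\sum_{k=1}^m (w_k^i(T_{\rm{p}}))^2 \le C\max\{m^{-(2\alpha-1)},(\log m)^{10}m^{-5}\}$ for each $i\in[2:d]$, and summing the $d-1$ coordinates only changes the constant. For the denominator I would read the lower bound off Proposition~\ref{prop::initial_global_estimate}: there $K'=\sum_k (a_k^2+(w_k^1)^2)=\Theta(m^{-(2\alpha_1-1)})$ and $K/K'=\tfrac12+\varepsilon_1$ with $\varepsilon_1=O(m^{-(2\alpha_1-1)}(\log m)^4)$, hence $\sum_k (a_k-w_k^1)^2 = K'-2K = -2\varepsilon_1 K' = o(K')$; feeding this into Cauchy--Schwarz to control $\sum_k((w_k^1)^2-a_k^2)=\sum_k(w_k^1-a_k)(w_k^1+a_k)$ forces $\sum_{k=1}^m (w_k^1)^2 = \tfrac12 K'(1+o(1)) = \Theta(m^{-(2\alpha_1-1)})$. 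The same conclusion can alternatively be read directly off the explicit solution~\eqref{equ::analytical_sol}, whose $\E^{t}$ mode dominates at time $T_{\rm{p}}$, with the gap between $\sum_k(w_k^1)^2$ and $\sum_k(\tilde{w}_k^1)^2$ negligible by the same $m\,q_{\rm{max}}\,r_{\rm{max}}$ estimate used in the proof of Proposition~\ref{prop::initial_global_estimate}.

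Dividing the two bounds, the correction term is at most $C\max\{m^{(2\alpha_1-1)-(2\alpha-1)},(\log m)^{10}m^{(2\alpha_1-1)-5}\}=C\max\{m^{-(2\alpha-2\alpha_1)},(\log m)^{10}m^{-(6-2\alpha_1)}\}$, which yields \eqref{eq...initial_condense} at once and, after taking reciprocals and using $1/(1+x)\ge 1-x$, the complementary condensation statement. The probabilistic bookkeeping is routine: intersect the high-probability events from Theorem~\ref{thm::initial_stage}, Proposition~\ref{prop::initial_control}, and Proposition~\ref{prop::initial_global_estimate}, then rescale $\delta$ by a constant through a union bound. The one step that needs genuine thought rather than substitution is the denominator estimate: it does not suffice that $K'$ has order $m^{-(2\alpha_1-1)}$, one must show that the $w^1$-component carries a \emph{constant fraction} of $K'$, which is precisely the content of the fact that $a_k\approx w_k^1$ at $t=T_{\rm{p}}$ (equivalently $K/K'\to\tfrac12$).
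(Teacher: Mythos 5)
Your proposal is correct and follows exactly the route the paper intends for this corollary (the paper omits the proof, noting it is a direct consequence of Theorem~\ref{thm::initial_stage} together with Propositions~\ref{prop::initial_control} and \ref{prop::initial_global_estimate}): bound the non-condensed mass $\sum_{i=2}^d\sum_k (w_k^i)^2$ by Proposition~\ref{prop::initial_control} squared, show $\sum_k (w_k^1)^2=\Theta(m^{-(2\alpha_1-1)})$ via $K/K'\to\tfrac12$ from Proposition~\ref{prop::initial_global_estimate}, and divide, which reproduces the exponents $2\alpha-2\alpha_1$ and $6-2\alpha_1$ exactly. Your observation that the inequality as literally printed is vacuous (the ratio is always $\ge 1$) and that the substantive statement is the reciprocal bound $\sum_k (w_k^1)^2 \ge \bigl(1-C\max\{\cdot\}\bigr)\sum_k\norm{\vw_k}_2^2$ is well taken, and your proof delivers that substantive version at the stated rate.
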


Given that $\alpha_1 < \min\{\gamma_1, \alpha, 3\}$, , it follows from inequality~\eqref{eq...initial_condense} that initial condensation occurs at $t = T_{\rm{p}}$, as $w_k^i$, for $i \in [2,d]$, remains in a lazy stage. Moreover, the distributions of $|a_k|$ and $\norm{\vw_k}$ converge to an identical form for sufficiently large $m$, as indicated by Eq.~\eqref{eq...initial_wass}.

\section{Initial Descent Stage}\label{sec...id}
In this section, we characterize the dynamics of the initial descent stage. We show that the behavior in this stage is primarily governed by Eq.~\eqref{equ::K_dynamics}, which guides the parameters toward convergence near a specified critical point. Specifically, we provide a theoretical estimate for the descent time and substantiate these findings through experimental validation.
\subsection{Descent time}
For brevity and readability, we focus on $\frac{1}{2} < \alpha \le \frac{3}{2}$ in this subsection, as the proof and results for $\alpha > \frac{3}{2}$ are analogous.
As discussed above, we aim to show that the dynamics will intuitively converge to a critical point corresponding to $K = 1$. Recall the definitions of $K$ and $K'$ are
\begin{align*}
    K & = \sum_{k=1}^m a_k w_k^1,  \\
    K'& = \sum_{k=1}^m  \left( a_k^2 + (w_k^1)^2 \right).
\end{align*}
For given small $\beta>0$, we define $T_{\rm{d}}$ to characterize the milestone for the initial descent stage,
\begin{equation}\label{defi...T_d}
    T_{\rm{d}} = \inf \left\{ T_{\rm{p}} \le t ~ | ~  K \leq 0, K'\leq 0, 1-K \leq \beta \right\}.
\end{equation}
The main theorem of this section can be stated as follows.
\begin{thm}[Initial descent stage]
\label{thm::descent}
For $\delta \in (0,1)$ and sufficiently large $m$, with probability at least $1 - \delta$ over the choice of initial parameter $\vtheta(T_0)$, for $\alpha > \frac{1}{2}$ and any given small $\beta, \delta' > 0$, the following holds:
\begin{enumerate}[label = (\roman*).,wide, labelwidth=!,labelindent=0pt]
\item 
    The asymptotic behavior of $T_{\rm{d}}$ is characterized by:
    \begin{equation}
    \label{eq:descent_time}
    \frac{2\alpha-1}{2+\delta'} \le \frac{T_{\rm{d}}}{\log m} \le \frac{2\alpha-1}{2-\delta'}
    \end{equation}
\item 
    The relative change in loss between the initial and descent stages differs significantly. Specifically, we have:
    \begin{equation}
      \frac{\left|R (\vtheta(T_{\rm{d}})) - R (\vtheta(T_{\rm{p}})) \right|}{\left| R (\vtheta(T_{\rm{p}})) - R (\vtheta(0)) \right|} \cdot \frac{\left|T_{\rm{p}} - T_0 \right|}{\left|T_{\rm{d}} - T_{\rm{p}}\right|} \ge C m^{2\alpha_1-1}.
    \end{equation}
\item 
    Furthermore, neuron-wise control holds at $t=T_{\rm{d}}$:
    \begin{align*}
    q_{\rm{max}} (T_{\rm{d}}) &\le \frac{1}{m^{\frac{1}{2}-(2\alpha-1)\delta'}}, \\
    \max_{k \in m, i \in [2:d]} \left|w_k^i (T_{\rm{d}})\right| &\le \frac{1}{m^{\alpha-2(2\alpha-1)\delta'}}.
    \end{align*}
\end{enumerate}
\end{thm}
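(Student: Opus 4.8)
The plan is to track the macroscopic quantities $K=\sum_k a_k w_k^1$ and $K'=\sum_k(a_k^2+(w_k^1)^2)$ along the original dynamics~\eqref{equ::whole_dynamics}, taking as initial data at $t=T_{\rm{p}}$ the fine estimates from Theorem~\ref{thm::initial_stage} and Proposition~\ref{prop::initial_global_estimate}: $K(T_{\rm{p}}),K'(T_{\rm{p}})=\Theta(m^{-(2\alpha_1-1)})$ with $K(T_{\rm{p}})/K'(T_{\rm{p}})=\tfrac12+\varepsilon_1$, $\varepsilon_1\le 0$ of size $O(m^{-(2\alpha_1-1)}(\log m)^4)$, together with $q_{\rm{max}}(T_{\rm{p}})\lesssim m^{-\alpha_1}\log m$ and $\sum_k(w_k^i(T_{\rm{p}}))^2\lesssim m^{-(2\alpha-1)}$ for $i\ge2$; I argue for $\tfrac12<\alpha\le\tfrac32$, where $2\alpha_1-1=\tfrac{2\alpha-1}{2}$, the case $\alpha>\tfrac32$ being analogous. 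Differentiating along~\eqref{equ::whole_dynamics} gives
\begin{align*}
  \dot K &= (1-K)K' + E_K, \qquad \dot{K'} = 4K(1-K) + E_{K'}, \\
  \tfrac{\D}{\D t}(K'-2K) &= -2(1-K)(K'-2K) + E,
\end{align*}
where $K'-2K=\sum_k(a_k-w_k^1)^2\ge0$ is the alignment defect and $E_K,E_{K'},E$ collect the terms built from the non-condensed directions $w_k^i$ ($i\ge2$), from the cross-sums $S_i:=\sum_l a_l w_l^i$, and from the higher-order terms $f_k,g_k^i$; note that the leading part of $\dot K$ becomes the logistic field $2K(1-K)$ of Eq.~\eqref{equ::K_dynamics} once $K'\approx 2K$, and that the defect contracts at rate $-2(1-K)<0$, so the near-equality $a_k\approx w_k^1$ inherited from the initial plateau is preserved and sharpened.

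I would then run a bootstrap on $[T_{\rm{p}},\hat T]$, with $\hat T$ the first time one of the following fails: $q_{\rm{max}}(t)\le m^{-1/2+(2\alpha-1)\delta'}$; $K'(t)-2K(t)\le C m^{-(2\alpha_1-1)}(\log m)^4\,K'(t)$; $\sum_k(w_k^i(t))^2\le C m^{-c_0}$ for a fixed $c_0=c_0(\alpha)>0$ and all $i\ge2$; and $1-K(t)\ge\tfrac{\beta}{2}$. Under these hypotheses, Lemma~\ref{lem::higher_term} gives $|f_k|,\norm{\vg_k}\lesssim q_{\rm{max}}^3$, Lemma~\ref{lem::growth} keeps $\sum_k(w_k^i)^2$ within an $O((\log m)^{O(1)})$ factor of its value at $T_{\rm{p}}$, and Cauchy--Schwarz bounds the cross-sums ($|S_i|\lesssim(K')^{1/2}(\sum_k(w_k^i)^2)^{1/2}$, etc.); substituting, every error term in the displayed system is at most $m^{-1}(\log m)^{O(1)}$ times the main term $K(1-K)$ (the ratio being largest near $\hat T$, where $mq_{\rm{max}}^4\lesssim m^{-1}(\log m)^{O(1)}$ while $K(1-K)=\Theta(1)$). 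Hence $\tfrac{\D}{\D t}\log\tfrac{K}{1-K}=2\bigl(1+O(m^{-1}(\log m)^{O(1)})\bigr)$ and $K'/2K\to1$ monotonically; integrating from $T_{\rm{p}}$ and using $K(T_{\rm{p}})=\Theta(m^{-(2\alpha-1)/2})$, the time at which $1-K$ first drops to $\beta$ is $T_{\rm{p}}+\tfrac{2\alpha-1}{4}\log m+O_\beta(1)=\tfrac{2\alpha-1}{2}\log m+o(\log m)$, which occurs before $K$ or $K'$ could turn around and therefore equals $T_{\rm{d}}$; this yields part~(i) once $m$ is large enough that $\tfrac{2\alpha-1}{2}+o(1)$ lies in $[\tfrac{2\alpha-1}{2+\delta'},\tfrac{2\alpha-1}{2-\delta'}]$. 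For part~(iii) I would use the explicit near-solution $a_k(t)\pm w_k^1(t)=(a_k(T_{\rm{p}})\pm w_k^1(T_{\rm{p}}))e^{\pm\tau(t)}(1+o(1))$ with $\tau(t)=\int_{T_{\rm{p}}}^t(1-K)\,\D s=\tfrac12\log\tfrac{K(t)}{K(T_{\rm{p}})}+o(1)\le\tfrac{2\alpha-1}{4}\log m+O_\beta(1)$, giving $q_{\rm{max}}(T_{\rm{d}})\lesssim m^{-\alpha_1}(\log m)\,m^{(2\alpha-1)/4}=m^{-1/2}\log m$, comfortably inside the bootstrap bound; integrating $\dot w_k^i=-a_kS_i+g_k^i$ and feeding in the $L^2$ control on $w_k^i$ (and, when $\alpha$ is larger, a concentration estimate on $S_i$ exploiting the initialization independence of the non-condensed directions) then gives $\max_{k,\,i\ge2}|w_k^i(T_{\rm{d}})|\lesssim m^{-\alpha}(\log m)^{O(1)}$. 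Since all refined bounds sit strictly inside the bootstrap hypotheses, $\hat T$ can only be triggered by $1-K$ reaching $\tfrac{\beta}{2}$, which is after $T_{\rm{d}}$; hence $\hat T\ge T_{\rm{d}}$ and part~(iii) follows.

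Part~(ii) uses the loss expansion from the proof of Theorem~\ref{thm::initial_stage} (Part~2), $R(\vtheta)=\tfrac12\int f^2\rho+\tfrac12(K-1)^2-\tfrac12+\tfrac12\sum_{i\ge2}S_i^2+\tilde R$. On $[T_{\rm{p}},T_{\rm{d}}]$ the terms $\sum_{i\ge2}S_i^2$ and $\tilde R\lesssim mq_{\rm{max}}^4\lesssim m^{-1}(\log m)^{O(1)}$ are negligible, so $R(\vtheta(T_{\rm{p}}))=\tfrac12\int f^2\rho-K(T_{\rm{p}})+o(K(T_{\rm{p}}))$ while $R(\vtheta(T_{\rm{d}}))=\tfrac12\int f^2\rho-\tfrac12+\tfrac12\beta^2+o(1)$; hence $|R(\vtheta(T_{\rm{d}}))-R(\vtheta(T_{\rm{p}}))|=\tfrac12-\tfrac12\beta^2+o(1)=\Theta(1)$, and combined with $|R(\vtheta(T_{\rm{p}}))-R(\vtheta(0))|\lesssim m^{-(2\alpha_1-1)}$ from Theorem~\ref{thm::initial_stage}(ii) and $\tfrac{|T_{\rm{p}}-T_0|}{|T_{\rm{d}}-T_{\rm{p}}|}\to1$ this gives the ratio $\ge C m^{2\alpha_1-1}$.

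The main obstacle is the coupling of the bootstrap quantities: the errors in the $K$-equation are negligible only because $q_{\rm{max}}$ stays $O(m^{-1/2+(2\alpha-1)\delta'})$, but the growth of $q_{\rm{max}}$ is driven by the time change $\tau=\int(1-K)\,\D s$, which is itself read off from the $K$-equation, while Lemma~\ref{lem::growth} needs the $q_{\rm{max}}$ bound and feeds back into the estimate of $S_i$ and hence of $\dot K$. Choosing the exponents and the $\delta'$-slack so that every refined bound lands strictly inside its bootstrap hypothesis — in particular absorbing the persistent $(\log m)^{O(1)}$ factors from the Gaussian initialization, and securing the sharp $m^{-\alpha}$ rate for $w_k^i$ in~(iii), which for larger $\alpha$ forces a concentration argument rather than a bare Cauchy--Schwarz bound on $S_i$ — is the delicate part; restricting to $\tfrac12<\alpha\le\tfrac32$ keeps these estimates manageable.
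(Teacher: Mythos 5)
Your proposal is correct and follows essentially the same architecture as the paper's proof: control of the non-condensed directions $\sum_k(w_k^i)^2$ via the growth lemma, reduction of $\frac{\D K}{\D t}$ to the logistic field $2K(1-K)$ once $K'\approx 2K$, a stopping-time/bootstrap argument keeping $q_{\rm{max}}\le m^{-1/2+(2\alpha-1)\delta'}$ (the paper's $T_2$) closed by a Gronwall estimate on $|a_k|+|w_k^1|$, and the Taylor expansion of $R$ for part (ii). The one place you genuinely diverge is in establishing $K/K'\to\tfrac12$: the paper proves the approximate conservation law $|4K\frac{\D K}{\D t}-K'\frac{\D K'}{\D t}|\le C\varepsilon_2(K')^2$ and integrates it (Propositions~\ref{prop::conservation_law} and \ref{prop::1/2}), whereas you observe that the alignment defect $K'-2K=\sum_k(a_k-w_k^1)^2\ge 0$ satisfies $\frac{\D}{\D t}(K'-2K)=-2(1-K)(K'-2K)+E$ and therefore contracts; this is a slightly more direct route to the same estimate and additionally exploits the sign of the defect, which the paper's argument does not use. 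Your quoted error ratio $m^{-1}(\log m)^{O(1)}$ against the main term $K(1-K)$ is optimistic near $T_{\rm{p}}$ (a crude bound only gives $m^{-1/2+O(\delta')}$ there, comparable to the paper's $\varepsilon_2$), but since only $o(1)$ is needed this does not affect the conclusion.
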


\begin{rmk}
    Theorem~\ref{thm::descent} actually shows that two-layer neural network with different scales of parameters will converge to a ``critical point'' at fixed time after proper normalization.
\end{rmk}

To prove above result, we need some prior knowledge to control weights in non-condensed directions. We define 
\begin{equation}\label{defi...T_2}
    T_2 = \inf \{ t ~| ~ q_{\rm{max}} \ge \frac{1}{m^{\frac{1}{2} - \alpha_2}}\},
\end{equation}
where $\alpha_2 = (2\alpha-1) \delta'$. 
For sufficiently large $m$, we can choose $\delta'$ to be arbitrarily small. Consequently, we expect a critical bound for $q_{\rm{max}}$, as the term involving $\delta'$ approaches its limiting behavior, imposing a sharp constraint on the growth of $q_{\rm{max}}$.

Similar to Proposition~\ref{prop::initial_control}, we control $\sum_{k=1}^m (w_k^i)^2$ for $i=2,\dots,d$ by Lemma~\ref{lem::growth}. 

\begin{prop}[Estimate of weights in non-condensed directions at initial descent stage]
\label{prop::descent_control}
For $\delta \in (0,1)$ and sufficiently large $m$, with probability at least $1 - \delta$ over the choice of initial parameter $\vtheta(T_0)$, for $T_{\rm{p}} \le t \le \min \{ T_{\rm{d}}, T_2, \frac{2\alpha-1}{2-\delta'} \log m \}$ and $i=2,\dots,d$, $\sum_{k=1}^m (w_k^i)^2$ will be controlled by following inequality.
    \begin{equation}
        \left( \sum_{k=1}^m (w_k^i)^2  \right)^{\frac{1}{2}} \le C \max \left\{ \frac{1}{m^{\frac{2\alpha-1}{2}}} , \frac{\log m}{m^{1-5 \alpha_2}} \right\}. 
    \end{equation}
\end{prop}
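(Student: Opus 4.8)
\textbf{Overall strategy.} The plan is to invoke the Growth Lemma (Lemma~\ref{lem::growth}) exactly as in Proposition~\ref{prop::initial_control}, but now starting the clock at $t = T_{\rm{p}}$ rather than $t = 0$, and feeding in the bound on $q_{\rm{max}}$ provided by the definition of $T_2$. On the interval $T_{\rm{p}} \le t \le \min\{T_{\rm{d}}, T_2, \frac{2\alpha-1}{2-\delta'}\log m\}$ we have, by the definition~\eqref{defi...T_2} of $T_2$, the pointwise bound $q_{\rm{max}} \le \frac{1}{m^{1/2-\alpha_2}}$, so in the notation of Lemma~\ref{lem::growth} we may take $\beta = \frac12 - \alpha_2$ and $\gamma$ absorbing the (absent) logarithmic factor, i.e.\ effectively $\gamma = 0$ (or $\gamma=1$ with a crude bound). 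Since $\alpha_2 = (2\alpha-1)\delta'$ is as small as we like for large $m$, the hypothesis $\frac{1}{m^\beta}(\log m)^\gamma \le 1$ of Lemma~\ref{lem::growth} holds. This yields, for each $i \in [2:d]$,
\begin{equation*}
    \left(\sum_{k=1}^m (w_k^i(t))^2\right)^{\frac12} \le \left(\sum_{k=1}^m (w_k^i(T_{\rm{p}}))^2\right)^{\frac12} + C \max\left\{\frac{1}{m^{3(\frac12-\alpha_2)-\frac12}}, \frac{1}{m^{5(\frac12-\alpha_2)-\frac32}}\right\}(t-T_{\rm{p}}).
\end{equation*}

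\textbf{Estimating the two contributions.} For the first term I would use Proposition~\ref{prop::initial_control} (valid at $t = T_{\rm{p}}$ by Part~1 of Theorem~\ref{thm::initial_stage}, which gives $T_{\rm{p}} \le T_1$), so that $\left(\sum_k (w_k^i(T_{\rm{p}}))^2\right)^{1/2} \le C\max\{m^{-\frac{2\alpha-1}{2}}, (\log m)^5 m^{-5/2}\}$. For the second term I would bound the elapsed time crudely by $t - T_{\rm{p}} \le \frac{2\alpha-1}{2-\delta'}\log m \le C\log m$, and simplify the exponents: $3(\frac12-\alpha_2) - \frac12 = 1 - 3\alpha_2$ and $5(\frac12-\alpha_2) - \frac32 = 1 - 5\alpha_2$, the latter being the smaller exponent (dominant term). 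Hence the accumulated drift is $O\!\left(\frac{\log m}{m^{1-5\alpha_2}}\right)$. Combining, one of the two stated terms $\frac{1}{m^{(2\alpha-1)/2}}$ or $\frac{\log m}{m^{1-5\alpha_2}}$ dominates depending on the size of $\alpha$ relative to the small parameter $\delta'$; in either case the sum is bounded by $C\max\{m^{-\frac{2\alpha-1}{2}}, \log m\, m^{-(1-5\alpha_2)}\}$, which is the claim. The probability $1-\delta$ and the ``sufficiently large $m$'' enter only through the initialization estimate (Proposition~\ref{prop::ULB_of_ini_param}) used inside Proposition~\ref{prop::initial_control}, so no new exceptional event is introduced.

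\textbf{Main obstacle.} The routine parts are the exponent bookkeeping; the one genuinely delicate point is that Lemma~\ref{lem::growth} requires the $q_{\rm{max}}$ bound to hold on the \emph{whole} interval up to the stopping time, and here that bound is exactly the defining property of $T_2$, so the argument is only consistent as long as we stay below $T_2$ — which is why $T_2$ appears in the $\min$ in the statement. I would make sure the Growth Lemma is applied with $t_0 = T_{\rm{p}}$ and the interval terminated at $\min\{T_{\rm{d}}, T_2, \frac{2\alpha-1}{2-\delta'}\log m\}$, so that the hypothesis ``$t \le T_{\rm{test}}$'' of Lemma~\ref{lem::growth} is met with $T_{\rm{test}} \supseteq [T_{\rm{p}}, T_2)$. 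The later closing of the bootstrap — showing $T_2$ is in fact no smaller than $\frac{2\alpha-1}{2-\delta'}\log m$ — is deferred to the proof of Theorem~\ref{thm::descent} itself and is not needed here.
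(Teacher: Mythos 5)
Your proposal is correct and follows essentially the same route as the paper's own proof: apply the Growth Lemma on $[T_{\rm{p}}, t]$ with the $q_{\rm{max}}$ bound coming from the definition of $T_2$, control the value at $T_{\rm{p}}$ via Proposition~\ref{prop::initial_control}, and bound the elapsed time by $C\log m$ so the drift term is $O(\log m / m^{1-5\alpha_2})$. The exponent bookkeeping and the identification of $1-5\alpha_2$ as the dominant exponent match the paper exactly.
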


\begin{proof}
    By growth Lemma~\ref{lem::growth}, 
    \begin{equation*}
        \left( \sum_{k=1}^m \left( w_k^i (t) \right)^2  \right)^{\frac{1}{2}} \le \left( \sum_{k=1}^m \left( w_k^i (T_{\rm{p}}) \right)^2 \right)^{\frac{1}{2}} + C \max \left\{ \frac{1}{m^{3(\frac{1}{2}-\alpha_2)-\frac{1}{2} }}, \frac{1}{m^{5(\frac{1}{2}-\alpha_2) -\frac{3}{2}}}\right\} (t-T_{\rm{p}}).  
    \end{equation*}
    By Proposition \ref{prop::initial_control} and the chosen of small $\delta'$
    \begin{equation*}
        \left( \sum_{k=1}^m \left( w_k^i (t) \right)^2 \right)^{\frac{1}{2}} \le  C \max \left\{ \frac{1}{m^{\frac{2\alpha-1}{2}}}, \frac{(\log m)^{5}}{m^{\frac{5}{2}}} \right\} + C \frac{1}{m^{1-5\alpha_2}} (t-T_{\rm{p}}).  
    \end{equation*}
As a result, for $t \le \min \{ T_{\rm{d}}, T_2, \frac{2\alpha-1}{2-\delta'} \log m \}$, we have
\begin{equation*}
    \left( \sum_{k=1}^m (w_k^i)^2 \right)^{\frac{1}{2}} \le C \max \left\{ \frac{1}{m^{\frac{2\alpha-1}{2}}} , \frac{\log m}{m^{1-5 \alpha_2}} \right\}.
\end{equation*}
\end{proof}

Using the estimate of weights in non-condensed directions, we derive the following monotonicity proposition. This proposition allows us to easily estimate the rate between $K$ and $K'$.
\begin{prop}[Monotonicity of $K$ and $K'$]
\label{prop::mono}
    For $T_{\rm{p}} \le t \le \min \{ T_{\rm{d}}, T_2, \frac{2\alpha-1}{2-\delta'} \log m \}$, $K$ and $K'$ increase monotonically.
\end{prop}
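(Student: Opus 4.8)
The plan is to differentiate $K$ and $K'$ along the full dynamics~\eqref{equ::whole_dynamics}, to peel off the scalar ``leading'' terms $(1-K)K'$ and $4K(1-K)$, and to show that everything else — the contributions of the non-condensed directions $w_k^i$ ($i\ge 2$) and of the higher-order remainders $f_k,g_k^1$ — is negligible on the interval in question, so that both derivatives remain strictly positive there. A direct computation gives
\begin{align*}
\frac{\D K}{\D t} &= (1-K)K' - \sum_{i=2}^d \Big( \sum_{l=1}^m a_l w_l^i \Big)\Big( \sum_{k=1}^m w_k^1 w_k^i \Big) + \sum_{k=1}^m \big( w_k^1 f_k + a_k g_k^1 \big), \\
\frac{\D K'}{\D t} &= 4K(1-K) - 2\sum_{i=2}^d \Big( \sum_{l=1}^m a_l w_l^i \Big)^2 + 2\sum_{k=1}^m \big( a_k f_k + w_k^1 g_k^1 \big).
\end{align*}
For the error groups I would use Cauchy--Schwarz together with Proposition~\ref{prop::descent_control}: since $\sum_l a_l^2,\ \sum_k (w_k^1)^2 \le K'$, each summand over $i\ge 2$ is at most $K'\cdot(d-1)\max_{2\le i\le d}\sum_{k=1}^m (w_k^i)^2 = K'\cdot o(1)$; and by Lemma~\ref{lem::higher_term} with $q_{\rm{max}}\le m^{-(1/2-\alpha_2)}$ (valid for $t\le T_2$), the higher-order sums are $O\big(m q_{\rm{max}}^4 + m^2 q_{\rm{max}}^6\big) = O\big(m^{-1+6\alpha_2}\big)$.

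Second, I would close the estimate by a standard continuation (bootstrap) argument. Write $T^{**}=\min\{T_{\rm{d}}, T_2, \tfrac{2\alpha-1}{2-\delta'}\log m\}$ and let $\tau$ be the largest time in $[T_{\rm{p}},T^{**}]$ such that $K$ and $K'$ are nondecreasing on $[T_{\rm{p}},\tau]$; by Proposition~\ref{prop::initial_global_estimate} (which gives $K(T_{\rm{p}}),K'(T_{\rm{p}})=\Theta(m^{-(2\alpha_1-1)})>0$ and $1-K(T_{\rm{p}})\to 1$) and continuity of the flow, $\tau>T_{\rm{p}}$. Suppose for contradiction $\tau<T^{**}$. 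Then on $[T_{\rm{p}},\tau]$ monotonicity forces $K(t)\ge K(T_{\rm{p}})=\Theta(m^{-(2\alpha_1-1)})$ and $K'(t)\ge K'(T_{\rm{p}})=\Theta(m^{-(2\alpha_1-1)})$, while $t<T_{\rm{d}}$ gives $1-K>\beta$ and $K<1$, and $t\le T_2$ gives the $q_{\rm{max}}$ bound (and the crude $K'\le m q_{\rm{max}}^2$). Inserting these into the two identities, and using that for $\tfrac12<\alpha\le\tfrac32$ one has $2\alpha_1-1=\alpha-\tfrac12<1$, so that $m^{-(2\alpha_1-1)}$ dominates $m^{-1+6\alpha_2}$ once $\delta'$ (hence $\alpha_2=(2\alpha-1)\delta'$) is small, I obtain $\frac{\D K}{\D t}(\tau)\ge \tfrac12\beta K'(\tau)>0$ and $\frac{\D K'}{\D t}(\tau)\ge 2\beta K(\tau)>0$ for $m$ large. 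By continuity both derivatives stay positive on a two-sided neighborhood of $\tau$, so $K$ and $K'$ are strictly increasing there, contradicting the maximality of $\tau$. Hence $\tau=T^{**}$, which is the assertion.

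The step I expect to be the main obstacle is controlling the additive higher-order error $O(m^{-1+6\alpha_2})$ against the leading part of $\frac{\D K}{\D t}$: at $t=T_{\rm{p}}$ the leading term $(1-K)K'$ is only $\Theta(m^{-(2\alpha_1-1)})$, and this is the smallest it is anywhere on the interval, so the whole scheme rests on the exponent inequality $2\alpha_1-1<1$ holding with enough room to absorb $6\alpha_2$. This is exactly why the subsection is restricted to $\tfrac12<\alpha\le\tfrac32$ and why $\delta'$ must be taken sufficiently small; a cleaner route would exploit the structure of $f_k$ (whose leading part is a multiple of $(w_k^1)^3$, so that $\sum_k w_k^1 f_k = O(q_{\rm{max}}^2 K')$ is itself only a multiplicative $o(1)$ perturbation of $K'$), but the crude bound from Lemma~\ref{lem::higher_term} is already enough once the bootstrap is in place. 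A secondary subtlety is that the lower bounds $K(t)\ge K(T_{\rm{p}})$, $K'(t)\ge K'(T_{\rm{p}})$ used inside the argument are themselves consequences of the monotonicity being proved, which is precisely what the continuation step is designed to handle.
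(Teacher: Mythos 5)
Your overall route is the same as the paper's: differentiate $K$ and $K'$ along \eqref{equ::whole_dynamics}, isolate the leading terms $K'(1-K)$ and $4K(1-K)$, control the non-condensed directions by Cauchy--Schwarz plus Proposition~\ref{prop::descent_control} (giving a multiplicative $K'\cdot o(1)$ perturbation), control the Taylor remainders by Lemma~\ref{lem::higher_term}, and close with a continuation argument in which the monotonicity of $K'$ preserves the very smallness conditions that imply monotonicity. The paper states the continuation step more loosely than you do; making the stopping time $\tau$ explicit is a genuine improvement in rigor.

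There is, however, one concrete step that does not close as written: your bound $\bigl|\sum_k w_k^1 f_k\bigr|+\bigl|\sum_k a_k g_k^1\bigr| = O\bigl(m q_{\rm{max}}^4 + m^2 q_{\rm{max}}^6\bigr) = O\bigl(m^{-1+6\alpha_2}\bigr)$ is an \emph{absolute} error, and it must be absorbed by the leading term at its smallest value, namely $\beta K'(T_{\rm{p}}) = \Theta\bigl(\beta\, m^{-(2\alpha-1)/2}\bigr)$. You correctly identify that this requires $1-6\alpha_2 > \tfrac{2\alpha-1}{2} = 2\alpha_1-1$, but then assert the crude bound "is already enough"; it is not at the endpoint $\alpha=\tfrac32$ (which this subsection covers), where $\tfrac{2\alpha-1}{2}=1$ and $m^{-1+6\alpha_2}\gg m^{-1}$ for every $\alpha_2>0$, nor uniformly near it. The paper's proof avoids this by applying Cauchy--Schwarz in $k$ to these sums as well, bounding them by $\sqrt{K'}\,\bigl(\sum_k f_k^2\bigr)^{1/2} \lesssim \sqrt{K'}\, m^{-(1-5\alpha_2)}$; the extra factor $\sqrt{K'(T_{\rm{p}})} = m^{-(2\alpha-1)/4}$ at the worst time makes the remainder a relative perturbation $\varepsilon_2 K'\beta$ with $\varepsilon_2 = \Theta\bigl(\max\{m^{-1/4}, m^{-(2\alpha-1)/2}\}\bigr)\to 0$ for all $\alpha\in(\tfrac12,\tfrac32]$, and this weighted form is also what propagates cleanly under the bootstrap since $K'$ is increasing. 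Replace your $\ell^\infty$-type estimate of the remainder sums with this $\sqrt{K'}$-weighted one and the rest of your argument goes through unchanged.
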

\begin{proof}
    We first examine the dynamics of $K$. According to the definition of $K$, we have 
    \begin{align*}
    \frac{\D K}{\D t} &= \sum_{k=1}^m \left(\frac{\D a_k}{\D t} w_k^1 + a_k \frac{\D w_k^1}{\D t} \right) \\
                      &= \sum_{k=1}^m \left( w_k^1 - \left( \sum_{l=1}^m a_l w_l^1 \right) w_k^1 - \sum_{i=2}^d \left( \sum_{l=1}^m a_l w_l^i \right) w_k^i + f_k \right) w_k^1 \\
                      &\quad + \sum_{k=1}^m \left( a_k - \left( \sum_{l=1}^m a_l w_l^1 \right) a_k +g_k^1\right) a_k.
    \end{align*}
    Recalling the definition of $K'$, we obtain
    \begin{align*}
        \frac{\D K}{\D t} = K'(1-K) - \sum_{i=2}^d \left( \sum_{k=1}^m a_k w_k^i \right) \left( \sum_{k=1}^m w_k^1 w_k^i \right) + \sum_{k=1}^m w_k^1 f_k + \sum_{k=1}^m a_k g_k^1.
    \end{align*}
     According to the Cauchy-Schwartz inequality, it is evident that
     \begin{align*}
          \left| \left( \sum_{k=1}^m a_k w_k^i \right) \left( \sum_{k=1}^m w_k^1 w_k^i \right) \right| & \le K' \left(\sum_{k=1}^m (w_k^i)^2 \right), \\
          \left| \sum_{k=1}^m w_k^1 f_k\right| + \left| \sum_{k=1}^m a_k g_k^1\right| & \le \sqrt{K'} \left( \sum_{k=1}^m f_k^2 \right)^{\frac{1}{2}}.
     \end{align*}
     Utilizing Proposition~\ref{prop::descent_control} and estimate of weights in non-condensed directions given in Eq.~\eqref{equ::higher-order_control}, we conclude
    \begin{align*}
        \left| \left( \sum_{k=1}^m a_k w_k^i \right) \left( \sum_{k=1}^m w_k^1 w_k^i \right) \right| &\le  C K' \max \left\{ \frac{1}{m^{2\alpha-1}} , \frac{(\log m)^2}{m^{2-10 \alpha_2}} \right\},\\
        \left| \sum_{k=1}^m w_k^1 f_k\right| + \left| \sum_{k=1}^m a_k g_k^1\right| &\le \sqrt{K'} \frac{1}{m^{1-5 \alpha_2}}.
    \end{align*}
    By setting $\varepsilon_2 = \Theta(\max \left\{\frac{1}{m^{\frac{1}{4}}} ,\frac{1}{m^{\frac{2\alpha-1}{2}}} \right\} )$, the following inequalities are satisfied at $T_{\rm{p}}$:
    \begin{align}
           \sqrt{K'} \frac{1}{m^{1-5 \alpha_2}} & \le \varepsilon_2  K' \beta,\nonumber \\
         \max \left\{ \frac{1}{m^{2\alpha-1}} , \frac{(\log m)^2}{m^{2-10 \alpha_2}} \right\} &\le \varepsilon_2 \beta.
    \label{equ::sqrtK'}
    \end{align}

    In Theorem~\ref{thm::initial_stage}, we select different values of $T_i$ for different $\alpha$due to the technical requirement that the above inequality holds. This is why there is no essential difference between the two cases. Consequently, $K$ will be monotonically increasing for at least a certain period. And we have 
    \begin{equation}
        (1-2\varepsilon_2) K'(1-K) \le \frac{\D K}{\D t} \le (1+ 2\varepsilon_2) K'(1-K) .
    \end{equation}
    Next, we investigate the dynamics of  $K'$. When Eq.~\eqref{equ::sqrtK'} is satisfied,  it follows that 
    \begin{align*}
        \frac{\D K'}{\D t} &= 4K(1-K) - 2 \sum_{i=2}^d \left( \sum_{k=1}^m a_k w_k^i \right)^2 + \sum_{k=1}^m a_k f_k + \sum_{k=1}^m w_k^1 g_k \\ 
        & \ge 4K(1-K) - 2\varepsilon_2 K'(1-K) \\
        &\ge (1-K) \left\{ 4K(T_{\rm{p}})- 2\varepsilon_2 K'(T_{\rm{p}}) + \int_{t_0}^t (4-C\varepsilon_2) K'(1-K) \diff{s}\right\}.  \\
    \end{align*}
    As a consequence, $K'$ will remain monotonically increasing for at least a certain duration, implying that Eq.~\eqref{equ::sqrtK'} will hold until $ \min \{ T_{\rm{d}}, T_2, \frac{2\alpha-1}{2-\delta'} \log m \}$, owing to the monotonicity of $K'$.
\end{proof}

Now we want to show $\frac{K}{K'} \approx \frac{1}{2}$ until $ \min \{ T_{\rm{d}}, T_2, \frac{2\alpha-1}{2-\delta'} \log m\}$ and then get a control of $K$. The main technique we use is the conservation law. First, we have the following proposition.

\begin{prop}[Approximate conservation law]
\label{prop::conservation_law}
    For $T_{\rm{p}} \le t \le \min \{ T_{\rm{d}}, T_2, \frac{2\alpha-1}{2-\delta'} \log m \}$, we have:
    \begin{equation}
        \left| 4K \frac{\D K}{\D t} - K' \frac{\D K'}{\D t} \right| \le C \varepsilon_2 (K')^2.
    \end{equation}
\end{prop}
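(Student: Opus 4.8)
The plan is to read off, from the two scalar identities for $\frac{\D K}{\D t}$ and $\frac{\D K'}{\D t}$ derived inside the proof of Proposition~\ref{prop::mono}, that the two right-hand sides coincide at leading order up to exactly the factor that enters the combination $4K\frac{\D K}{\D t}-K'\frac{\D K'}{\D t}$. Concretely, write
\begin{equation*}
\frac{\D K}{\D t} = K'(1-K) + E_K, \qquad \frac{\D K'}{\D t} = 4K(1-K) + E_{K'},
\end{equation*}
where $E_K = -\sum_{i=2}^d\bigl(\sum_{k=1}^m a_k w_k^i\bigr)\bigl(\sum_{k=1}^m w_k^1 w_k^i\bigr) + \sum_{k=1}^m w_k^1 f_k + \sum_{k=1}^m a_k g_k^1$ collects the non-condensed-direction cross term together with the higher-order contributions, and $E_{K'}$ is the analogous residual for $K'$ (the two quadratic-in-$w_k^i$ terms plus the $f_k,g_k^1$ terms). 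Then
\begin{equation*}
4K\frac{\D K}{\D t} - K'\frac{\D K'}{\D t} = \bigl(4K K'(1-K) - 4K K'(1-K)\bigr) + 4K E_K - K' E_{K'} = 4K E_K - K' E_{K'},
\end{equation*}
so the leading terms cancel identically --- which is precisely what makes $2K^2-\tfrac12(K')^2$ approximately conserved --- and the whole problem reduces to estimating $4K E_K - K' E_{K'}$.

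The second step is to recycle the bounds already assembled in the proof of Proposition~\ref{prop::mono}, valid on the interval $T_{\rm p}\le t\le\min\{T_{\rm d},T_2,\frac{2\alpha-1}{2-\delta'}\log m\}$. By Cauchy--Schwarz and Proposition~\ref{prop::descent_control}, both $\bigl|\sum_{i=2}^d(\sum_k a_k w_k^i)(\sum_k w_k^1 w_k^i)\bigr|$ and $\bigl|\sum_{i=2}^d(\sum_k a_k w_k^i)^2\bigr|$ are bounded by $CK'\max\{m^{-(2\alpha-1)},(\log m)^2 m^{-(2-10\alpha_2)}\}\le C\varepsilon_2 K'$; and by Lemma~\ref{lem::higher_term}, the definition of $T_2$, and the inequality~\eqref{equ::sqrtK'} defining $\varepsilon_2$, each of $\bigl|\sum_k w_k^1 f_k\bigr|$, $\bigl|\sum_k a_k g_k^1\bigr|$, $\bigl|\sum_k a_k f_k\bigr|$, $\bigl|\sum_k w_k^1 g_k^1\bigr|$ is $\le C\sqrt{K'}\,m^{-(1-5\alpha_2)}\le C\varepsilon_2 K'$. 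Combining these gives $|E_K|\le C\varepsilon_2 K'$ and $|E_{K'}|\le C\varepsilon_2 K'$.

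The final step is the elementary AM--GM bound $|K|=\bigl|\sum_k a_k w_k^1\bigr|\le\tfrac12\sum_k\bigl(a_k^2+(w_k^1)^2\bigr)=\tfrac12 K'$, which controls the prefactor $4K$ multiplying $E_K$, so that
\begin{equation*}
\Bigl|4K\frac{\D K}{\D t} - K'\frac{\D K'}{\D t}\Bigr| = |4K E_K - K' E_{K'}| \le 4|K|\,|E_K| + K'\,|E_{K'}| \le C\varepsilon_2 (K')^2,
\end{equation*}
which is the claimed inequality. I do not expect a genuine obstacle: the only point requiring attention is to verify that every residual term is really $O(\varepsilon_2 K')$ --- not merely $O(\varepsilon_2\beta K')$ with an unfavourable sign, nor $O(K'm^{-c})$ failing to beat $\varepsilon_2$ --- but this is already guaranteed on the stated time interval by Proposition~\ref{prop::descent_control} and~\eqref{equ::sqrtK'}. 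The entire conceptual content of the proposition is the exact leading-order cancellation in step one; the remainder is bookkeeping with estimates already in hand.
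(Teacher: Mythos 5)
Your proposal is correct and follows essentially the same route as the paper: both exploit the exact cancellation of the leading term $4KK'(1-K)$ in the combination $4K\frac{\D K}{\D t}-K'\frac{\D K'}{\D t}$, and then bound the surviving residual terms via Cauchy--Schwarz, Proposition~\ref{prop::descent_control}, and the higher-order estimates packaged in~\eqref{equ::sqrtK'}. Your explicit $E_K$, $E_{K'}$ bookkeeping and the $|K|\le\tfrac12 K'$ step are just a cleaner write-up of the same computation.
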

\begin{proof}
    Analogous to the proof of Proposition \ref{prop::mono}, we derive 
    \begin{align*}
        K \frac{dK}{dt} &= KK'(1-K)- K \sum_{i=2}^d \left(\sum_{k=1}^m a_k w_k^i \right)  \left(\sum_{k=1}^m w_k^1 w_k^i \right) - K \left( \sum_{k=1}^m w_k^1 f_k + \sum_{k=1}^m a_k g_k \right),\\
        K' \frac{d K'}{d t}& =4KK'(1-K) - 2K' \sum_{i=2}^d \left( \sum_{k=1}^m a_k w_k^i \right)^2 - K'\left(\sum_{k=1}^m a_k f_k +  \sum_{k=1}^m w_k^1 g_k \right) .
    \end{align*}
    Hence, it is derived that
    \begin{align*}
        4K \frac{\D K}{\D t} - K' \frac{\D K'}{\D t} &= - 4K \sum_{i=2}^d \left(\sum_{k=1}^m a_k w_k^i \right)\left(\sum_{k=1}^m w_k^1 w_k^i \right) + 2K' \sum_{i=2}^d \left( \sum_{k=1}^m a_k w_k^i \right)^2 \\
        & \quad - 4K \left( \sum_{k=1}^m w_k^1 f_k + \sum_{k=1}^m a_k g_k \right) + K'\left(\sum_{k=1}^m a_k f_k +  \sum_{k=1}^m w_k^1 g_k \right).
    \end{align*}
    By the Cauchy-Schwartz inequality, if follows that 
    \begin{align*}
        \left|4K \frac{\D K}{\D t} - K' \frac{\D K'}{\D t} \right| &\le C (K')^2 \left( \sum_{i=2}^d \sum_{k=1}^m  (w_k^i)^2 \right) + C (K')^{\frac{3}{2}} \sum_{k=1}^m \left( f_k^2 + (g_k^1)^2 \right)^\frac{1}{2} \\ 
        & \le C \varepsilon_2 (K')^2.
    \end{align*}
\end{proof}

\begin{prop}[Approximate equality condition]
\label{prop::1/2}
    For $\delta \in (0,1)$ and sufficiently large $m$, with probability at least $1 - \delta$ over the choice of initial parameter $\vtheta(T_0)$, for $T_{\rm{p}} \le t \le \min \{ T_{\rm{d}}, T_2, \frac{2\alpha-1}{2-\delta'} \log m \}$, we have 
    \begin{equation}
        \frac{K}{K'} = \frac{1}{2} +\varepsilon_3,
    \end{equation}
    where $\varepsilon_3 = O \max \{ \varepsilon_1,  \varepsilon_2 \frac{2\alpha-1}{2} \log m\}$.
    Similarly, 
    \begin{equation}
        \frac{\sum_{k=1}^m a_k^2}{\sum_{k=1}^m (w_k^1)^2} = 1 + \varepsilon_3.
    \end{equation}
\end{prop}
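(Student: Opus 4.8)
The plan is to exploit two conservation laws of the leading-order descent dynamics~\eqref{equ::descent_stage}. Write $A = \sum_{k=1}^m a_k^2$ and $W = \sum_{k=1}^m (w_k^1)^2$, so that $K' = A + W$. Along~\eqref{equ::descent_stage}, where $\frac{\D a_k}{\D t} = (1-K)w_k^1$ and $\frac{\D w_k^1}{\D t} = (1-K)a_k$, one has $\frac{\D}{\D t}\big(a_k^2 - (w_k^1)^2\big) = 0$ for every $k$, so $A-W$ is exactly conserved, and similarly $\frac{\D}{\D t}\big((K')^2 - 4K^2\big) = 2K'\frac{\D K'}{\D t} - 8K\frac{\D K}{\D t} = 8KK'(1-K) - 8KK'(1-K) = 0$. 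For the full dynamics~\eqref{equ::whole_dynamics} these become approximate conservation laws: the drift of $(K')^2 - 4K^2$ is precisely what Proposition~\ref{prop::conservation_law} controls, and I will prove the analogous drift bound for $A-W$ directly. Throughout I work on $I := [T_{\rm{p}},\, \min\{T_{\rm{d}}, T_2, \tfrac{2\alpha-1}{2-\delta'}\log m\}]$ and, as in the rest of the section, for $\tfrac12 < \alpha \le \tfrac32$.

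For the first identity: Proposition~\ref{prop::initial_global_estimate} gives $K/K' = \tfrac12 + \varepsilon_1$ at $t = T_{\rm{p}}$, hence $|4K^2 - (K')^2| = O(\varepsilon_1 (K')^2)$ there. Since $\frac{\D}{\D t}\big(4K^2 - (K')^2\big) = 2\big(4K\frac{\D K}{\D t} - K'\frac{\D K'}{\D t}\big)$, Proposition~\ref{prop::conservation_law} bounds its modulus by $C\varepsilon_2 (K')^2$ on $I$, and $K'$ is increasing on $I$ by Proposition~\ref{prop::mono}; integrating and using $t - T_{\rm{p}} \le \tfrac{2\alpha-1}{2}\log m$ gives $|4K^2(t) - (K'(t))^2| \le \big(O(\varepsilon_1) + C\varepsilon_2\tfrac{2\alpha-1}{2}\log m\big)(K'(t))^2 = O(\varepsilon_3)(K'(t))^2$. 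Dividing by $(K')^2$ yields $|4(K/K')^2 - 1| = O(\varepsilon_3)$; Cauchy--Schwarz gives $0 < K \le \sqrt{AW} \le \tfrac12 K'$ (positivity of $K$ from $K(T_{\rm{p}}) > 0$ and the monotonicity of Proposition~\ref{prop::mono}, so no continuity/bootstrap step is needed), so the cofactor $2K/K' + 1$ lies in $(1,2]$ and $|2K/K' - 1| \le |4(K/K')^2 - 1| = O(\varepsilon_3)$, i.e.\ $K/K' = \tfrac12 + \varepsilon_3$.

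For the second identity, differentiating along~\eqref{equ::whole_dynamics} gives $\frac{\D}{\D t}(A-W) = -2\sum_{i=2}^d\big(\sum_{k=1}^m a_k w_k^i\big)^2 + 2\sum_{k=1}^m(a_k f_k - w_k^1 g_k^1)$. By Cauchy--Schwarz $\big(\sum_k a_k w_k^i\big)^2 \le A\sum_k (w_k^i)^2$ and $\big|\sum_k a_k f_k\big| \le \sqrt{A}\,\big(\sum_k f_k^2\big)^{1/2}$, so Proposition~\ref{prop::descent_control} with Lemma~\ref{lem::higher_term} (the estimates being exactly those already used in the proof of Proposition~\ref{prop::mono}) gives $\big|\frac{\D}{\D t}(A-W)\big| \le C\varepsilon_2 K'$ on $I$. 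For the base value I will integrate this evolution from $t=0$: with probability $1-\delta$, concentration of a sum of i.i.d.\ sub-exponential variables gives $|A(0)-W(0)| = \big|\sum_k(a_k(0)^2 - (w_k^1(0))^2)\big| \lesssim m^{-(2\alpha-1/2)}\sqrt{\log(md/\delta)}$, and over $[0,T_{\rm{p}}]$ the drift is controlled using the $q_{\rm{max}}$ bound of Theorem~\ref{thm::initial_stage} and the $\sum_k(w_k^i)^2$ bound of Proposition~\ref{prop::initial_control}; both contributions turn out to be $o(\varepsilon_1)\,K'(T_{\rm{p}})$, so $|A(T_{\rm{p}}) - W(T_{\rm{p}})| = O(\varepsilon_1 K'(T_{\rm{p}}))$. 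Integrating over $I$ then gives $|A(t)-W(t)| = O(\varepsilon_3)K'(t)$. Finally, the first identity and Cauchy--Schwarz force $A, W = \Theta(K')$ on $I$ (from $AW \ge K^2 = \Theta((K')^2)$ and $A + W = K'$), so $\frac{A-W}{W} = \frac{A-W}{K'}\cdot\frac{K'}{W} = O(\varepsilon_3)$, i.e.\ $\sum_k a_k^2 \big/ \sum_k (w_k^1)^2 = 1 + \varepsilon_3$.

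The hard part is the second identity. Combining the first identity with Cauchy--Schwarz only yields $\sum_k(a_k - w_k^1)^2 = K' - 2K = O(\varepsilon_3 K')$, and passing from there to $A-W = \sum_k(a_k-w_k^1)(a_k+w_k^1)$ via Cauchy--Schwarz loses a square root, giving the much weaker $A/W = 1 + O(\sqrt{\varepsilon_3})$. The remedy is structural: $a_k^2 - (w_k^1)^2$ is conserved \emph{per neuron} along the leading dynamics, so $A-W$ drifts only through the genuinely small off-direction coupling $\sum_k a_k w_k^i$ ($i\ge2$) and the higher-order terms $f_k,g_k^1$, and hence stays of size $O(\varepsilon_3 K')$ rather than $O(\sqrt{\varepsilon_3}\,K')$; the accompanying technical cost is the bookkeeping over the initial plateau needed to pin the base value at $O(\varepsilon_1 K'(T_{\rm{p}}))$.
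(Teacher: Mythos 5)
Your proof of the first identity is essentially the paper's own argument: both track the approximately conserved quantity $4K^2-(K')^2$ via Proposition~\ref{prop::conservation_law}, seed it with the value $K/K'=\tfrac12+\varepsilon_1$ at $T_{\rm{p}}$ from Proposition~\ref{prop::initial_global_estimate}, and use the monotonicity of $K'$ (Proposition~\ref{prop::mono}) together with the interval length $O\!\left(\tfrac{2\alpha-1}{2}\log m\right)$ to integrate the drift; the paper takes a square root where you factor $4(K/K')^2-1=(2K/K'-1)(2K/K'+1)$ and lower-bound the cofactor, which is the same computation. Where you genuinely add something is the second identity, which the paper dispatches with ``the second equation follows \ldots{} and similar proof of first equality.'' You make the ``similar'' precise: the per-neuron conservation of $a_k^2-(w_k^1)^2$ under the leading dynamics means $A-W$ drifts only through the off-direction couplings and the higher-order terms, giving $\bigl|\tfrac{\D}{\D t}(A-W)\bigr|\le C\varepsilon_2 K'$, and you correctly flag that the lazy route---deducing $\sum_k(a_k-w_k^1)^2=K'-2K=O(\varepsilon_3 K')$ from the first identity and then Cauchy--Schwarz---only yields $A/W=1+O(\sqrt{\varepsilon_3})$, which is strictly weaker than the claim. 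You also identify a genuine gap in the paper's stated toolkit: Proposition~\ref{prop::ULB_of_ini_param} bounds $\norm{\va}_2$ and $\norm{\vw}_{\rm{F}}$ separately only within constant factors, so pinning the base value $|A(T_{\rm{p}})-W(T_{\rm{p}})|=O(\varepsilon_1 K'(T_{\rm{p}}))$ really does require the extra Bernstein-type cancellation estimate on $\sum_k\bigl(a_k(0)^2-(w_k^1(0))^2\bigr)$ that you invoke (one can check the exponents: $|A(0)-W(0)|/K'(T_{\rm{p}})=O(m^{-\alpha}\sqrt{\log m})=o(\varepsilon_1)$, and the drift over $[0,T_{\rm{p}}]$ is likewise negligible, as you assert). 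Your closing observation that $A,W=\Theta(K')$ follows unconditionally from $AW\ge K^2=\Theta((K')^2)$ and $A+W=K'$ also removes the implicit bootstrap in the paper's phrase ``when $\tfrac12\le A/W\le 2$.'' The only soft spot is that the $[0,T_{\rm{p}}]$ bookkeeping is asserted rather than computed, but the claim is correct and routine given Theorem~\ref{thm::initial_stage} and Proposition~\ref{prop::initial_control}.
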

\begin{proof}
According to direct calculation, we derive that
    \begin{align*}
        \frac{K}{K'} &= \sqrt{\frac{K^2 (T_{\rm{p}}) + K^2 (t) -K^2 (T_{\rm{p}})}{{K'}^2 (T_{\rm{p}}) + {K'}^2 (t) -{K'}^2 (T_{\rm{p}})}} \\
        & = \sqrt{\frac{(\frac{1}{2}+\varepsilon_1)^2{K'}^2 (T_{\rm{p}})+ \frac{1}{4} ({K'}^2 (t) -{K'}^2 (T_{\rm{p}})) + \int_{T_i}^t K \frac{\D K}{\D s} - \frac{1}{4} K' \frac{\D K'}{\D s} \diff{s}}{{K'}^2 (T_{\rm{p}}) + {K'}^2 (t) -{K'}^2 (T_{\rm{p}})}} \\
        & = \sqrt{\frac{1}{4} + (\varepsilon_1+\varepsilon_1^2) \frac{{K'}^2 (T_{\rm{p}})}{{K'}^2 (t)} +  \frac{\int_{T_i}^t K \frac{\D K}{\D s} - \frac{1}{4} K' \frac{\D K'}{\D s} \diff{s}}{{K'}^2 (t)}}.
    \end{align*}
Thanks to Propositions~\ref{prop::mono} and~\ref{prop::conservation_law}, the first equation is obtained through direct calculation.
    When $ \frac{1}{2} \le \frac{\sum_{k=1}^m a_k^2}{\sum_{k=1}^m (w_k^1)^2} \le 2$, $K'$ will satisfy 
    \begin{equation*}
        \frac{1}{C} \sum_{k=1}^m (w_k^1)^2\le K' \le C \sum_{k=1}^m (w_k^1)^2.
    \end{equation*}
    Then the second equation follows from above inequality and similar proof of first equality.
\end{proof}

We now proceed with the proof of the first part of Theorem~\ref{thm::descent}.
\begin{proof}[Proof of Theorem~\ref{thm::descent} (Part 1)]
This part of the proof can be divided into two steps. First, we claim that $\min\{ T_{\rm{d}}, T_2 \} \le \frac{2\alpha-1}{2-\delta'} \log m$. After that we prove $T_{\rm{d}} \le T_2$. The other side of the inequality~\eqref{eq:descent_time} is similar.
\begin{enumerate}[label = (\roman*).,wide, labelwidth=!,labelindent=0pt]
    \item From Eq.~\eqref{equ::K_dynamics},it follows that for $T_{\rm{p}} \le t \le \min \{ T_{\rm{d}}, T_2, \frac{2\alpha-1}{2-\delta'} \log m\} $, the inequality is satisfied
    \begin{equation*}
        (1-2\varepsilon_2) K'(1-K) \le \frac{\D K}{\D t} \le (1+ 2\varepsilon_2) K'(1-K) .
    \end{equation*}
    According to Proposition~\ref{prop::1/2}, it can be  derived that
    \begin{equation*}
         (2-\varepsilon_4) K(1-K) \le \frac{\D K}{\D t} \le (2+\varepsilon_4) K(1-K) ,
    \end{equation*}
    where $\varepsilon_4 = \min \{ \varepsilon_2, \varepsilon_3 \}$.
    
    As a result, when $T_{\rm{p}} \le t \le \min \{T_{\rm{d}}, T_2, \frac{2\alpha-1}{2-\delta'} \log m\}$, the following inequality holds:
    \begin{equation*}
    \frac{C \E^{(2-\varepsilon_4)(t-T_{\rm{p}})}}{1+C \E^{(2-\varepsilon_4)(t-T_{\rm{p}})}} \le K \le \frac{C \E^{(2+\varepsilon_4)(t-T_{\rm{p}})}}{1+C \E^{(2+\varepsilon_4)(t-T_{\rm{p}})}}, 
    \end{equation*}
    where $C = \Theta( \frac{1}{m^{\frac{2\alpha-1}{2}}} ) $ according to Proposition~\ref{prop::initial_global_estimate}. It is claimed that $\min \{T_{\rm{d}}, T_2 \} \le \frac{2\alpha-1}{2-\delta'} \log m$. If this is not the case, it follows that 
    \begin{align*}
    K   & \ge \frac{C \E^{(2-\varepsilon_4)(t-T_{\rm{p}})}}{1+C \E^{(2-\varepsilon_4)(t-T_{\rm{p}})}}\\
        & \ge \frac{\frac{C}{m^{\frac{2\alpha-1}{2}}} \E^{(2-\varepsilon_4)\frac{2\alpha-1}{4} (1+ \frac{2 \delta'}{2-\delta'}) \log m}}{1 + \frac{C}{m^{\frac{2\alpha-1}{2}}} \E^{ (2 - \varepsilon_4 )\frac{2\alpha-1}{4} (1+ \frac{2 \delta'}{2-\delta'}) \log m}} \\
        &\ge 1 - \frac{C}{m^{\frac{2\alpha-1}{4} (\delta'- 2 \varepsilon_4)}}. \\
    \end{align*}
    If $m$ is sufficiently large, it leads to a contradiction since $\frac{C}{m^{\frac{2\alpha-1}{4} (\delta'- 2 \varepsilon_4)}} < \beta$.

    \item Next, we assert that $T_{\rm{d}} \le \frac{2\alpha-1}{2-\delta'} \log m$. 
    Since it has been estabilished that $\min \{ T_{\rm{d}},T_2\} \le \frac{2\alpha-1}{2-\delta'} \log m$, if $T_{\rm{d}} \le T_2$, the proof is complete. Conversely, we proceed by contradiction. In this scenario, we have, $T_2 < T_{\rm{d}} \le \frac{2\alpha-1}{2-\delta'} \log m$.
    
    First, for $w_k^i$ where $i \in [2:d]$ and $k \in [m]$, we have 
    \begin{align*}
        \left| w_k^i (T_2)\right| &\le \left| w_k^i (T_{\rm{p}})\right| + \int_{T_{\rm{p}}}^{T_2}  |a_k| \left(\sum_{l=1}^m |a_l| |w_l^i|\right) \diff{s} + \int_{T_{\rm{p}}}^{T_2} |g_k^i| \diff{s}\\
        & \le \left| w_k^i (T_{\rm{p}}) \right| + \int_{T_{\rm{p}}}^{T_2}  |a_k| \left(\sum_{l=1}^m a_l^2\right)^\frac{1}{2} \left(\sum_{l=1}^m (w_l^i)^2\right)^\frac{1}{2} \diff{s} + \int_{T_{\rm{p}}}^{T_2} |g_k^i| \diff{s} \\
        & \le \frac{1}{m^{\alpha}} \log m + C \max \left\{ \frac{1}{m^{\frac{2\alpha-1}{2}}} , \frac{\log m}{m^{1-5 \alpha_2}} \right\}  \log m \frac{1}{m^{\frac{1}{2} - \alpha_2}}+ C \frac{1}{m^{\frac{3}{2}-5 \alpha_2}} \\
        & \le C \left( \frac{\log m}{m^{\alpha}} + \frac{\log m}{m^{\alpha-\alpha_2}} + \frac{(\log m)^2}{m^{\frac{3}{2}-6 \alpha_2}} \right).
    \end{align*}
Thus, when $t \le T_2$ ,  we have the following for sufficiently large $m$: 
\begin{equation}
    \max_{k \in [m], i \in [2:d]} |w_k^i| \le C\frac{\log m}{m^{\alpha-\alpha_2}} < \frac{1}{m^{\frac{1}{2}-\alpha_2}},
\end{equation}
which means $q_{\rm{max}}$ can only be chosen from $a_k,w_k^1$ at time $T_2$. In particular, we get 
\begin{equation*}
    \max_{k \in [m], i \in [2:d]} |w_k^i| \le \frac{1}{m^{\alpha-2\alpha_2}}.
\end{equation*}
Therefore, we only need to consider $a_k$ and $w_k^1$. We will demonstrate that for $T_{\rm{p}} \le t \le T_2$,
    \begin{align*}
        \frac{\D a_k}{\D t} &= w_k^1 - \left( \sum_{l=1}^m a_l w_l^1 \right) w_k^1 - \sum_{i=2}^d \left( \sum_{l=1}^m a_l w_l^i \right) w_k^i + f_k,\\
        \frac{\D w_k^1}{\D t} & = a_k - \left( \sum_{l=1}^m a_l w_l^1 \right) a_k +g_k^1. \\
    \end{align*}
Then, for $T_{\rm{p}} \le t \le T_2$, parameters $a_k$ and $w_k^1$ satisfy  the following inequalities:
\begin{align*}
    |a_k(t)| &\le |a_k(T_{\rm{p}})| + \int_{T_{\rm{p}}}^t |w_k^1| \diff{s} + \int_{T_{\rm{p}}}^t C \max \left\{ \frac{1}{m^{\frac{2\alpha-1}{2}}} , \frac{\log m}{m^{1-5 \alpha_2}} \right\} \frac{1}{m^{\alpha- 2\alpha_2}}+ \frac{1}{m^{\frac{3}{2}- 5 \alpha_2}} \diff{s}, \\
    |w_k^1(t)| & \le |w_k^1 (T_{\rm{p}})| + \int_{T_{\rm{p}}}^t |a_k| \diff{s} + \int_{T_{\rm{p}}}^t \frac{1}{m^{\frac{3}{2}- 5 \alpha_2}} \diff{s}.
\end{align*}
Let $v(t) = |a_k(t)| + |w_k^1 (t)|$. For $T_{\rm{p}} \le t \le T_2$, we have the following:
\begin{equation*}
    v(t) \le C\left( \frac{\log m}{m^{\frac{1}{2} \alpha + \frac{1}{4}}} +\frac{\log m}{m^{2\alpha - \frac{1}{2}-2 \alpha_2}} + \frac{(\log m)^2}{m^{1+\alpha-7 \alpha_2}} + \frac{\log m}{m^{\frac{3}{2}-5 \alpha_2}}\right) + \int_{T_{\rm{p}}}^t v(s) \diff{s} .
\end{equation*}
As a result, we will determine at $t = T_2$ using the Gronwall inequality that
    \begin{align*}
        v(T_2) & \le \E^{(T_2 - T_{\rm{p}})} C\left( \frac{\log m}{m^{\frac{1}{2} \alpha + \frac{1}{4}}} +\frac{\log m}{m^{2\alpha - \frac{1}{2}-2 \alpha_2}} + \frac{(\log m)^2}{m^{1+\alpha-7 \alpha_2}} + \frac{\log m}{m^{\frac{3}{2}-5 \alpha_2}}\right) \\
        & \le \E^{(\frac{2\alpha-1}{2-\delta'} \log m-T_{\rm{p}})} C\left( \frac{\log m}{m^{\frac{1}{2} \alpha + \frac{1}{4}}} +\frac{\log m}{m^{2\alpha - \frac{1}{2}-2 \alpha_2}} + \frac{(\log m)^2}{m^{1+\alpha-7 \alpha_2}} + \frac{\log m}{m^{\frac{3}{2}-5 \alpha_2}}\right) \\
        & \le m^{\frac{2\alpha-1}{4}} m^{(2\alpha-1) \frac{\delta'}{2(2-\delta')}} C\left( \frac{\log m}{m^{\frac{1}{2} \alpha + \frac{1}{4}}} +\frac{\log m}{m^{2\alpha - \frac{1}{2}-2 \alpha_2}} + \frac{(\log m)^2}{m^{1+\alpha-7 \alpha_2}} + \frac{\log m}{m^{\frac{3}{2}-5 \alpha_2}}\right) \\
        & \le C \frac{\log m}{m^{\frac{1}{2} -(2\alpha-1) \frac{\delta'}{2(2-\delta')}}} .
    \end{align*}
Since we take $\alpha_2$ to be  $(2\alpha-1) \delta'$, we can choose $m$ sufficiently large such that $q_{\rm{max}}(T_2) < \frac{1}{m^{\frac{1}{2}-\alpha_2}}$, which leads to a contradiction.

\item $T_{\rm{d}} \ge \frac{2\alpha-1}{2+\delta'} \log m$ is similar based on $T_{\rm{d}} \le T_2 \le \frac{2\alpha-1}{2-\delta'} \log m$ and the dynamics of $K$. 
\end{enumerate}
\end{proof}

We now prove the second part of Theorem~\ref{thm::descent} by direct calculation. 

\begin{proof}[Proof of Theorem~\ref{thm::descent} (Part 2)]
Based on the control of $\sum_{k=1}^m (w_k^i)^2$ and the definition of $T_{\rm{d}}$, we find that 
\begin{align}
    \left|R (\vtheta(T_{\rm{d}})) - R (\vtheta(T_{\rm{p}}))\right| 
    &\ge \left| \frac{1}{2} K^2 (T_{\rm{d}}) - K(T_{\rm{d}})\right| -  \sum_{i=2}^d \left( \sum_{k=1}^m (w_k^i (T_{\rm{d}}))^2 + (w_k^i (T_{\rm{fp},\rm{\alpha}}))^2\right)\nonumber \\
    &\quad - \left| \tilde{R}(T_{\rm{fp},\rm{\alpha}}) - \tilde{R} (0) \right|\nonumber \\
    &\ge \frac{1}{2} (1-\beta) -   C \max \left\{ \frac{1}{m^{2\alpha-1}} , \frac{(\log m)^2}{m^{2-10 \alpha_2}} \right\} - C m \frac{1}{m^{2-4\alpha_2}}\nonumber \\
    &\ge \frac{1}{4}.\label{equ::rela_change2}
\end{align}
Combining Eqs.~\eqref{equ::rela_change1} and \eqref{equ::rela_change2}, we finish the proof. 
\end{proof}

As mentioned above, during the initial descent stage, the neural network converges to a point close to a critical point. However, due to higher-order terms, the network does not converge to an exact critical point. Thus, the concept of a critical point must be appropriately relaxed. Based on this, we define a relative critical point and we find that neural network will converge to a $\beta$-relative critical point at $T_{\rm{d}}$.
\begin{cor}[$\beta$-relative critical point convergence]
    For $\delta \in (0,1)$ and sufficiently large $m$, with probability at least $1 - \delta$ over the choice of initial parameter $\vtheta(T_0)$, the original dynamics~\eqref{equ::whole_dynamics} converge to a $\beta$-relative critical point at $ t = T_{\rm{d}}$, that is
    \begin{equation}
        \norm{\nabla R(\vtheta)}_{\infty} \le \beta\norm{\vtheta}_{\infty}.
    \end{equation}
    On the other hand, at $t = T_{\rm{fp},\rm{\alpha}}$, we have 
    \begin{equation}
        \norm{\nabla R(\vtheta)}_{\infty} \ge \left(1-\frac{C}{m^{2\alpha_1}}\right) \norm{\vtheta}_{\infty} 
    \end{equation}
\end{cor}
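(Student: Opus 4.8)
The plan is to exploit the gradient-flow structure directly: the system~\eqref{equ::whole_dynamics} is precisely $\dot{\vtheta}=-\nabla R(\vtheta)$, and $\norm{\vtheta}_{\infty}=q_{\rm{max}}$ by~\eqref{defi::q_max}, so $\norm{\nabla R(\vtheta)}_{\infty}=\max_{k,i}\{|\dot a_k|,|\dot w_k^i|\}$ and the whole statement reduces to comparing this maximum with $q_{\rm{max}}$ at $t=T_{\rm{d}}$ and $t=T_{\rm{p}}$. Rewriting the right-hand sides through $K$ gives $\dot a_k=(1-K)w_k^1-\sum_{i=2}^d\big(\sum_l a_l w_l^i\big)w_k^i+f_k$, $\dot w_k^1=(1-K)a_k+g_k^1$, and $\dot w_k^i=-\big(\sum_l a_l w_l^i\big)a_k+g_k^i$ for $i\ge2$. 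Thus every coordinate of $\dot\vtheta$ has a ``principal part'' equal to $(1-K)$ times a coordinate of $\vtheta$, plus the cross-direction sums $\sum_l a_l w_l^i$ (bounded by Cauchy--Schwarz as $\sqrt{K'}\,\big(\sum_k (w_k^i)^2\big)^{1/2}$) and the higher-order terms $f_k,g_k^i$ (bounded by Lemma~\ref{lem::higher_term}). I would show that at $T_{\rm{d}}$ the principal part is already $\le\beta q_{\rm{max}}$ while everything else is $o(q_{\rm{max}})$, and that at $T_{\rm{p}}$ the principal part is $(1-o(1))q_{\rm{max}}$ for a suitably chosen coordinate.

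For the first assertion, at $t=T_{\rm{d}}$ the definition of $T_{\rm{d}}$ gives $1-K\le\beta$, hence $|(1-K)w_k^1|,|(1-K)a_k|\le\beta q_{\rm{max}}$. For the remaining contributions I would feed in the $L^2$ estimate of Proposition~\ref{prop::descent_control} on $\big(\sum_k(w_k^i)^2\big)^{1/2}$, the $L^\infty$ bounds $q_{\rm{max}}(T_{\rm{d}})\le m^{-(1/2-\alpha_2)}$ and $\max_{k,i\ge2}|w_k^i(T_{\rm{d}})|\le m^{-(\alpha-2\alpha_2)}$ from the neuron-wise control in Theorem~\ref{thm::descent}, and $|f_k|,\norm{\vg_k}_2\lesssim q_{\rm{max}}^3+m q_{\rm{max}}^5=q_{\rm{max}}(q_{\rm{max}}^2+m q_{\rm{max}}^4)$ from Lemma~\ref{lem::higher_term}. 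The ingredient that makes the \emph{relative} bound meaningful is a matching lower bound $q_{\rm{max}}(T_{\rm{d}})\gtrsim m^{-1/2}$: Proposition~\ref{prop::1/2} together with the definition of $T_{\rm{d}}$ gives $K'(T_{\rm{d}})=\Theta(1)$, so $q_{\rm{max}}^2\ge K'/(2m)$ by pigeonhole over the $2m$ summands of $K'$. With $\alpha_2=(2\alpha-1)\delta'$ taken small enough, every error term is then $o(q_{\rm{max}})$, so $\norm{\nabla R(\vtheta)}_{\infty}\le\beta q_{\rm{max}}+o(q_{\rm{max}})$; shrinking the threshold constant used to define $T_{\rm{d}}$ (harmless since $\beta$ is an arbitrary small constant) yields $\norm{\nabla R(\vtheta)}_{\infty}\le\beta\norm{\vtheta}_{\infty}$ for large $m$.

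For the second assertion, at $t=T_{\rm{p}}$ pick a neuron $k^\ast$ and an index realizing $q_{\rm{max}}=\norm{\vtheta}_{\infty}$. By Theorem~\ref{thm::initial_stage}(i) the entries $w_k^i$ with $i\ge2$ are of strictly smaller order than $q_{\rm{max}}\gtrsim m^{-\alpha_1}$ (this lower bound again from $K'(T_{\rm{p}})=\Theta(m^{-(2\alpha_1-1)})$ in Proposition~\ref{prop::initial_global_estimate} plus pigeonhole), so the maximum is attained on $a_{k^\ast}$ or $w_{k^\ast}^1$. Say $|a_{k^\ast}|=q_{\rm{max}}$ (the $w_{k^\ast}^1$ case is symmetric, using the $\dot a_{k^\ast}$ equation and additionally controlling its cross term via Proposition~\ref{prop::initial_control}). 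Then $\norm{\nabla R(\vtheta)}_{\infty}\ge|\dot w_{k^\ast}^1|\ge(1-K)q_{\rm{max}}-|g_{k^\ast}^1|$. Proposition~\ref{prop::initial_global_estimate} gives $K(T_{\rm{p}})=\Theta(m^{-(2\alpha_1-1)})$, and combining $q_{\rm{max}}(T_{\rm{p}})\le m^{-\alpha_1}\log m$ (Theorem~\ref{thm::initial_stage}) with Lemma~\ref{lem::higher_term} gives $|g_{k^\ast}^1|=o\big(m^{-(2\alpha_1-1)}\big)q_{\rm{max}}$. Hence $\norm{\nabla R(\vtheta)}_{\infty}\ge\big(1-\Theta(m^{-(2\alpha_1-1)})\big)q_{\rm{max}}$, which is the claimed lower bound, the deviation from $1$ being governed by the order of $K(T_{\rm{p}})$.

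The main obstacle in both parts is the bookkeeping that certifies the cross-direction and higher-order contributions to be of genuinely \emph{smaller} order than $q_{\rm{max}}$ (and, in the second part, smaller than $K(T_{\rm{p}})$): this is vacuous without the pigeonhole lower bounds $q_{\rm{max}}(T_{\rm{d}})\gtrsim m^{-1/2}$ and $q_{\rm{max}}(T_{\rm{p}})\gtrsim m^{-\alpha_1}$ extracted from the $\Theta$-estimates on $K'$, and one must check that $\delta'$ (hence $\alpha_2$) can be chosen small enough that all exponents come out favorable for every $\alpha>\tfrac12$, in particular $\alpha>\tfrac12+\alpha_2$. A minor additional point is that $1-K$ equals $\beta$ exactly at the infimum $T_{\rm{d}}$, so the strict inequality $\norm{\nabla R(\vtheta)}_{\infty}\le\beta\norm{\vtheta}_{\infty}$ needs the slack produced by defining $T_{\rm{d}}$ with a slightly smaller threshold.
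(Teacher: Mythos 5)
Your proposal takes essentially the same route as the paper, whose entire proof is the one-line assertion that both inequalities follow by ``direct calculation'' from Theorems~\ref{thm::initial_stage} and~\ref{thm::descent}; you have supplied exactly the bookkeeping the paper omits (the identification $\nabla R(\vtheta)=-\dot{\vtheta}$ and $\norm{\vtheta}_{\infty}=q_{\rm max}$, the pigeonhole lower bounds $q_{\rm max}(T_{\rm d})\gtrsim m^{-1/2}$ and $q_{\rm max}(T_{\rm p})\gtrsim m^{-\alpha_1}$ extracted from the $\Theta$-estimates on $K'$, and the verification that the cross-direction and higher-order terms are of strictly smaller order), and the argument is sound. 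One caveat: for the second inequality your computation gives a deviation governed by $K(T_{\rm p})=\Theta(m^{1-2\alpha_1})$, i.e.\ $\norm{\nabla R(\vtheta)}_{\infty}\ge\left(1-Cm^{-(2\alpha_1-1)}\right)\norm{\vtheta}_{\infty}$, which is \emph{not} the exponent $2\alpha_1$ literally stated in the corollary; since the factor $(1-K)$ alone already costs $\Theta(m^{1-2\alpha_1})$, your exponent appears to be the correct one and the stated one a typo, but you should not assert that your bound ``is the claimed lower bound'' when the exponents disagree.
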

\begin{proof}
    The proof of both inequalities is based on direct calculation and Theorems~\ref{thm::initial_stage},\ref{thm::descent}. 
\end{proof}

The following provide a broad interpretation of the Wasserstein distance.
\begin{cor}[Amplitude distribution of weights are similar at initial descent stage] 
\label{cor::descent_distribution}
For $\delta \in (0,1)$ and sufficiently large $m$, with probability at least $1 - \delta$ over the choice of initial parameter $\vtheta(T_0)$, the distribution $\rho_{|a|}$ and $\rho_{\norm{\vw}}$ are similar at $T_{\rm{d}}$. That is at $t=T_{\rm{d}}$, 
    \begin{equation}
    0 \le \frac{W_2 (\rho_{|a|}, \rho_{\norm{\vw}})}{W_2 (\rho_{|a|})} \le \frac{1}{m^{\alpha- \frac{1}{2}}} +  C \max \left\{ \frac{1}{m^{\frac{2\alpha-1}{2}}} , \frac{\log m}{m^{1-5 \alpha_2}} \right\}.
    \end{equation}
\end{cor}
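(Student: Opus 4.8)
The plan is to derive the estimate at $t=T_{\rm d}$ from the elementary transport bound together with the a priori controls already established in Section~\ref{sec...id}. Recall that $W_2(\rho_{|a|})$ abbreviates $\big(\frac{1}{m}\sum_{k=1}^m a_k^2\big)^{1/2}$ (equivalently $W_2(\rho_{|a|},\delta_0)$) and that $W_2(\rho_{|a|},\rho_{\norm{\vw}})\le\big(\frac{1}{m}\sum_{k=1}^m(|a_k|-\norm{\vw_k})^2\big)^{1/2}$, so that
\begin{equation*}
\frac{W_2(\rho_{|a|},\rho_{\norm{\vw}})}{W_2(\rho_{|a|})}\le\frac{\big(\sum_{k=1}^m(|a_k|-\norm{\vw_k})^2\big)^{1/2}}{\big(\sum_{k=1}^m a_k^2\big)^{1/2}}.
\end{equation*}
First I would pin down the denominator. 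Since $T_{\rm d}\le\min\{T_2,\frac{2\alpha-1}{2-\delta'}\log m\}$ by Theorem~\ref{thm::descent}(i), Proposition~\ref{prop::1/2} applies at $T_{\rm d}$ and gives $\sum_k a_k^2=(1+\varepsilon_3)\sum_k(w_k^1)^2$ and $K/K'=\frac12+\varepsilon_3$; combined with $1-K(T_{\rm d})\le\beta$ from the definition~\eqref{defi...T_d} and the monotonicity of $K$ (Proposition~\ref{prop::mono}), this yields $K'=\Theta(1)$ and hence $\sum_k a_k^2\ge c>0$ for a fixed constant $c$ (depending only on the fixed $\beta$) and all large $m$. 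It therefore suffices to bound the numerator $\big(\sum_k(|a_k|-\norm{\vw_k})^2\big)^{1/2}$ by $C\,m^{-(\alpha-1/2)}+C\max\{m^{-(2\alpha-1)/2},(\log m)\,m^{-(1-5\alpha_2)}\}$.

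For the numerator I would split $|a_k|-\norm{\vw_k}$ into two pieces. Using $\big|\norm{\vw_k}-|w_k^1|\big|=\frac{\sum_{i=2}^d(w_k^i)^2}{\norm{\vw_k}+|w_k^1|}\le\big(\sum_{i=2}^d(w_k^i)^2\big)^{1/2}$ one gets $\big||a_k|-\norm{\vw_k}\big|\le|a_k-w_k^1|+\big(\sum_{i=2}^d(w_k^i)^2\big)^{1/2}$, and Minkowski's inequality then gives
\begin{equation*}
\Big(\sum_{k=1}^m(|a_k|-\norm{\vw_k})^2\Big)^{1/2}\le\Big(\sum_{k=1}^m(a_k-w_k^1)^2\Big)^{1/2}+\Big(\sum_{k=1}^m\sum_{i=2}^d(w_k^i)^2\Big)^{1/2}.
\end{equation*}
The second summand is at most $\sqrt d\,\max_{i\in[2:d]}\big(\sum_k(w_k^i)^2\big)^{1/2}$, which Proposition~\ref{prop::descent_control} bounds by $C\max\{m^{-(2\alpha-1)/2},(\log m)\,m^{-(1-5\alpha_2)}\}$ — precisely the second term of the claimed estimate. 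It then remains to control $D(t):=\sum_{k=1}^m(a_k(t)-w_k^1(t))^2$ at $t=T_{\rm d}$.

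The key point is that the $(a_k,w_k^1)$ subsystem of~\eqref{equ::whole_dynamics} is symmetric under $a_k\leftrightarrow w_k^1$ up to lower-order terms: one computes $\frac{\D}{\D t}(a_k-w_k^1)=-(1-K)(a_k-w_k^1)+h_k$ with $h_k=-\sum_{i=2}^d\big(\sum_l a_l w_l^i\big)w_k^i+f_k-g_k^1$, whence $\frac{\D D}{\D t}=-2(1-K)D+2\sum_k(a_k-w_k^1)h_k$. On $[T_{\rm p},T_{\rm d}]$ we have $1-K\ge\beta>0$ (Proposition~\ref{prop::mono} and~\eqref{defi...T_d}), so Cauchy--Schwarz followed by $2\sqrt D\sqrt S\le\beta D+S/\beta$ gives $\frac{\D D}{\D t}\le-\beta D+S/\beta$ with $S:=\sum_k h_k^2$, hence $\sqrt{D(T_{\rm d})}\le\sqrt{D(T_{\rm p})}+\beta^{-1}\sup_{[T_{\rm p},T_{\rm d}]}\sqrt S$. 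One checks that $\sup\sqrt S$ is polynomially small: $\sum_k\big(\sum_{i\ge2}(\sum_l a_l w_l^i)w_k^i\big)^2\le d\,K'\big(\max_i\sum_l(w_l^i)^2\big)^2$ is controlled via Proposition~\ref{prop::descent_control}, while $\sum_k f_k^2+\sum_k(g_k^1)^2\lesssim m\,q_{\rm max}^6+m^3 q_{\rm max}^{10}$ via Lemma~\ref{lem::higher_term} together with $q_{\rm max}(T_{\rm d})\le m^{-(1/2-\alpha_2)}$, so $\beta^{-1}\sup\sqrt S\lesssim m^{-(\alpha-1/2)}$ for large $m$ and small $\delta'$. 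Finally, at $T_{\rm p}$ the explicit linearized solution~\eqref{equ::analytical_sol} gives $\tilde a_k(T_{\rm p})-\tilde w_k^1(T_{\rm p})=(a_k(0)-w_k^1(0))\E^{-T_{\rm p}}$ with $\E^{-2T_{\rm p}}=m^{-(2\alpha-1)/2}$, the initialization estimate (Proposition~\ref{prop::ULB_of_ini_param}) gives $\sum_k(a_k(0)-w_k^1(0))^2=\Theta(m^{-(2\alpha-1)})$, and the linearization error from Theorem~\ref{thm::initial_stage} contributes at most $m\,(2r_{\rm max})^2$ with $r_{\rm max}\le m^{-\gamma_1}(\log m)^3$; altogether $D(T_{\rm p})\le C\,m^{-3(2\alpha-1)/2}(\log m)^6$, so $\sqrt{D(T_{\rm p})}\le m^{-(\alpha-1/2)}$ for large $m$. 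Dividing the resulting numerator bound by $\sqrt{\sum_k a_k^2}\ge c$ yields the stated inequality.

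I expect the main obstacle to be the uniform-on-$[T_{\rm p},T_{\rm d}]$ control of $D$: one must make sure that the dissipative term $-2(1-K)D$ is never overwhelmed by the perturbation $2\sum_k(a_k-w_k^1)h_k$, even though the contraction rate $1-K$ shrinks toward $\beta$ as $t\to T_{\rm d}$ — which is why that step is organized as a linear Gronwall estimate with the fixed rate $\beta$ rather than as a pointwise bound. Beyond this bookkeeping, no new estimate is needed: the control of $\sum_k(w_k^i)^2$, of $q_{\rm max}$, and the near-$\tfrac12$ value of $K/K'$ are all already available from Propositions~\ref{prop::descent_control},~\ref{prop::mono},~\ref{prop::1/2} and Theorem~\ref{thm::descent}.
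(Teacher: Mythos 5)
Your proposal is correct and reaches the stated bound, but it handles the central estimate by a genuinely different device than the paper. Both arguments share the same skeleton: reduce the transport cost to $\sum_k(a_k-w_k^1)^2$ plus the non-condensed contribution $\sum_{i\ge 2}\sum_k(w_k^i)^2$ (the latter controlled by Proposition~\ref{prop::descent_control}), and propagate the gap $a_k-w_k^1$ from $T_{\rm p}$ to $T_{\rm d}$ via $\frac{\D}{\D t}(a_k-w_k^1)=-(1-K)(a_k-w_k^1)+h_k$. The paper discards the sign of the linear term, bounds its coefficient by $1$, and runs a neuron-wise Gronwall estimate; this produces the amplification factor $\E^{T_{\rm d}-T_{\rm p}}\sim m^{(2\alpha-1)/4+o(1)}$, which must then be cancelled against the smallness of $|a_k(T_{\rm p})-w_k^1(T_{\rm p})|\sim \E^{-T_{\rm p}}$ read off from the explicit linearized solution~\eqref{equ::analytical_sol}. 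You instead exploit the dissipativity $1-K\ge\beta$ on $[T_{\rm p},T_{\rm d}]$ (which does follow from Proposition~\ref{prop::mono} and the definition~\eqref{defi...T_d}) in an $L^2$ energy estimate, so no exponential amplification appears and the conclusion follows from $\sqrt{D(T_{\rm p})}$ plus the time-uniform size of the forcing. Your route is cleaner and more robust --- it would survive even if $T_{\rm d}-T_{\rm p}$ were longer --- and you are also more explicit than the paper about the denominator $\sum_k a_k^2=\Theta(1)$, about the $\bigl(\sum_{i\ge2}(w_k^i)^2\bigr)^{1/2}$ contribution that produces the second term of the claimed bound, and about the square-root bookkeeping (where the paper's own last step is loose). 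One small caveat: for $\alpha$ near $\tfrac32$ the forcing piece $\bigl(\sum_k f_k^2\bigr)^{1/2}\lesssim m^{-1+3\alpha_2}$ is not literally $O(m^{-(\alpha-1/2)})$ for fixed $\delta'>0$ as you assert; it is, however, dominated by the $\frac{\log m}{m^{1-5\alpha_2}}$ term already present on the right-hand side of the corollary, so the stated inequality is unaffected.
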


\begin{rmk}
    Corollaries~\ref{cor::initial_condense}, \ref{cor::descent_distribution} together imply that the distributions $\rho_{|a|}$ and $\rho_{\norm{\vw}}$ will remain approximately similar for an extended period. 
\end{rmk}

\begin{proof}
    By Theorem~\ref{thm::initial_stage}, at $T_{\rm{p}}$
    \begin{equation*}
        r_{\rm{max}} \le \frac{(\log m)^{\beta_2}}{m^{\gamma_1}}.
    \end{equation*}
    By direct calculation, we get 
    \begin{align*}
        \left|a_k(t)- w_k^1 (t)\right| &\le  \left|a_k(T_{\rm{p}})- w_k^1 (T_{\rm{p}})\right| + \int_{T_{\rm{p}}}^t \left|a_k(s)- w_k^1 (s)\right| \diff{s} \\
        &~~ +\int_{T_{\rm{p}}}^t C \max \left\{ \frac{1}{m^{\frac{2\alpha-1}{2}}} , \frac{\log m}{m^{1-5 \alpha_2}} \right\} \frac{1}{m^{\alpha- 2\alpha_2}} + \frac{1}{m^{\frac{3}{2} - 5\alpha_2}} \diff{s}.
    \end{align*}
    By Gronwall inequality, we have
    \begin{align*}
        r_{\rm{max}} (T_{\rm{d}}) &\le  C \E^{T_{\rm{d}}-T_{\rm{p}}} \max \left\{\frac{(\log m)^{\beta_2}}{m^{\gamma_1}}, \frac{\log m}{m^{2\alpha -\frac{1}{2} -2\alpha_2}}, \frac{\log m}{m^{1+\alpha-7\alpha_2}}, \frac{\log m}{m^{\frac{3}{2}-5\alpha_2}} \right\} \\
        &\le C \max \left\{ \frac{(\log m)^{\beta_2}}{m^{\alpha- \alpha_2}},\frac{\log m}{m^{\frac{3}{2} \alpha - \frac{1}{4}- 3\alpha_2}}\right\}.
    \end{align*}
    Since $\alpha_2$ can be sufficiently small when $m$ is large, we find that 
    \begin{equation*}
        r_{\rm{max}} (T_{\rm{d}}) \le \frac{1}{m^{\frac{1}{2} \alpha + \frac{1}{4}}},
    \end{equation*}
    which means 
    \begin{equation}
        \sum_{k=1}^m \left( |a_k|- |w_k^1|\right)^2 \le  \sum_{k=1}^m (a_k - w_k^1)^2 \le \frac{1}{m^{\alpha- \frac{1}{2}}} .
    \end{equation}
    The results follows from above inequality since $\alpha > \frac{1}{2}$ and $\sum_{k=1}^m a_k^2 = \Theta(1)$.
\end{proof}

\subsection{Experiment}\label{sec...exp_descent_time}
In this subsection, we conduct experimental analyses to examine the relationship between descent time $T_{\rm{d}}$ and various hyperparameters. The initial descent milestone, $T_{\rm{d}}$, is defined as the endpoint of the initial descent stage in the loss curve. Beyond $T_{\rm{d}}$, the loss transitions into the secondary plateau stage. In Theorem~\ref{thm::descent}, we establish a result for $T_{\rm{d}}$ in terms of the initial standard deviation $\alpha$ and the network width $m$. Recall the relation~\eqref{eq...descent_time_informal}:
\begin{equation*}
   \lim_{m \rightarrow \infty} \frac{ T_{\rm{d}}}{\log m} = \frac{2\alpha-1}{2}.
\end{equation*}
This expression highlights the asymptotic relationship between $T_{\rm{d}}$, $\alpha$, and $m$ as the width grows.
We conduct several experiments for different values of $\alpha$ and $m$, identifying the initial descent milestone and plotting the results in Fig.~\ref{fig:descent_time}(a) confirms that $T_{\rm{d}}$ exhibits a linear relationship with $\alpha$ for sufficient large $m$. Similarly, Fig.~\ref{fig:descent_time}(b) illustrates that $T_{\rm{d}}$ is linearly related to $\log m$. This verifies our estimate even for not sufficiently large $m$, e.g. $m$ = 1000. The details of the experiment is provided in Appendix~\ref{app...exp}. 

\begin{figure}[h]
    \centering
    \subfigure[]{
    \includegraphics[width=0.4\linewidth]{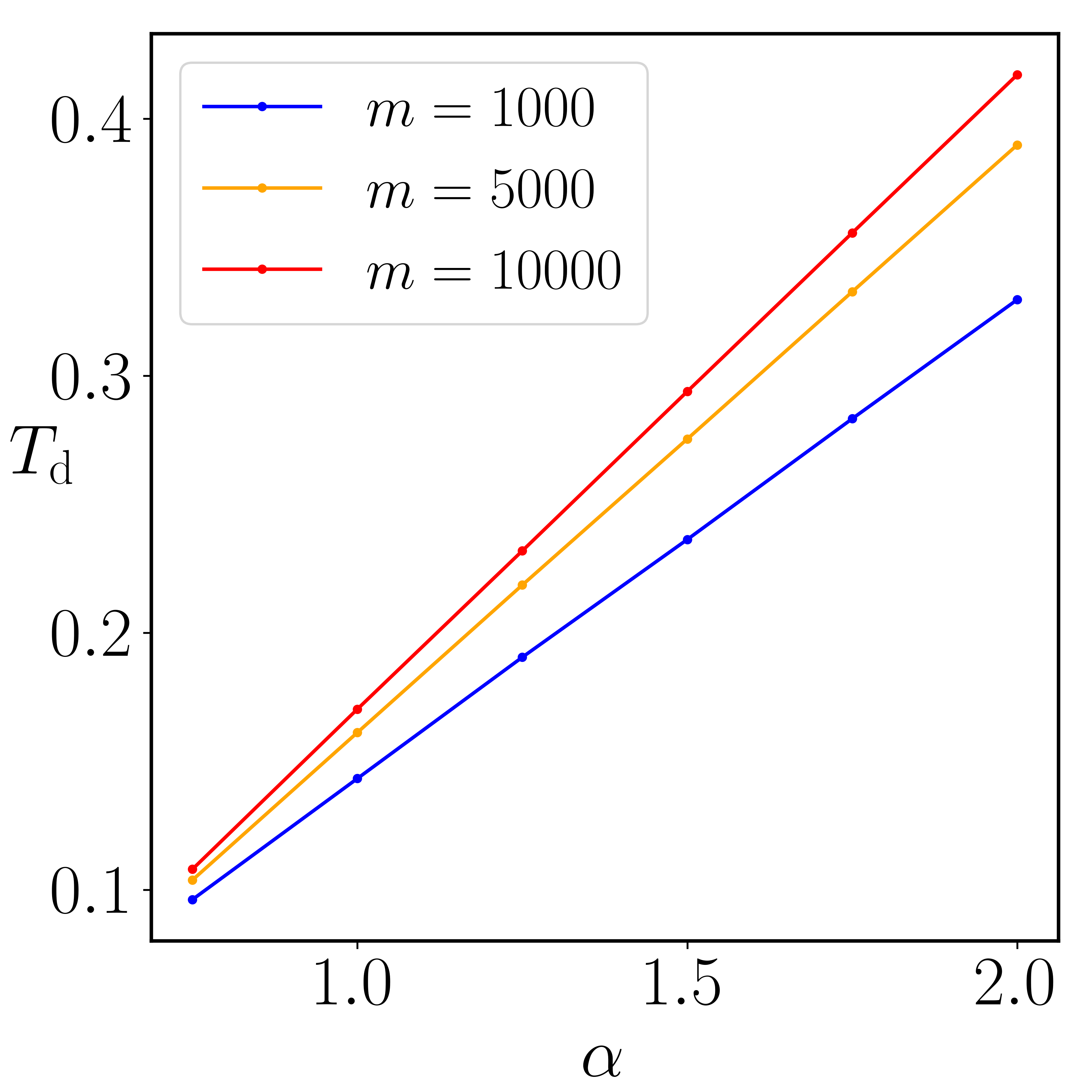}
    }
    \subfigure[]{
    \includegraphics[width=0.4\linewidth]{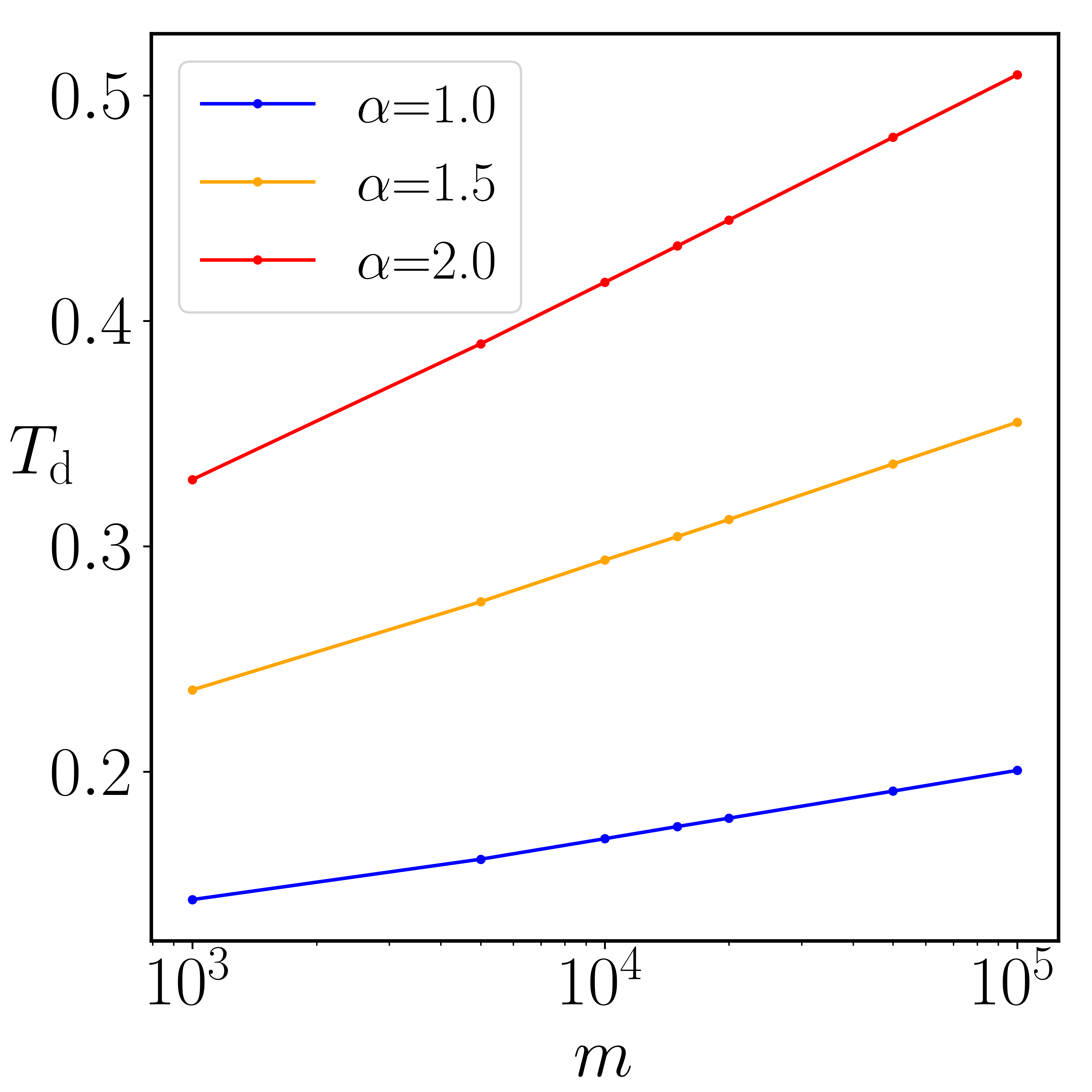}
    }
    \caption{Descent time $T_{\rm{d}}$ with respect to $\alpha$ (panel (a)) and $m$ (panel (b))}
    \label{fig:descent_time}
\end{figure}

\section{Secondary Plateau Stage}\label{sec...sp}
In this section, we explore the dynamic properties of the secondary plateau stage. Through rigorous theoretical analysis, we show that this stage lasts considerably longer than the preceding stages. Additionally, heuristic analysis suggests that, after a prolonged period, the parameters gradually depart from this plateau. We validate our findings experimentally using the Wasserstein distance.
\subsection{Long plateau}\label{sec...longtermplateau}
We now analyze the dynamics behavior at the secondary plateau stage and show that the dynamics can be approximately viewed as Eq.~\eqref{equ::sec_dynamics} due to the stability of the dynamics near critical point.  Also, we consider the case $\frac{1}{2} < \alpha \le \frac{3}{2}$ and another case is similar. First, we recall for sufficiently large $m$ and small $\delta'$, the following inequalities hold at $t = T_{\rm{d}}$ by Theorem~\ref{thm::descent}:
\begin{enumerate}[label=(\roman*).] 
    \setlength{\itemsep}{0.5em}
    \item $|K-1| \le \delta$ and $|K'-2| \le \delta$.
    \item $q_{\rm{max}} \le \frac{1}{m^{\frac{1}{2}-(2\alpha-1)\delta'}}$.
    \item $\max_{k \in m, i \in [2:d]} |w_k^i| \le \frac{1}{m^{\alpha-2(2\alpha-1)\delta'}}$
    \item $\left( \sum_{k=1}^m (w_k^i)^2 \right)^{\frac{1}{2}} \le C \min \left\{ \frac{1}{m^{\frac{2\alpha-1}{2}}} , \frac{\log m}{m^{1-5(2\alpha-1)\delta'}} \right\}$
\end{enumerate}

Based on above estimation, we give a description of the secondary plateau stage. 
Define $T_3 = \sup \{t~|~ q_{\rm{max}} \le \frac{1}{m^{\alpha_3}} \}$ where $\alpha_3 < \frac{1}{2}-\alpha_2$ and  $T_4 = \sup \{ t~|~ 1 \le K' \le 3 \}$, where $C>1$. Here we take $\alpha_3 = 0.4$ since $\delta'$ can be sufficiently small if $m$ is large. Note that setting $\alpha_3 =0.4$ is simply to demonstrate that the neural network remains near the ``critical point'' for an extended period. We also define
\begin{equation}\label{defi...T_sp}
T_{\rm{sp}} = T_{\rm{d}}+ \frac{1}{40} \frac{1}{\beta} \log m    
\end{equation}
as the end of the secondary plateau stage. This indicates that the secondary plateau will persist for a significant period.

\begin{thm}[Long secondary plateau]
\label{thm::second_plateau}
    For $\delta \in (0,1)$, sufficiently large $m$ and given $\alpha$, with probability at least $1 - \delta$ over the choice of initial parameter $\vtheta(T_0)$, the secondary plateau will longer than the first two stages significantly. That is 
    \begin{equation}
    \label{equ::long_sp}
        \frac{T_{\rm{sp}}-T_{\rm{d}}}{T_{\rm{d}}} \gg 1.
    \end{equation}
    Moreover, loss will will stagnate for an extended period on the secondary plateau. That is 
    \begin{equation}
    \label{equ::sp_small_change}
        \frac{ \left| R (\vtheta(T_{\rm{sp}})) - R (\vtheta(T_{\rm{d}})) \right|}{\left| R (\vtheta(T_{\rm{d}})- R (\vtheta(T_{\rm{p}})) \right|} \frac{T_{\rm{d}}- T_{\rm{p}}}{T_{\rm{sp}}-T_{\rm{d}}} \approx 0.
    \end{equation}
\end{thm}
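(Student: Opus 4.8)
The plan is to handle the two assertions separately: \eqref{equ::long_sp} reduces to a direct comparison of the defining time scales, while \eqref{equ::sp_small_change} amounts to showing that beyond $T_{\rm{d}}$ the effective dynamics is Eq.~\eqref{equ::sec_dynamics} and that the loss is essentially frozen along it. \textbf{Long plateau.} By definition $T_{\rm{sp}}-T_{\rm{d}} = \frac{1}{40\beta}\log m$, whereas Theorem~\ref{thm::descent} gives $T_{\rm{d}} \le \frac{2\alpha-1}{2-\delta'}\log m$ and $T_{\rm{d}}-T_{\rm{p}} = \Theta(\log m)$. Hence $\frac{T_{\rm{sp}}-T_{\rm{d}}}{T_{\rm{d}}} \ge \frac{2-\delta'}{40\beta(2\alpha-1)}$, an explicit constant that is arbitrarily large once $\beta$ is small, and likewise $T_{\rm{sp}}-T_{\rm{d}} \gg T_{\rm{d}}-T_{\rm{p}}$; this is \eqref{equ::long_sp}.

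For \eqref{equ::sp_small_change} I would run a bootstrap on $[T_{\rm{d}},T_{\rm{sp}}]$ with running hypotheses: (a) $q_{\rm{max}} \le m^{-\alpha_3}$ (i.e.\ $t \le T_3$); (b) $1 \le K' \le 3$ (i.e.\ $t \le T_4$); (c) $|1-K| \le 2\beta$. All three hold at $t=T_{\rm{d}}$ by the recalled estimates (i)--(iv). Under (a)--(c), Lemma~\ref{lem::higher_term} makes $f_k,\vg_k$ negligible, and the growth Lemma~\ref{lem::growth} (used with exponent $\alpha_3$) shows $\left(\sum_k (w_k^i)^2\right)^{1/2}$ grows by only $O\!\left(m^{-1/2}(\log m)/\beta\right)=o(1)$ over the whole window, so it stays at its small $T_{\rm{d}}$-level; hence in \eqref{equ::whole_dynamics} the terms involving $w_k^i$ ($i\ge 2$) are $o(1)$, and the dynamics of $a_k,w_k^1$ collapses to $\frac{\D a_k}{\D t} = (1-K)w_k^1 + o(1)$, $\frac{\D w_k^1}{\D t} = (1-K)a_k + o(1)$ — that is, Eq.~\eqref{equ::sec_dynamics}. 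The decisive refinement is that $\frac{\D K}{\D t} = K'(1-K) + O(\varepsilon_2)$, so with $K'\ge 1$ the quantity $1-K$ relaxes exponentially; since $\varepsilon_2=\Theta(\max\{m^{-1/4},m^{-(2\alpha-1)/2}\})$ is polynomially small, $\varepsilon_2(T_{\rm{sp}}-T_{\rm{d}}) = o(1)$ and $\int_{T_{\rm{d}}}^{T_{\rm{sp}}} |1-K|\,\diff s = O(\beta)$.

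Feeding the integral bound back closes the bootstrap: $|a_k(t)-a_k(T_{\rm{d}})|, |w_k^1(t)-w_k^1(T_{\rm{d}})| \le (O(\beta)+o(1))\, m^{-\alpha_3}$, which together with $q_{\rm{max}}(T_{\rm{d}}) = O(m^{-1/2+\alpha_2})$ re-establishes (a) with strict inequality; integrating $\frac{\D K'}{\D t} = 4K(1-K)+O(\varepsilon_2)$ gives $|K'(t)-K'(T_{\rm{d}})| = O(\beta)+o(1)<1$, re-establishing (b); and (c) follows from the exponential relaxation, all for $\beta$ small and $m$ large, so a standard open--closed continuity argument extends the estimates to all of $[T_{\rm{d}},T_{\rm{sp}}]$. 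Then I would substitute into the rearranged expansion from the proof of Theorem~\ref{thm::initial_stage}, $R(\vtheta) = \tfrac12\int f^2\rho\,\diff\vx + \tfrac12(K-1)^2 - \tfrac12 + \tfrac12\sum_{i=2}^d\left(\sum_k a_k w_k^i\right)^2 + \tilde R$, with $|\tilde R| \le Cm q_{\rm{max}}^4$: between $T_{\rm{d}}$ and $T_{\rm{sp}}$ the $(K-1)^2$-term changes by $O(\beta^2)$, the $\sum_{i\ge 2}$-term by $o(1)$ (using the control on $\sum_k (w_k^i)^2$ and $\sum_k a_k^2 \le K' \le 3$), and $\tilde R$ by $O(m^{1-4\alpha_3}) = o(1)$; hence $|R(\vtheta(T_{\rm{sp}}))-R(\vtheta(T_{\rm{d}}))| = O(\beta^2)+o(1)$. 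Since $|R(\vtheta(T_{\rm{d}}))-R(\vtheta(T_{\rm{p}}))| \ge \tfrac14$ by Theorem~\ref{thm::descent}(ii) and $\frac{T_{\rm{d}}-T_{\rm{p}}}{T_{\rm{sp}}-T_{\rm{d}}} = \Theta(\beta)$, the left-hand side of \eqref{equ::sp_small_change} is $O(\beta^3)+o(1)$, which tends to $0$ as $m\to\infty$ and then $\beta\to 0$, matching the informal ``$\approx 0$''.

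The main obstacle is closing the bootstrap over the long window $[T_{\rm{d}},T_{\rm{sp}}]$, whose length is $\Theta\!\left(\tfrac1\beta\log m\right)$: the crude bound $\|1-K\|_{\infty}\,(T_{\rm{sp}}-T_{\rm{d}})$ is far too lossy to keep $q_{\rm{max}}$ below $m^{-\alpha_3}$, so it is essential to exploit the exponential relaxation $1-K \approx \beta\,\E^{-(t-T_{\rm{d}})}$ and control the weight drift by $\int|1-K|\,\diff s = O(\beta)$ instead, while verifying that the higher-order errors accumulated over this window still sum to $o(1)$. A related point to be careful about is that the $O(\beta)$'s are genuine constants rather than $o(1)$, so the precise statement is about the iterated limit $m\to\infty$ then $\beta\to 0$.
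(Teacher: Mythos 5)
Your proposal is correct, and its skeleton coincides with the paper's: both reduce everything to showing that $q_{\rm{max}}$ stays below $m^{-\alpha_3}$ (i.e.\ $T_{\rm{sp}}\le T_3$) via a continuation argument with running hypotheses on $q_{\rm{max}}$ and $K'$, both control the non-condensed directions with Lemma~\ref{lem::growth}, and both finish with the same loss expansion together with the lower bound $|R(\vtheta(T_{\rm{d}}))-R(\vtheta(T_{\rm{p}}))|\ge\tfrac14$. Where you genuinely diverge is in the one step that matters — bounding the drift of $a_k,w_k^1$ over the window of length $\Theta(\beta^{-1}\log m)$. The paper uses only the uniform bound $|1-K|\le 2\beta$ (obtained by a Gronwall argument on $(K-1)^2$) and then the neuron-wise Gronwall estimate $v(t)\lesssim \E^{2\beta(t-T_{\rm{d}})}v(T_{\rm{d}})$; this tolerates a multiplicative growth factor $m^{1/20}$, which is precisely why the constants $\tfrac{1}{40\beta}$ and $\alpha_3=0.4<0.45$ are tuned the way they are, and it also forces a separate case analysis (the conservation-law argument when $T_3$ is hit first, point estimation when $T_4$ is). You instead exploit the sign structure $\frac{\D}{\D t}(1-K)=-K'(1-K)+O(\varepsilon)$ with $K'\ge 1$ to get exponential relaxation and hence $\int_{T_{\rm{d}}}^{T_{\rm{sp}}}|1-K|\,\diff s=O(\beta)$, so that $q_{\rm{max}}$ essentially does not grow at all. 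Your estimate is sharper and more general — it would sustain a plateau much longer than $\tfrac{1}{40\beta}\log m$ and makes the choice of $\alpha_3$ non-critical — at the cost of having to verify that the accumulated polynomial error $O(\varepsilon\beta^{-1}\log m)$ stays $o(1)$ and that the relaxation survives a possible sign change of $1-K$; the paper's cruder route avoids both issues but only delivers the specific, carefully tuned window. Your explicit remark that the conclusion is an iterated limit ($m\to\infty$ then $\beta\to 0$) is a point the paper leaves implicit in its ``$\approx 0$''.
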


The overall proof is similar to the previous one. Based on growth Lemma~\ref{lem::growth}, we have following proposition. 
\begin{prop}[Estimate of weights in non-condensed directions at secondary plateau stage]
 For $\delta \in (0,1)$ and sufficiently large $m$, with probability at least $1 - \delta$ over the choice of initial parameter $\vtheta(T_0)$, for $T_{\rm{d}} \le t \le \min \{ T_3, T_4, T_{\rm{sp}} \}$ and $i=2,\dots,d$, $\sum_{k=1}^m (w_k^i)^2$ will be controlled by following inequality,
\begin{equation}
     \left( \sum_{k=1}^m \left( w_k^i (t) \right)^2  \right)^{\frac{1}{2}} \le C \max \left\{ \frac{1}{m^{\frac{2\alpha-1}{2}}} , \frac{\log m}{m^{1-5(2\alpha-1)\delta'}} \right\} + C \frac{(\log m)^2}{m^{5 \alpha_3 - \frac{3}{2}}}.
\end{equation}
\end{prop}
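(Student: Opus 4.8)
The plan is to repeat, essentially verbatim, the argument behind Propositions~\ref{prop::initial_control} and~\ref{prop::descent_control}. Only three ingredients are needed: the growth Lemma~\ref{lem::growth}; the endpoint estimate at $t=T_{\rm{d}}$ already recorded as items (i)--(iv) preceding Theorem~\ref{thm::second_plateau} (which hold on the probability-$(1-\delta)$ event supplied by Theorem~\ref{thm::descent}); and a crude bound on the length of the secondary plateau interval. I would fix the event of Theorem~\ref{thm::descent}, on which, for each $i\in[2:d]$,
\[
\Bigl(\sum_{k=1}^m (w_k^i(T_{\rm{d}}))^2\Bigr)^{1/2}\ \le\ C\max\Bigl\{\frac{1}{m^{\frac{2\alpha-1}{2}}},\ \frac{\log m}{m^{1-5(2\alpha-1)\delta'}}\Bigr\}.
\]

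\textbf{Step 1: invoke the growth lemma.} On $T_{\rm{d}}\le t\le \min\{T_3,T_4,T_{\rm{sp}}\}$ the definition of $T_3$ gives $q_{\rm{max}}(t)\le m^{-\alpha_3}$; since $\alpha_3=0.4>0$, for large $m$ we have $m^{-\alpha_3}(\log m)^{\gamma}\le 1$ for any fixed small $\gamma>0$, so Lemma~\ref{lem::growth} applies with $t_0=T_{\rm{d}}$, exponent $\alpha_3$ in place of its parameter $\beta$, and $\gamma$ arbitrarily small. This yields
\[
\Bigl(\sum_{k=1}^m (w_k^i(t))^2\Bigr)^{1/2}\ \le\ \Bigl(\sum_{k=1}^m (w_k^i(T_{\rm{d}}))^2\Bigr)^{1/2}+C\max\Bigl\{\frac{(\log m)^{3\gamma}}{m^{3\alpha_3-\frac{1}{2}}},\ \frac{(\log m)^{5\gamma}}{m^{5\alpha_3-\frac{3}{2}}}\Bigr\}\,(t-T_{\rm{d}}),
\]
and since $\alpha_3<\tfrac12$ implies $3\alpha_3-\tfrac12>5\alpha_3-\tfrac32$, the second term in the maximum is the dominant one.

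\textbf{Step 2: bound the elapsed time and conclude.} By~\eqref{defi...T_sp} we have $t-T_{\rm{d}}\le T_{\rm{sp}}-T_{\rm{d}}=O(\log m)$. Substituting this and the endpoint estimate into the display of Step~1, and absorbing the $O(\log m)$ factor together with the (arbitrarily small) power $(\log m)^{5\gamma}$ into a $(\log m)^{2}$, gives
\[
\Bigl(\sum_{k=1}^m (w_k^i(t))^2\Bigr)^{1/2}\ \le\ C\max\Bigl\{\frac{1}{m^{\frac{2\alpha-1}{2}}},\ \frac{\log m}{m^{1-5(2\alpha-1)\delta'}}\Bigr\}+C\frac{(\log m)^{2}}{m^{5\alpha_3-\frac{3}{2}}},
\]
which is the claimed bound; the exponent $2$ on $\log m$ is deliberately generous.

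\textbf{Where the difficulty lies.} There is no genuine obstacle here --- this is pure bookkeeping, structurally identical to Proposition~\ref{prop::descent_control}. The two points that require a moment's care are: (a) consistency of conditioning, namely that the endpoint bound at $T_{\rm{d}}$ holds only on the probability-$(1-\delta)$ event of Theorem~\ref{thm::descent} whereas Lemma~\ref{lem::growth} is deterministic, so the statement inherits probability $1-\delta$; and (b) the reading of $T_3$, namely that $q_{\rm{max}}\le m^{-\alpha_3}$ genuinely holds throughout $[T_{\rm{d}},\min\{T_3,T_4,T_{\rm{sp}}\}]$ --- this uses that $q_{\rm{max}}(T_{\rm{d}})\le m^{-(1/2-(2\alpha-1)\delta')}\ll m^{-\alpha_3}$ for small $\delta'$ (item~(ii) of the list preceding Theorem~\ref{thm::second_plateau}), so that $T_{\rm{d}}$ lies comfortably inside the time interval on which the $q_{\rm{max}}$ threshold is respected, exactly as in the preceding propositions.
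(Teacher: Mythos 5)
Your proposal is correct and follows essentially the same route as the paper: apply Lemma~\ref{lem::growth} on $[T_{\rm{d}},\min\{T_3,T_4,T_{\rm{sp}}\}]$ with the $q_{\rm{max}}$ threshold $m^{-\alpha_3}$ coming from the definition of $T_3$, plug in the endpoint bound at $t=T_{\rm{d}}$ from Theorem~\ref{thm::descent}, and multiply the growth rate by the elapsed time $T_{\rm{sp}}-T_{\rm{d}}=\frac{1}{40\beta}\log m$, absorbing the resulting $\frac{1}{\beta}\log m$ factor into $(\log m)^2$. Your added remarks on which term of the maximum dominates and on the consistency of the probabilistic conditioning are accurate bookkeeping that the paper leaves implicit.
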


\begin{proof}
By growth Lemma~\ref{lem::growth}, we get for sufficiently large $m$
    \begin{align*}
        \left( \sum_{k=1}^m \left( w_k^i (t) \right)^2  \right)^{\frac{1}{2}} &\le \left( \sum_{k=1}^m \left( w_k^i (T_{\rm{d}}) \right)^2 \right)^{\frac{1}{2}} \\
        &\le C \max \left\{ \frac{1}{m^{\frac{2\alpha-1}{2}}} , \frac{\log m}{m^{1-5(2\alpha-1)\delta'}} \right\} + C \frac{1}{\beta} \frac{\log m}{m^{5 \alpha_3 - \frac{3}{2}}} \\
        &\le C \max \left\{ \frac{1}{m^{\frac{2\alpha-1}{2}}} , \frac{\log m}{m^{1-5(2\alpha-1)\delta'}} \right\} + C \frac{(\log m)^2}{m^{5 \alpha_3 - \frac{3}{2}}}.
    \end{align*}
\end{proof}
Now we turn to the proof of Theorem~\ref{thm::second_plateau}.
\begin{proof}[Proof of Theorem~\ref{thm::second_plateau}]
    By definition of $T_{\rm{sp}}$, inequality~\eqref{equ::long_sp} is straightforward. Therefore, what actually needs to be proven is that the loss is indeed at the secondary plateau at time $T_{\rm{sp}}$, which corresponds to the inequality~\eqref{equ::sp_small_change}. With the help of $T_3$, this only requires proving 
    \begin{equation}
        T_{\rm{sp}} \le T_3.
    \end{equation}
    The remaining part only requires simple calculations. The whole proof can be divided into three parts.
\begin{enumerate}[label = (\roman*).,wide, labelwidth=!,labelindent=0pt]
\item We first establish a control of $w_k^i$ where $i\in [2:d]$ at $t = \min \{ T_3, T_4, T_{\rm{sp}}\}$. 
That is 
\begin{align*}
    |w_k^i (t)| &\le |w_k^i (T_{\rm{d}})| + \int_{T_{\rm{d}}}^t |a_k| \sum_{k=1}^m |a_k| |w_k^i| \diff{s} + \int_{T_{\rm{d}}}^t |g_k^i| \diff{s} \\
    & \le \frac{1}{m^{\alpha-2(2\alpha-1)\delta'}} + \int_{T_{\rm{d}}}^t \frac{1}{m^{\alpha_3}} C \max \left\{ \frac{1}{m^{\frac{2\alpha-1}{2}}} , \frac{\log m}{m^{1-5(2\alpha-1)\delta'}} \right\} \diff{s}\\
    &+ \int_{T_{\rm{d}}}^t  C \frac{\log m}{m^{5 \alpha_3 - \frac{3}{2}}} \diff{s}.
\end{align*}
As a result, $|w_k^i| < \frac{1}{m^{\alpha_3}}$ at $t = \min \{ T_3, T_4, \frac{1}{40} \frac{1}{\beta} \log m\}$, given that $\alpha_3 = 0.4$ and  $\beta$ is a given small constant, which allows us to choose $m$ to be sufficiently large. This implies that $q_{\rm{max}}$ can't be attained by $w_k^i$.

\item Next, we examine the dynamics of $K$: 
    \begin{align*}
        \frac{\D K}{\D t} = K'(1-K) - \sum_{i=2}^d \left( \sum_{k=1}^m a_k w_k^i \right) \left( \sum_{k=1}^m w_k^1 w_k^i \right) +  \sum_{k=1}^m w_k^1 f_k + \sum_{k=1}^m a_k g_k^1.
    \end{align*}
Then for $t \le \min \{ T_3, T_4, T_{\rm{sp}}\}$, we have 
\begin{equation*}
    (K-1)^2 (t) - (K-1)^2 (T_{\rm{d}}) \le C \varepsilon \int_{T_{\rm{d}}}^t |K-1| \diff{s}.
\end{equation*}
where $\varepsilon = \min \left\{ \frac{1}{m^{2\alpha-1}} , \frac{(\log m)^2}{m^{2-10(2\alpha-1)\delta'}},\frac{1}{m^{5 \alpha_3 - \frac{3}{2}}} \right\}$. By applying the Gronwall inequality, we find that the following inequality holds for sufficiently large $m$,
\begin{equation}
    |K-1|(t) \le \beta + C \varepsilon \frac{1}{\beta} \log m \le 2\beta.
\end{equation}

\item We assert that $\min \left\{ T_3,T_4 \right\}\ge \frac{1}{40} \frac{1}{\beta} \log m$. 
If this is not the case, there are two possible cases:
\begin{itemize}
    \item $T_3 \le \frac{1}{40} \frac{1}{\beta} \log m$ and $T_3 \le T_4$. 
    \item $T_4 \le \frac{1}{40} \frac{1}{\beta} \log m$ and $T_4 \le T_4$.
\end{itemize}
We will proceed with a case-by-case analysis. If the first case holds, the main technique we use is the conservation law. Since
\begin{align*}
    4K \frac{\D K}{\D t} - K' \frac{\D K'}{\D t} &= - 4K \sum_{i=2}^d \left(\sum_{k=1}^m a_k w_k^i \right)\left(\sum_{k=1}^m w_k^1 w_k^i \right) + 2K' \sum_{i=2}^d \left( \sum_{k=1}^m a_k w_k^i \right)^2 \\
    &- 4K \left( \sum_{k=1}^m w_k^1 f_k + \sum_{k=1}^m a_k g_k \right) + K'\left(\sum_{k=1}^m a_k f_k +  \sum_{k=1}^m w_k^1 g_k \right).
\end{align*}
The right hand of the equality is less than $C \varepsilon (K')^2$. Similar to the proof of Proposition~\ref{prop::1/2}, $K/K' \approx \frac{1}{2}$, which means
$K' \approx 2$ since $|K-1| \le 2 \beta$. It contradicts with the definition of $T_3$.

If the second case holds, the technique we now use is the point estimation which is similar to what we have done before but with some difference.
\begin{align*}
    \frac{\D a_k}{\D t} &= w_k^1 - \left( \sum_{l=1}^m a_l w_l^1 \right) w_k^1 - \sum_{i=2}^d \left( \sum_{l=1}^m a_l w_l^i \right) w_k^i + f_k,\\
    \frac{\D w_k^1}{\D t} & = a_k - \left( \sum_{l=1}^m a_l w_l^1 \right) a_k +g_k^1.
\end{align*}
Then for $t \le T_4$, we obtain
\begin{align*}
    |a_k(t)| &\le |a_k(T_{\rm{p}})| + 2\beta \int_{T_{\rm{p}}}^t |w_k^1| \diff{s} + \int_{T_{\rm{p}}}^t C \min\{ \frac{1}{m^{\frac{2\alpha-1}{2}}}, \frac{1}{m^{1-5c_0}}\} \frac{1}{m^{\alpha- c_0}} \diff{s} ,\\
    |w_k^1(t)| & \le |w_k^1 (T_{\rm{p}})| + 2\beta \int_{T_{\rm{p}}}^t |a_k| \diff{s} + \int_{T_{\rm{p}}}^t \frac{1}{m^{\frac{3}{2}- 5c_0}} \diff{s}.
\end{align*}
This means that 
\begin{equation}
    v(t) \le (1+\E^{2\beta (t-T_{\rm{p}})} )v(t_0).
\end{equation}
Taking $t-T_{\rm{p}} = \frac{1}{40}\frac{1}{\beta} \log m$, we find that $q_{\rm{max}} \le \frac{1}{m^{0.45}}$. It leads to a contradiction.

As a result, $\min \left\{T_4,T_{\beta} \right\} \le \frac{1}{40} \frac{1}{\beta} \log m$. Moreover, we have $T_4 \le T_3$ by similar discussion.

\end{enumerate}
\end{proof}

\subsection{Departure from the plateau}\label{sec...plateau_departure}
The next question is how the dynamics drive the parameters away from the critical point. Here, we provide insights through experiments and heuristic analysis. Recall the definition of $f_k$ and $\vg_k$.
\begin{align*}
    f_k & = \int_{\sR^d} f(\vx) \left( \frac{1}{3!} \sigma^{(3)} (\xi_k(\vx)) (\vw_k^{\T} \vx)^3 \right) \rho(\vx) \diff{\vx} \\
    &\quad - \int_{\sR^d} \left( \sum_{l=1}^m \frac{1}{3!} \sigma^{(3)} (\xi_l(\vx)) a_l (\vw_l^{\T} \vx)^3  \right) (\vw_k^{\T} \vx) \rho(\vx) \diff{\vx} \\
    & \quad - \int_{\sR^d} \left( \sum_{l=1}^m a_l \left( \vw_l^{\T} \vx + \frac{1}{3!} \sigma^{(3)}(\xi_l(\vx)) (\vw_l^{\T} \vx)^3\right) \right) \left( \frac{1}{3!} \sigma^{(3)} (\xi_k(\vx)) (\vw_k^{\T} \vx)^3 \right)  \rho(\vx) \diff{\vx}.\\
    \vg_k &=\int_{\sR^d} f(\vx) a_k\left(\frac{1}{2!} \sigma^{(3)}\left(\eta_k (\vx)\right)\left(\vw_k^{\T} \vx\right)^2\right) \vx \rho(\vx) \diff{\vx}  \\
    & \quad -\int_{\sR^d} \left(\sum_{l=1}^m a_l \left(\vw_l^{\T} \vx\right)\right) a_k \left( \frac{1}{2!} \sigma^{(3)}\left(\eta_k (\vx)\right)\left( \vw_k^{\T} \vx \right)^2 \right) \vx \rho(\vx) \diff{\vx} \\
    & \quad -\int_{\sR^d} \left(\sum_{l=1}^m a_l \frac{1}{3!} \sigma^{(3)}\left(\xi_l (\vx)\right)\left(\vw_l^{\T} \vx\right)^3\right) a_k\left(1+\frac{1}{2!} \sigma^{(3)}\left(\eta_k (\vx)\right)\left(\vw_k^{\T} \vx \right)^2\right) \vx \rho(\vx) \diff{\vx}.
\end{align*}
In above discussion, we actually use linear model $\sum_{k=1}^m a_k \vw_k^{\T} \vx$ to characterize the behavior of the nonlinear model $f_{\vtheta}$. On the secondary plateau, the original system's nonlinearity gradually begins to emerge. The first noticeable effect is that, at this stage, the roles of $a$ and $\vw$ are no longer equivalent. For simplicity, we consider a $1$-d example for illustration and truncate only up to the third order, as we are considering the tanh activation function. As a result, $f_k$ and $\vg_k$ will be approximated by 
\begin{align*}
    f_k &= -\frac{1}{3!} c_4 \sigma^{(3)}(0) \left( \sum_{l=1}^m a_l w_l^3 \right) w_k + \frac{1}{3!} b_3 \sigma^{(3)}(0) w_k^3, \\
    \vg_k&= -\frac{1}{3!} c_4 \sigma^{(3)}(0) \left( \sum_{l=1}^m a_l w_l^3 \right) a_k + \frac{1}{2!} b_3 \sigma^{(3)}(0) a_k w_k^2 \\
    & \quad - \frac{1}{2!} \sigma^{(3)} (0) c_4 \left( \sum_{l=1}^m a_l w_l \right) a_k w_k^2, 
\end{align*}
where $c_4 = \int x^4 \rho(x) \D x$ and $b_3 = \int f(x) x^3 \rho(x) \D x$.
Substituting the condition of critical point \(\sum_{k=1}^m a_k w_k = 1\) into whole dynamics, we obtain \(\frac{\D(\sum_{k=1}^m a_k^2)}{\D(\sum_{k=1}^m w_k^2)} \neq 1\). 

Thus, the roles of $\va$ and $\vw$ begin to shift. Once this shift progresses sufficiently, the nonlinearity starts to fully emerge, causing the loss to depart from the secondary plateau.
\subsection{Experiment}\label{sec...exp_target}
In the previous proofs and discussions, we characterized the microscopic behavior of the parameters across the three stages. During the initial descent stage, the distributions of the input weights $\va$ and output weights $\vw$ gradually become consistent. At this point, training enters the secondary plateau. The descent of the training loss resumes only after the weight distributions diverge. Meanwhile, the weight norms exceed the linear regime, entering the nonlinear region. To experimentally demonstrate that this dynamical phenomenon exists universally under our assumptions, we test different target functions and display the ratio of their weights norms, along with the relative Wasserstein distance between the weights distributions in Fig.~\ref{fig:target_plot}. We can observe in Fig.~\ref{fig:target_plot}(a) that this ratio stays around the value of 1 (gold dash line) for a period before the time it begins to change, 
after which the behavior of weights depends on the settings of different target functions. Moreover, the relative Wassertein distances in Fig.~\ref{fig:target_plot} approach zero simultaneously, indicating that the input weights and output weight have the same distribution as well as the same norm at this time. From Fig.~\ref{fig:target_plot} we can obtain a consistent performance of the first three stages for different target functions, validating the correctness of our theorems. Additionally, the details of the setting are provided in the Appendix~\ref{app...exp}.
\begin{figure}[h]
    \centering
    \subfigure[${\norm{\va}_2}/{\norm{\vw}_{\rm{F}}}$]{
    \includegraphics[width=0.4\linewidth]{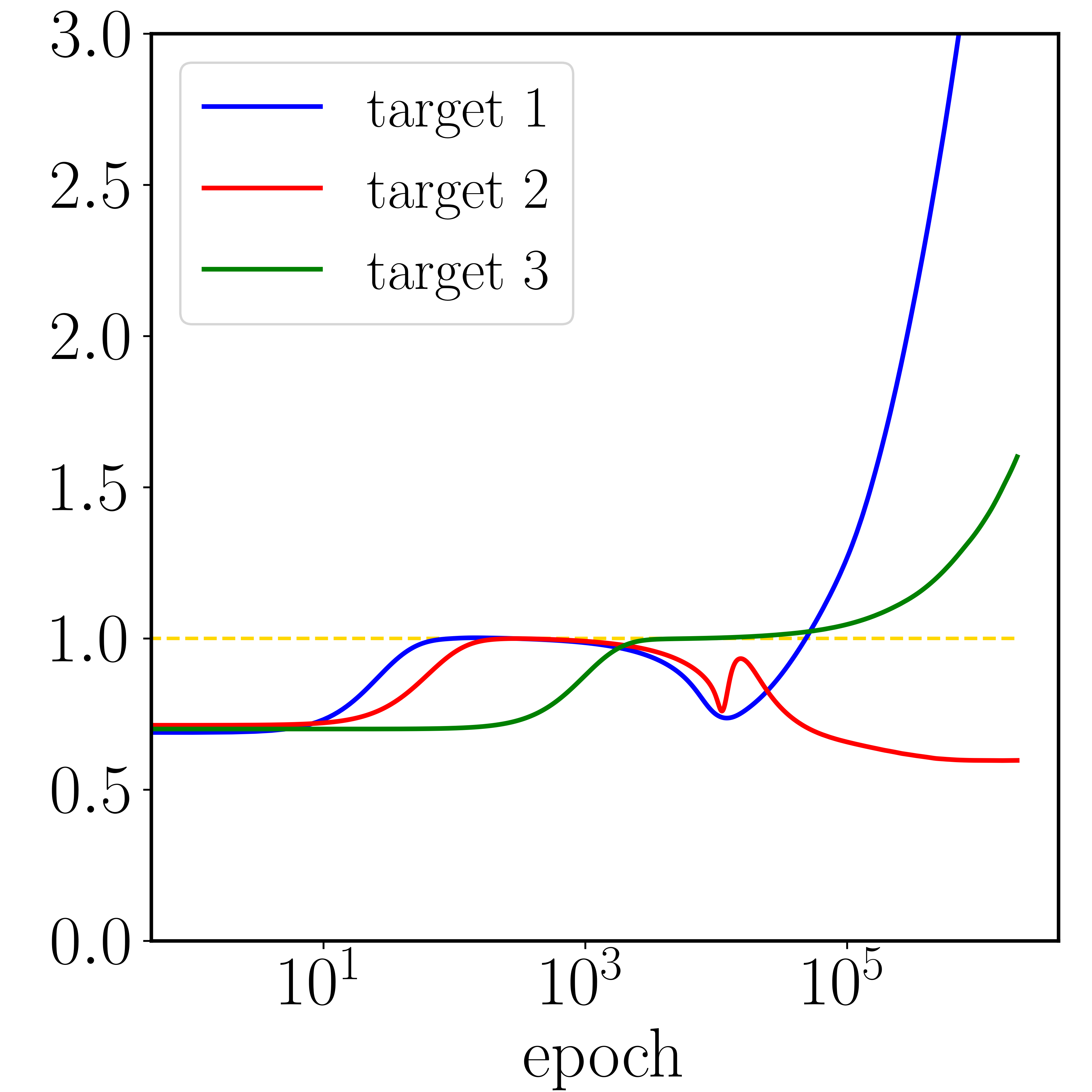}
    }
    \subfigure[$W^{\rm{rel}}_2 (\rho_{|a|}, \rho_{\| \vw \|})$]{
    \includegraphics[width=0.4\linewidth]{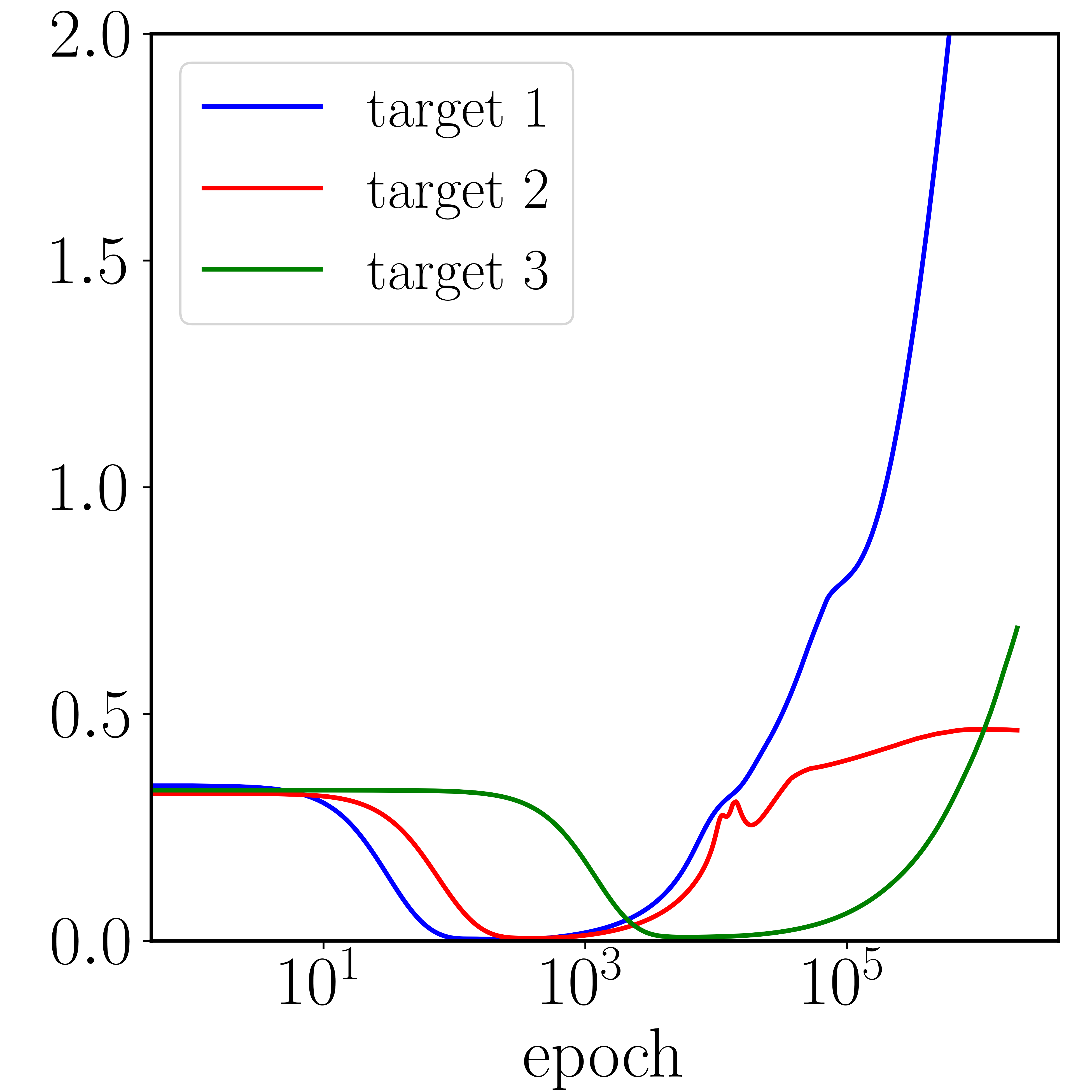}
    }
    \caption{Panel (a) and (b): Towards different target function, the curve of ratio of weight norms and the relative Wasserstein distance, respectively.}
    \label{fig:target_plot}
\end{figure}

\section{Summary}\label{sec...summary}
In this paper, we consider the dynamics of two-layer neural networks during training with small initialization. We identify three distinct stages and provide rigorous descriptions for each stage in an over-parameterized setting. We use linearized dynamics to characterize the original full dynamics and prove that parameters converge along a few direction in initial plateau stage. By considering higher-order terms, we propose an approximate system which can be analytically studied to describe the initial descent stage and verify the result experimentally. For secondary plateau stage, we theoretically characterize the duration of secondary plateau and give an intuitive explanation of the end of the plateau. Additionally, we introduce Wasserstein metric to characterize the distribution of weights during training. Our results further extend previous studies on the training process with small initialization. In particular, we provide new understanding of the training dynamics of neural networks, not only from a macroscopic perspective of loss performance but also from the more microscopic parameter behavior.

\acks{
This work is sponsored by the National Key R\&D Program of China Grant No. 2022YFA1008200 (T. L.), the National Natural Science Foundation of China Grant No. 12101401 (T. L.),  Shanghai Municipal Science and Technology Key Project No. 22JC1401500 (T. L.), Shanghai Municipal of Science and Technology Major Project No. 2021SHZDZX0102 (T. L.).}

\newpage
\appendix
\section{Notation List}\label{app...notation}
See Tab.~\ref{tab:notation} for key notation.
\begin{table}[ht]
    \centering
        \begin{tabular}{>{\centering\arraybackslash}m{0.2\linewidth}|>{\centering\arraybackslash}m{0.3\linewidth}||>{\centering\arraybackslash}m{0.2\linewidth}|>{\centering\arraybackslash}m{0.3\linewidth}}
        \toprule
        \textbf{Notation} & \textbf{Description}&\textbf{Notation} & \textbf{Description} \\
        \hline
        $m,\alpha$ & width and hyperparameter in initialization & $T_1,T_2,T_3,T_4$ & milestones for characterizing the stages, see Eqs.~\eqref{defi...T_1} and \eqref{defi...T_2} and Section~\ref{sec...longtermplateau} \\ 
        \hline
          $K,K^\prime$ & macroscopic quantities of parameters, see Eqs.~\eqref{defi::K} and ~\eqref{defi::K'} & $T_{\rm{test}}$ & intuitive conjecture milestone, see Lemma~\ref{lem::growth} \\
        \hline
        $q_{\rm{max}}$, $\tilde{q}_{\rm{max}}$ & neuron-wise $L_{\infty}$ norm, see Eqs.~\eqref{defi::q_max} and \eqref{defi...q_max_tilde} & $T_{\rm{p}}$ & milestone for the initial plateau stage, see Theorem~\ref{thm::initial_stage} \\
        \hline
        $r_{\rm{max}}$ & weights deviation from linearized dynamics, see Eq.~\eqref{defi...rmax} & $T_{\rm{d}}$ & milestone for the initial descent stage, see Eq.~\eqref{defi...T_d} \\
        \hline
        $W_2(\rho_1,\rho_2)$ & Wasserstein distance between $\rho_1$ and $\rho_2$, see Eq.~\eqref{defi....wassersteindist} &  $T_{\rm{sp}}$ & milestone for the secondary plateau stage, see Eq.~\eqref{defi...T_sp}  \\ 
        \hline

        \bottomrule
        \end{tabular}
    \caption{Notation list}
    \label{tab:notation}
\end{table}

\section{Derivation of Main Dynamical System}
In this section, we derive Eq.~\eqref{equ::grad_descent}. Since we consider the gradient flow, the dynamics of parameters reads as
\begin{align*}
    \frac{\D a_k}{\D t} &= - \nabla_{a_k}R, \\
    \frac{\D \vw_k}{\D t} &= - \nabla_{\vw_k}R.
\end{align*}
We sequentially calculate the equations for $a_k$ and $\vw_k$ as follows. First, for $a_k$, we have
\begin{align*}
    \frac{\D a_k}{\D t} &= - \int_{\sR^d} \left( f_{\vtheta}(\vx) - f(\vx)\right) \sigma(\vw_k^{\T} \vx) \rho(\vx) \diff{\vx}  \\
    &= -\int_{\sR^d} \left( \sum_{k=1}^m a_k \sigma(\vw_k^{\T} \vx) - f(\vx)\right) \sigma(\vw_k^{\T} \vx) \rho(\vx) \diff{\vx}.
\end{align*}
By Taylor's expansion and Assumption \ref{assump..TanhActivation}, we have 
\begin{equation*}
    \frac{\D a_k}{\D t} = - \int_{\sR^d} \left( \sum_{l=1}^m a_l \left( \vw_l^{\T} \vx + \frac{1}{3!} \sigma^{(3)}(\xi_l(\vx)) (\vw_l^{\T} \vx)^3\right) - f(\vx)\right) \left( \vw_k^{\T} \vx + \frac{1}{3!} \sigma^{(3)}(\xi_k(\vx)) (\vw_k^{\T} \vx)^3\right) \rho(\vx) \diff{\vx}.
\end{equation*}
Expanding and rearranging the terms, we obtain
\begin{align}
    \frac{\D a_k}{\D t} 
    &= \int_{\sR^d} f(\vx) (\vw_k^{\T} \vx) \rho(\vx) \diff{\vx} - \int_{\sR^d} \left( \sum_{l=1}^m  a_l \vw_l^{\T} \vx \right) (\vw_k^{\T} \vx) \rho(\vx) \diff{\vx}\nonumber \\
    &\quad + \int_{\sR^d} f(\vx) \left( \frac{1}{3!} \sigma^{(3)} (\xi_k(\vx)) (\vw_k^{\T} \vx)^3 \right) \rho(\vx) \diff{\vx}\nonumber \\
    &\quad - \int_{\sR^d} \left( \sum_{l=1}^m \frac{1}{3!} \sigma^{(3)} (\xi_l(\vx)) a_l (\vw_l^{\T} \vx)^3  \right) (\vw_k^{\T} \vx) \rho(\vx) \diff{\vx}\nonumber \\
    & \quad - \int_{\sR^d} \left( \sum_{l=1}^m a_l \left( \vw_l^{\T} \vx + \frac{1}{3!} \sigma^{(3)}(\xi_l(\vx)) (\vw_l^{\T} \vx)^3\right) \right) \left( \frac{1}{3!} \sigma^{(3)} (\xi_k(\vx)) (\vw_k^{\T} \vx)^3 \right)  \rho(\vx) \diff{\vx}.\label{equ::derivation_ak_1}
\end{align}
Note that Assumption \ref{assump..Non-degenDirect} leads to
\begin{equation}
\label{equ::derivation_ak_2}
    \int_{\sR^d} f(\vx) (\vw_k^{\T} \vx) \rho(\vx) \diff{\vx} =  w_k^1.
\end{equation}
Using Assumption \ref{assump..SymSamp}, we have 
\begin{align}
    \int_{\sR^d} \left( \sum_{l=1}^m  a_l \vw_l^{\T} \vx \right) (\vw_k^{\T} \vx) \rho(\vx) \diff{\vx} &= \int_{\sR^d} \left( \sum_{l=1}^m a_l \left( \sum_{i=1}^d w_l^i x_i\right)\right) \left( \sum_{j=1}^d w_k^j x_j \right) \rho(\vx) \diff{\vx}\nonumber \\
    &= \int_{\sR^d} \left( \sum_{i=1}^d \left(\sum_{l=1}^m a_l w_l^i \right) x_i\right) \left( \sum_{j=1}^d w_k^j x_j \right) \rho(\vx) \diff{\vx}\nonumber \\
    &= \sum_{i=1}^d \left( \sum_{l=1}^m a_l w_l^i\right) w_k^i.\label{equ::derivation_ak_3}
\end{align}
Combining Eqs.~\eqref{equ::derivation_ak_1}, \eqref{equ::derivation_ak_2}, \eqref{equ::derivation_ak_3} with the definition of $f_k$, we have 
\begin{equation*}
    \frac{\D a_k}{\D t} = w_k^1 - \sum_{i=1}^d \left( \sum_{l=1}^m a_l w_l^i\right) w_k^i + f_k.
\end{equation*}

Next, for $\vw_k$, we have 
\begin{align*}
    \frac{\D \vw_k}{\D t} &= - \int_{\sR^d} \left( f_{\vtheta} (\vx) - f(\vx) \right) a_k \sigma^{(1)} (\vw_k^{\T} \vx) \vx \rho(\vx) \diff{\vx} \\
    &= - \int_{\sR^d} \left( \sum_{l=1}^m a_l \sigma(\vw_l^{\T} \vx) -f(\vx) \right) a_k \sigma^{(1)} (\vw_k^{\T} \vx) \vx \rho(\vx) \diff{\vx}.
\end{align*}
By Taylor's expansion and Assumption~\ref{assump..TanhActivation}, we have
\begin{equation*}
    \frac{\D \vw_k}{\D t} = - \int_{\sR^d} \left( \sum_{l=1}^m a_l \left( \vw_l^{\T} \vx + \frac{1}{3!} \sigma^{(3)}(\xi_l(\vx)) (\vw_l^{\T} \vx)^3\right) - f(\vx)\right) a_k \left(1 + \frac{1}{2!} \sigma^{(3)} (\eta_k (\vx)) (\vw_k^{\T} \vx)^2 \right) \vx \rho(\vx) \diff{\vx}
\end{equation*}
Similarly, by properly reorganizing terms, the above expression can be written as:
\begin{align*}
    \frac{\D \vw}{\D t} &= - \int_{\sR^d} \left( \sum_{l=1}^m a_l (\vw_l^{\T} \vx) - f(\vx) \right) a_k \vx \rho(\vx) \diff{\vx} \\
    &\quad +\int_{\sR^d} f(\vx) a_k\left(\frac{1}{2!} \sigma^{(3)}\left(\eta_k (\vx)\right)\left(\vw_k^{\T} \vx\right)^2\right) \vx \rho(\vx) \diff{\vx}  \\
    & \quad -\int_{\sR^d}\left(\sum_{l=1}^m a_l \left(\vw_l^{\T} \vx\right)\right) a_k \left( \frac{1}{2!} \sigma^{(3)}\left(\eta_k (\vx)\right)\left( \vw_k^{\T} \vx \right)^2 \right) \vx \rho(\vx) \diff{\vx} \\
    & \quad -\int_{\sR^d}\left(\sum_{l=1}^m a_l \frac{1}{3!} \sigma^{(3)}\left(\xi_l (\vx)\right)\left(\vw_l^{\T} \vx\right)^3\right) a_k\left(1+\frac{1}{2!} \sigma^{(3)}\left(\eta_k (\vx)\right)\left(\vw_k^{\T} \vx \right)^2\right) \vx \rho(\vx) \diff{\vx}.
\end{align*}
Based on the definition of $\vg_k$, we have Eq.~\eqref{equ::grad_descent}. Writing this into the entry-wise form, we obtain Eq.~\eqref{equ::whole_dynamics}.

\section{Estimates on the Initialization}\label{secA1}
We begin this part by an estimate on standard Gaussian vectors.

\begin{lem}[Bounds on initial parameters; rephrased from Lemma 1 of \cite{chen2023phase}]\label{lem...init_est}
     Given any $\delta \in(0,1)$, we have with probability at least $1-\delta$ over the choice of $\vtheta$,
\begin{equation}
\label{equ::initial_estimate}
\max _{k \in[m]}\left\{|a_k(0)|,\norm{\vw_k(0)}_{\infty}\right\} \leq \frac{1}{m^{\alpha}}  \sqrt{2 \log \frac{2 m(d+1)}{\delta}}.    
\end{equation}

\end{lem}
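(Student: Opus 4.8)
The plan is to view the initialization $\vtheta(T_0)$ as a list of $m(d+1)$ independent mean-zero Gaussian scalars $\theta_1,\dots,\theta_{m(d+1)}$, each of variance $m^{-2\alpha}$ --- namely the $m$ output weights $a_k(0)$ together with the $md$ entries $w_k^i(0)$ --- and to bound $\max_j |\theta_j|$ by a Gaussian tail estimate combined with a union bound.

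First I would recall the elementary tail bound for a standard normal $Z\sim N(0,1)$: $\Pr(|Z|\ge t)\le 2\E^{-t^2/2}$ for all $t\ge 0$ (a one-line Chernoff computation, $\Pr(Z\ge t)\le \inf_{\lambda>0}\E^{-\lambda t}\E[\E^{\lambda Z}]=\E^{-t^2/2}$, then doubled for the two-sided event). Writing $\theta_j = m^{-\alpha}Z_j$ with $Z_j\sim N(0,1)$, this yields $\Pr(|\theta_j|\ge s)\le 2\E^{-m^{2\alpha}s^2/2}$ for every $s\ge 0$ and every $j$.

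Next I would choose the threshold $s = \frac{1}{m^{\alpha}}\sqrt{2\log\frac{2m(d+1)}{\delta}}$, which makes the right-hand side above exactly $\frac{\delta}{m(d+1)}$. A union bound over the $m(d+1)$ coordinates then shows that, with probability at least $1-m(d+1)\cdot\frac{\delta}{m(d+1)}=1-\delta$, one has $|\theta_j|\le s$ for all $j$ simultaneously; since $\max_{k\in[m]}\{|a_k(0)|,\norm{\vw_k(0)}_{\infty}\}$ is precisely $\max_j|\theta_j|$, this is exactly the bound~\eqref{equ::initial_estimate}. This is a textbook concentration argument (the same as Lemma~1 of \cite{chen2023phase}), so there is no genuine obstacle; the only point needing care is the bookkeeping that the logarithm carries the count $2m(d+1)$ --- the factor $2$ from the two-sided tail and the factor $m(d+1)$ from the union bound over all scalar coordinates of both layers.
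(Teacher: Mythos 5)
Your proposal is correct and is essentially identical to the paper's own argument: both use the standard Gaussian tail bound $\Pr(|Z|\ge t)\le 2\E^{-t^2/2}$ applied to the $m(d+1)$ rescaled coordinates, choose the threshold so each coordinate fails with probability $\delta/(m(d+1))$, and conclude by a union bound. No gaps.
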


\begin{proof}
    If $\rm{X} \sim N(0,1)$, then for any $\eta>0$, we have
$$
\Prob(|\rm{X}|>\eta) \leq 2 \exp \left(-\frac{1}{2} \eta^2\right).
$$
Since for $k \in[m]$,
$
a_k \sim N(0,\frac{1}{m^{2 \alpha}})$ and $\vw_k^0 \sim N\left(\mathbf{0}, \frac{1}{m^{2\alpha}}\mI_d\right),
$ with
$
\vw_k:=\left[w_k^1, w_k^2, \cdots, w_k^d \right]^{\T}
$, we have
then for any $k \in [m]$ and $j \in[d]$,
\begin{align*}
    m^{\alpha} a_k &\sim N(0,1), \\
    m^{\alpha} w_k^j &\sim N(0,1),
\end{align*}
and they are all independent with each other. Setting
$
\eta=\sqrt{2 \log \frac{2 m(d+1)}{\delta}},
$
we obtain
\begin{align*}
    & \Prob\left(\max _{k \in[m]}\left\{\left| m^{\alpha} a_k\right|,\left\| m^{\alpha} \vw_k\right\|_{\infty}\right\}>\eta\right) \\
    & = \Prob\left(\max _{k \in[m], j \in[d]}\left\{\left| m^{\alpha} a_k \right|,\left|m^{\alpha} w_k^j \right|\right\}>\eta\right) \\
    & = \Prob\left(\bigcup_{k=1}^m\left[\left(\left|m^{\alpha} a_k\right|>\eta\right) \bigcup\left(\bigcup_{j=1}^d\left(\left| m^{\alpha} w_k^j \right|>\eta\right)\right)\right]\right) \\
    & \leq \sum_{k=1}^m \Prob\left(\left| m^{\alpha} a_k\right|>\eta\right)+\sum_{k=1}^m \sum_{j=1}^d \Prob\left(\left|m^{\alpha} w_k^j \right|>\eta\right) \\
    & \leq 2 m \exp \left(-\frac{1}{2} \eta^2\right)+2 m d \exp \left(-\frac{1}{2} \eta^2\right) \\
    &= 2 m(d+1) \exp \left(-\frac{1}{2} \eta^2\right)=\delta.
\end{align*}
\end{proof} 

Next we introduce the sub-exponential norm (See, for example,~\cite{vershynin2010introduction}) of a random variable and Bernstein’s Inequality.
\begin{defi}[Sub-exponential norm]
The sub-exponential norm of a random variable $X$ is defined as
\begin{equation}
    \|\mathrm{X}\|_{\psi_1}:=\inf \left\{s>0 \mid \mathbb{E}_{\mathrm{X}}\left[\mathrm{e}^{[\mathrm{X} \mid / s}\right] \leq 2\right\}.
\end{equation}
    In particular, we denote the sub-exponential norm of a $\chi^2(d)$ random variable X by $C_{\psi, d}:=$ $\|\mathrm{X}\|_{\psi_1}$. Here the $\chi^2$ distribution with $d$ degrees of freedom has the probability density function
$$
f_{\mathrm{X}}(z)=\frac{1}{2^{d / 2} \Gamma(d / 2)} z^{d / 2-1} \mathrm{e}^{-z / 2}
$$

\end{defi}
\begin{rmk}
    Note that
\begin{align*}
    \mathbb{E}_{\mathrm{X} \sim \chi^2(d)} \mathrm{e}^{|\mathrm{X}| / 2} & =\int_0^{+\infty} \frac{1}{2^{d / 2} \Gamma(d / 2)} z^{d / 2-1} \mathrm{~d} z=+\infty, \\
    \lim _{s \rightarrow+\infty} \mathbb{E}_{\mathrm{X} \sim \chi^2(d)} \mathrm{e}^{|\mathrm{X}| / s} & =\lim _{s \rightarrow+\infty} \int_0^{+\infty} \frac{1}{2^{d / 2} \Gamma(d / 2)} z^{d / 2-1} \mathrm{e}^{-z / 2+z / s} \mathrm{~d} z=1<2.
\end{align*}

These imply that $2 \leq C_{\psi, d}<+\infty$.
\end{rmk}


\begin{thm}[Bernstein's inequality]
\label{thm::Bernstein}
     Let $\left\{\rm{X}_k\right\}_{k=1}^m$ be i.i.d. sub-exponential random variables satisfying
$$
\Exp X_1=\mu,
$$
then for any $\eta \geq 0$, we have
$$
\Prob\left(\left|\frac{1}{m} \sum_{k=1}^m \rm{X}_k-\mu\right| \geq \eta\right) \leq 2 \exp \left(-C_0 m \min \left(\frac{\eta^2}{\left\|\rm{X}_1\right\|_{\psi_1}^2}, \frac{\eta}{\left\|\rm{X}_1\right\|_{\psi_1}}\right)\right)
$$
for some absolute constant $C_0$.
\end{thm}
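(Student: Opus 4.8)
The plan is to prove this by the classical exponential-moment (Chernoff) bound for sums of independent sub-exponential random variables. \emph{Step 1: reduce to the centered, one-sided tail.} I would set $Y_k := \rm{X}_k - \mu$, so that $Y_1,\dots,Y_m$ are i.i.d.\ with $\Exp Y_1 = 0$; centering only inflates the sub-exponential norm by an absolute constant, since $\|\mu\|_{\psi_1} = |\mu|/\log 2$, $|\mu| \le \Exp|\rm{X}_1| \le \|\rm{X}_1\|_{\psi_1} =: K$, and $\|\cdot\|_{\psi_1}$ obeys the triangle inequality, whence $\|Y_1\|_{\psi_1} \le C_1 K$. It then suffices to bound $\Prob\bigl(\tfrac1m\sum_{k=1}^m Y_k \ge \eta\bigr)$: the lower tail follows by replacing $Y_k$ with $-Y_k$, and a union bound over the two tails produces the factor $2$ in the statement.

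\emph{Step 2: a quadratic MGF bound near the origin} — this is the technical heart. I would establish that a centered sub-exponential $Y$ satisfies $\Exp[\E^{\lambda Y}] \le \exp(C_2 \lambda^2 \|Y\|_{\psi_1}^2)$ whenever $|\lambda| \le c_2/\|Y\|_{\psi_1}$, with $C_2,c_2$ absolute. The derivation goes through polynomial moments: from $\Exp[\E^{|Y|/\|Y\|_{\psi_1}}]\le 2$ one compares the exponential series term by term to get $(\Exp|Y|^p)^{1/p}\le C_3 p\|Y\|_{\psi_1}$ for all $p\ge1$ (using $p!\ge(p/\E)^p$); then one expands $\Exp[\E^{\lambda Y}]=1+\lambda\Exp Y+\sum_{p\ge2}\lambda^p\Exp[Y^p]/p!$, drops the linear term because $\Exp Y=0$, inserts the moment bounds, sums the resulting geometric-type series for $|\lambda|\|Y\|_{\psi_1}$ below an absolute threshold, and finishes with $1+u\le\E^u$. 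Alternatively one can simply invoke the standard equivalence of sub-exponential characterizations from \cite{vershynin2010introduction}, already cited, and skip this derivation.

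\emph{Step 3: tensorize and optimize.} By independence and Step 2 applied with parameter $\lambda/m$ (legitimate for $0<\lambda\le c_2 m/(C_1K)$),
$$
\Prob\Bigl(\tfrac1m\sum_{k=1}^m Y_k \ge \eta\Bigr) \le \E^{-\lambda\eta}\prod_{k=1}^m\Exp\bigl[\E^{(\lambda/m)Y_k}\bigr] \le \exp\Bigl(-\lambda\eta+\frac{C_2C_1^2K^2\lambda^2}{m}\Bigr).
$$
The unconstrained minimizer of the exponent is $\lambda^\star=\eta m/(2C_2C_1^2K^2)$. If $\lambda^\star$ is admissible (equivalently $\eta\lesssim K$), plugging it in gives the sub-Gaussian bound $\exp(-m\eta^2/(4C_2C_1^2K^2))$; otherwise I would take $\lambda$ equal to the endpoint $c_2m/(C_1K)$, in which regime $\eta$ is large enough that the $-\lambda\eta$ term dominates and the bound becomes $\exp(-c\,m\eta/K)$. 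In either case the exponent has the form $-C_0 m\min(\eta^2/K^2,\eta/K)$ for an absolute $C_0$, and combining with the two-tail union bound from Step 1 gives exactly the claimed inequality.

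\emph{Main obstacle.} The only nontrivial step is Step 2: passing from the Luxemburg-type definition of $\|\cdot\|_{\psi_1}$ to the explicit quadratic MGF estimate on a quantified neighbourhood of the origin, while keeping careful track of the harmless constant from centering and of the exact range of $\lambda$ on which the estimate holds. Once that lemma is in place, everything else is the routine Chernoff optimization with its standard split into a sub-Gaussian and a sub-exponential regime.
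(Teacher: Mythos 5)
Your proposal is correct: it is the standard Chernoff--MGF proof of Bernstein's inequality for sub-exponential variables (centering via the triangle inequality for the Orlicz norm, the quadratic MGF bound near the origin derived from the moment growth $(\Exp|Y|^p)^{1/p}\lesssim p\|Y\|_{\psi_1}$, and the two-regime optimization of $\lambda$). The paper itself offers no proof of this theorem --- it is quoted as a classical result in the spirit of \cite{vershynin2010introduction} and used as a black box in Proposition~\ref{prop::ULB_of_ini_param} --- so there is nothing to compare against; the only point to watch in your Step 3 is that $c_2$ must be taken small enough that the term $C_2 c_2^2 m$ is dominated by $\tfrac{1}{2}\,c_2 m\eta/(C_1 K)$ in the large-$\eta$ regime, which your constants permit.
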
 

\begin{prop}[Upper and lower bounds of initial parameters; rephrased from Proposition 1 of \cite{chen2023phase}]
\label{prop::ULB_of_ini_param}
Given any $\delta \in$ $(0,1)$, if
$
m=\Omega\left(\log \frac{4}{\delta}\right),
$
then with probability at least $1-\delta$ over the choice of $\vtheta$ we have
\begin{align*}
    \sqrt{\frac{(d+1)}{2} \frac{1}{m^{2\alpha-1}}} &\leq\left\|\vtheta\right\|_2 \leq \sqrt{\frac{3(d+1)}{2} \frac{1}{m^{2\alpha-1}}}, \\
    \sqrt{\frac{1}{2} \frac{1}{m^{2\alpha-1}}} &\leq\left\|\va\right\|_2 \leq \sqrt{\frac{3}{2}\frac{1}{m^{2\alpha-1}}}, \\
    \sqrt{\frac{d}{2} \frac{1}{m^{2\alpha-1}}} &\leq\left\|\vw\right\|_{\rm{F}} \leq \sqrt{\frac{3d}{2} \frac{1}{m^{2\alpha-1}}}.
\end{align*}
\end{prop}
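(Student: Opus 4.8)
The plan is to express each of the three quantities, after rescaling every coordinate by $m^{\alpha}$, as $\tfrac{1}{m^{2\alpha-1}}$ times an empirical average of $m$ i.i.d.\ chi-squared random variables, and then invoke Bernstein's inequality (Theorem~\ref{thm::Bernstein}) with a deviation parameter proportional to the mean, so that the two-sided tail bound collapses to exactly the constants $\tfrac12$ and $\tfrac32$ appearing in the statement.

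Concretely, I would first handle $\norm{\va}_2$. Since $a_k(0)\sim N(0,\tfrac{1}{m^{2\alpha}})$ are i.i.d., the variables $\mathrm{Y}_k:=(m^{\alpha}a_k(0))^2$ are i.i.d.\ $\chi^2(1)$ with $\Exp\mathrm{Y}_1=1$ and finite sub-exponential norm $C_{\psi,1}$, and $\norm{\va}_2^2=\tfrac{1}{m^{2\alpha-1}}\cdot\tfrac1m\sum_{k=1}^m\mathrm{Y}_k$. Applying Theorem~\ref{thm::Bernstein} with $\eta=\tfrac12$ bounds $\Prob(|\tfrac1m\sum_{k=1}^m\mathrm{Y}_k-1|\ge\tfrac12)$ by $2\exp(-C_0 m\min(\tfrac{1}{4C_{\psi,1}^2},\tfrac{1}{2C_{\psi,1}}))$; on the complementary event $\tfrac12\le\tfrac1m\sum_{k=1}^m\mathrm{Y}_k\le\tfrac32$, and dividing by $m^{2\alpha}$ and taking square roots gives the claimed bounds on $\norm{\va}_2$. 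Next I would treat $\norm{\vw}_{\rm{F}}$ identically with $\mathrm{Z}_k:=\norm{m^{\alpha}\vw_k(0)}_2^2\sim\chi^2(d)$, $\Exp\mathrm{Z}_1=d$, sub-exponential norm $C_{\psi,d}$, and $\eta=\tfrac d2$; and $\norm{\vtheta}_2$ either with $\mathrm{W}_k:=(m^{\alpha}a_k(0))^2+\norm{m^{\alpha}\vw_k(0)}_2^2\sim\chi^2(d+1)$ and $\eta=\tfrac{d+1}{2}$, or simply by adding the two preceding estimates via $\norm{\vtheta}_2^2=\norm{\va}_2^2+\norm{\vw}_{\rm{F}}^2$ together with $\tfrac12+\tfrac d2=\tfrac{d+1}{2}$ and $\tfrac32+\tfrac{3d}{2}=\tfrac{3(d+1)}{2}$.

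Finally, I would close with a union bound. Each of the three concentration events fails with probability at most $2\exp(-C_0 m\,c_d)$ for a constant $c_d>0$ depending only on the fixed dimension $d$ (through $C_{\psi,1},C_{\psi,d},C_{\psi,d+1}$); demanding that this be at most $\delta/3$ reduces to $m\ge c'\log\tfrac1\delta$ for an absolute constant $c'$, which is exactly what the hypothesis $m=\Omega(\log\tfrac4\delta)$ provides once the implied constant is chosen appropriately. A union bound over the three events then yields all three pairs of inequalities simultaneously with probability at least $1-\delta$.

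I do not expect a genuine obstacle here: the whole statement is a concentration-of-measure computation, resting on the fact — recorded in the remark after the definition of the sub-exponential norm — that $\chi^2(d)$ has finite $\psi_1$-norm for each fixed $d$, which is precisely what is needed to apply Bernstein's inequality. The only points requiring care are calibrating $\eta$ to the mean so that the multiplicative constants come out as $\tfrac12$ and $\tfrac32$ rather than something messier, and keeping track of the three tail probabilities and of how the lower bound required on $m$ depends on $\delta$ so that the union bound closes with total failure probability $\delta$.
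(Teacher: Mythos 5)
Your proposal is correct and follows essentially the same route as the paper: both reduce each norm to an empirical average of i.i.d.\ $\chi^2$ variables and apply Bernstein's inequality, the only cosmetic difference being that the paper fixes the failure probability at $\delta/2$ and solves for the deviation $\eta$ (then requires $m$ large enough that $\eta\le\tfrac12$), whereas you fix $\eta$ at half the mean and bound the resulting failure probability. Your observation that the $\norm{\vtheta}_2$ bound follows deterministically by summing the other two is a clean way to keep the union bound to two events, matching the paper's $\delta/2$ allocation.
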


\begin{proof}
Since
$$
\left( m^{\alpha} a_1\right)^2,\left(m^{\alpha} a_2\right)^2, \cdots,\left(m^{\alpha} a_m\right)^2 \sim \chi^2(1)
$$
are i.i.d. sub-exponential random variables with
$$
\Exp\left(m^{\alpha} a_1\right)^2=1.
$$
By application of Theorem \ref{thm::Bernstein}, we have
$$
\Prob\left(\left|\frac{1}{m} \sum_{k=1}^m\left( m^{\alpha} a_k\right)^2-1\right| \geq \eta\right) \leq 2 \exp \left(-C_0 m \min \left(\frac{\eta^2}{C_{\psi, 1}^2}, \frac{\eta}{C_{\psi, 1}}\right)\right),
$$
since $C_{\psi, 1} \geq \frac{8}{3}>2$, then for any $0 \leq \eta \leq 2$, it is obvious that
$$
\min \left(\frac{\eta^2}{C_{\psi, 1}^2}, \frac{\eta}{C_{\psi, 1}}\right)=\frac{\eta^2}{C_{\psi, 1}^2}.
$$
Setting
$$
2 \exp \left(-C_0 m \frac{\eta^2}{C_{\psi, 1}^2}\right)=\frac{\delta}{2},
$$
we have
$$
\eta=\sqrt{\frac{C_{\psi, 1}^2}{ m C_0} \log \frac{4}{\delta}}.
$$
Thus with probability at least $1-\frac{\delta}{2}$ over the choice of $\vtheta$,
$$
\frac{1}{m^{2\alpha-1}} \left(1-\sqrt{\frac{C_{\psi, 1}^2}{ m C_0} \log \frac{4}{\delta}}\right) \leq\left\|\va\right\|_2^2 \leq \frac{1}{m^{2\alpha-1}}\left(1+\sqrt{\frac{C_{\psi, 1}^2}{ m C_0} \log \frac{4}{\delta}}\right)
$$
By choosing
$$
m \geq \frac{4 C_{\psi, 1}^2}{C_0} \log \frac{4}{\delta},
$$
we obtain
$$
\sqrt{\frac{1}{m^{2\alpha-1}}\frac{1}{2}} \leq\left\|\va\right\|_2 \leq \sqrt{\frac{1}{m^{2\alpha-1}}\frac{3}{2}}.
$$
The other inequalities are similar.

\end{proof}

\section{Details of Experiment Settings}\label{app...exp}
Since the focus of this study is on the dynamical behavior during the training process of neural networks, it should be noted that we employ the gradient descent method with learning rate of $0.001$ for simulating the gradient flow, this will be consistently applied throughout our paper. More detailed settings are as followings:
\begin{enumerate}[label=\textbullet]
    \itemsep 0.5em
    \item $m = [1000, 5000, 10000, 15000, 20000, 50000, 100000]$,
    \item $\alpha = [0.75, 1.0, 1.25, 1.5, 1.75, 2.0]$,
    \item input data: $\{(x_i,f^*(x_i))\}_{i=1}^{n=1000}$, $\{x_i\}_i^{n}$ are  equidistant points in $[-15, 15]$,
    \item the target functions are $f^*_1 = \tanh(x+7.5) + \tanh(x) + \tanh(x-7.5)$; $f^*_2 = \tanh(x)$; $f^*_3 = \sin(x)$.
\end{enumerate}
In the experiment shown in Fig~\ref{fig:diff_stage}, we choose $m = 5000$, $\alpha = 1.0$ and target function $f^*_1$ to exhibit the training dynamics; in Section~\ref{sec...exp_descent_time}, we use various choices of $m$ and $\alpha$ as stated above to learn $f^*_1$, the result is shown in Fig~\ref{fig:descent_time}; in Section~\ref{sec...exp_target}, we fix $m=5000$,$\alpha=1.0$ and investigate the weights behavior for different target functions $f^*_1$, $f^*_2$, $f^*_3$, the result is shown in Fig~\ref{fig:target_plot}.

\vskip 0.2in

\bibliographystyle{amsplain}
\bibliography{ref}

\providecommand{\bysame}{\leavevmode\hbox to3em{\hrulefill}\thinspace}
\providecommand{\MR}{\relax\ifhmode\unskip\space\fi MR }
\providecommand{\MRhref}[2]{%
  \href{http://www.ams.org/mathscinet-getitem?mr=#1}{#2}
}
\providecommand{\href}[2]{#2}
\begin{thebibliography}{10}

\bibitem{arora2019exact}
Sanjeev Arora, Simon~S Du, Wei Hu, Zhiyuan Li, Russ~R Salakhutdinov, and Ruosong Wang, \emph{On exact computation with an infinitely wide neural net}, Advances in Neural Information Processing Systems \textbf{32} (2019).

\bibitem{chen2023phase}
Zheng-An Chen, Yuqing Li, Tao Luo, Zhangchen Zhou, and Zhi-Qin~John Xu, \emph{Phase diagram of initial condensation for two-layer neural networks}, CSIAM Transactions on Applied Mathematics \textbf{5} (2024), no.~3, 448--514.

\bibitem{chen2024three-initial}
Zheng-An Chen and Tao Luo, \emph{On the dynamics of three-layer neural networks: initial condensation}, arXiv preprint arXiv:2402.15958 (2024).

\bibitem{chizat2018global}
Lenaic Chizat and Francis Bach, \emph{On the global convergence of gradient descent for over-parameterized models using optimal transport}, Advances in Neural Information Processing Systems \textbf{31} (2018).

\bibitem{dosovitskiy2020image}
Alexey Dosovitskiy, \emph{An image is worth 16x16 words: Transformers for image recognition at scale}, arXiv preprint arXiv:2010.11929 (2020).

\bibitem{fukumizu2000local}
Kenji Fukumizu and Shun-ichi Amari, \emph{Local minima and plateaus in hierarchical structures of multilayer perceptrons}, Neural networks \textbf{13} (2000), no.~3, 317--327.

\bibitem{huang2020dynamics}
Jiaoyang Huang and Horng-Tzer Yau, \emph{Dynamics of deep neural networks and neural tangent hierarchy}, International Conference on Machine Learning, PMLR, 2020, pp.~4542--4551.

\bibitem{jacot2018neural}
Arthur Jacot, Franck Gabriel, and Cl{\'e}ment Hongler, \emph{Neural tangent kernel: Convergence and generalization in neural networks}, Advances in Neural Information Processing Systems \textbf{31} (2018).

\bibitem{ji2020directional}
Ziwei Ji and Matus Telgarsky, \emph{Directional convergence and alignment in deep learning}, Advances in Neural Information Processing Systems \textbf{33} (2020), 17176--17186.

\bibitem{kumar2024directional}
Akshay Kumar and Jarvis Haupt, \emph{Directional convergence near small initializations and saddles in two-homogeneous neural networks}, Transactions on Machine Learning Research (2024).

\bibitem{kumar2024early}
Akshay Kumar and Jarvis Haupt, \emph{Early directional convergence in deep homogeneous neural networks for small initializations}, arXiv preprint arXiv:2403.08121 (2024).

\bibitem{luo2021phase}
Tao Luo, Zhi-Qin~John Xu, Zheng Ma, and Yaoyu Zhang, \emph{Phase diagram for two-layer relu neural networks at infinite-width limit}, The Journal of Machine Learning Research \textbf{22} (2021), no.~1, 3327--3373.

\bibitem{lyu2019gradient}
Kaifeng Lyu and Jian Li, \emph{Gradient descent maximizes the margin of homogeneous neural networks}, arXiv preprint arXiv:1906.05890 (2019).

\bibitem{mei2018mean}
Song Mei, Andrea Montanari, and Phan-Minh Nguyen, \emph{A mean field view of the landscape of two-layer neural networks}, Proceedings of the National Academy of Sciences \textbf{115} (2018), no.~33, E7665--E7671.

\bibitem{min2023early}
Hancheng Min, Enrique Mallada, and Rene Vidal, \emph{Early neuron alignment in two-layer relu networks with small initialization}, arXiv preprint arXiv:2307.12851 (2023).

\bibitem{park2000adaptive}
Hyeyoung Park, S-I Amari, and Kenji Fukumizu, \emph{Adaptive natural gradient learning algorithms for various stochastic models}, Neural Networks \textbf{13} (2000), no.~7, 755--764.

\bibitem{rajpurkar2018deep}
Pranav Rajpurkar, Jeremy Irvin, Robyn~L Ball, Kaylie Zhu, Brandon Yang, Hershel Mehta, Tony Duan, Daisy Ding, Aarti Bagul, Curtis~P Langlotz, et~al., \emph{Deep learning for chest radiograph diagnosis: A retrospective comparison of the chexnext algorithm to practicing radiologists}, PLoS medicine \textbf{15} (2018), no.~11, e1002686.

\bibitem{rotskoff2018parameters}
Grant Rotskoff and Eric Vanden-Eijnden, \emph{Parameters as interacting particles: long time convergence and asymptotic error scaling of neural networks}, Advances in Neural Information Processing Systems \textbf{31} (2018).

\bibitem{saad1995line}
David Saad and Sara~A Solla, \emph{On-line learning in soft committee machines}, Physical Review E \textbf{52} (1995), no.~4, 4225.

\bibitem{sezer2020financial}
Omer~Berat Sezer, Mehmet~Ugur Gudelek, and Ahmet~Murat Ozbayoglu, \emph{Financial time series forecasting with deep learning: A systematic literature review: 2005--2019}, Applied soft computing \textbf{90} (2020), 106181.

\bibitem{sirignano2020mean}
Justin Sirignano and Konstantinos Spiliopoulos, \emph{Mean field analysis of neural networks: A central limit theorem}, Stochastic Processes and their Applications \textbf{130} (2020), no.~3, 1820--1852.

\bibitem{Vaswani2017}
Ashish Vaswani, Noam Shazeer, Niki Parmar, Jakob Uszkoreit, Llion Jones, Aidan~N Gomez, \L~ukasz Kaiser, and Illia Polosukhin, \emph{Attention is all you need}, Advances in Neural Information Processing Systems, vol.~30, 2017.

\bibitem{vershynin2010introduction}
Roman Vershynin, \emph{Introduction to the non-asymptotic analysis of random matrices}, arXiv preprint arXiv:1011.3027 (2010).

\bibitem{ye2021global}
Tian Ye and Simon~S Du, \emph{Global convergence of gradient descent for asymmetric low-rank matrix factorization}, Advances in Neural Information Processing Systems \textbf{34} (2021), 1429--1439.

\bibitem{zhou2022towards}
Hanxu Zhou, Zhou Qixuan, Tao Luo, Yaoyu Zhang, and Zhi-Qin Xu, \emph{Towards understanding the condensation of neural networks at initial training}, Advances in Neural Information Processing Systems \textbf{35} (2022), 2184--2196.

\bibitem{zhou2023understanding}
Zhangchen Zhou, Hanxu Zhou, Yuqing Li, and Zhi-Qin~John Xu, \emph{Understanding the initial condensation of convolutional neural networks}, arXiv preprint arXiv:2305.09947 (2023).

\end{thebibliography}

\end{document}